\pdfoutput=1

\documentclass[10pt]{article}


\usepackage{Sty/mcr}
\usepackage{Sty/pkg}
\usepackage{Sty/thmE}

\usepackage{subcaption}

\usepackage{algorithm} 
\usepackage{algorithmic}



\graphicspath{ {./img/} }

\usepackage[margin=1in]{Sty/geometry}
\usepackage{natbib}

\DeclareMathOperator*{\argmin}{arg\,min}

\title{More Powerful Conditional Selective Inference for \\ Generalized Lasso by Parametric Programming}

\date{\today}

\author{
    Vo Nguyen Le Duy \\
    Nagoya Institute of Technology and RIKEN\\
    duy.mllab.nit@gmail.com
  \and
    Ichiro Takeuchi\\
    Nagoya Institute of Technology and RIKEN\\
    takeuchi.ichiro@nitech.ac.jp
}

\begin{document}

\maketitle

\begin{abstract}
Conditional selective inference (SI) has been studied intensively as a new statistical inference framework for data-driven hypotheses.
The basic concept of conditional SI is to make the inference conditional on the selection event, which enables an exact and valid statistical inference to be conducted even when the hypothesis is selected based on the data.
Conditional SI has mainly been studied in the context of model selection, such as vanilla lasso or generalized lasso.
The main limitation of existing approaches is the low statistical power owing to over-conditioning, which is required for computational tractability.
In this study, we propose a more powerful and general conditional SI method for a class of problems that can be converted into quadratic parametric programming, which includes generalized lasso.
The key concept is to compute the continuum path of the optimal solution in the direction of the selected test statistic and to identify the subset of the data space that corresponds to the model selection event by following the solution path. 
The proposed parametric programming-based method not only avoids the aforementioned major drawback of over-conditioning, but also improves the performance and practicality of SI in various respects. 
We conducted several experiments to demonstrate the effectiveness and efficiency of our proposed method.
\end{abstract}

\clearpage


\section{Introduction}

As machine learning (ML) is applied to solve numerous practical problems, the quantification of the reliability of data-driven knowledge obtained by ML algorithms is becoming increasingly important.
Among the various potential approaches for reliable ML, \emph{conditional selective inference (SI)} has been recognized as a new promising method for assessing the statistical reliability of data-driven hypotheses that are selected by ML algorithms.
The main concept of conditional SI is to make the inference for a data-driven hypothesis \emph{conditional on the selection event} that the hypothesis is selected, which enables an \emph{exact} and a \emph{valid} inference to be conducted on the selected hypothesis.
In the conditional SI framework, the statistical significance and reliability of data-driven selected hypotheses are quantified by the so-called \emph{selective $p$-values} and \emph{selective confidence intervals}, which have proper false positive rates and coverage guarantees, respectively. 

\citet{lee2016exact} first introduced conditional SI as a statistical inference tool for the features selected by lasso \citep{tibshirani1996regression}.
Subsequently, \citet{hyun2018exact} studied SI for inference on the selected model using generalized lasso \citep{tibshirani2011solution}.
Their basic concept was to characterize the selection event using a polytope (a set of linear inequalities) in the sample space. 
In general, we refer to such methods as \emph{polytope-based SI} approaches.
The practical computational methods developed by these authors can be used when the selection event can be characterized by a single polytope.

However, the application scope of such polytope-based SI is limited because it can only be used when the characterization of all relevant selection events is represented by a polytope. 
Therefore, in most existing polytope-based SI studies, additional conditioning is required for the selection event to be characterized as a polytope.
For example, in the case of lasso \citep{lee2016exact}, the set of selected features as well as their signs require conditioning.
Similarly, in the case of generalized lasso \citep{hyun2018exact}, additional conditioning on the signs as well as on the history (sequential order) whereby the selected elements enter the selected model is required.
Such additional conditioning results in low statistical power, which is widely recognized as a major drawback of polytope-based SI studies \citep{fithian2014optimal}.

\paragraph{Contributions.} The contributions of this study are as follows:
\begin{itemize}
	\item We go beyond the scope of polytope-based SI and propose a new SI approach based on \emph{parametric programming (PP)}. 
We name the proposed method \emph{PP-based SI}.
The basic concept of PP-based SI is to compute the continuum path of the optimal solutions in the direction of the selected test statistic using PP, which is subsequently used to identify the exact sampling distribution of the test statistic with the minimum amount of conditioning.
Therefore, PP-based SI can fundamentally resolve the over-conditioning problem, which is a major concern in polytope-based SI, thereby achieving high statistical power.

	\item We derive the proposed PP-based SI for a generic class of conditional SI problems that can be represented as parametric quadratic programs (QPs). 
	We demonstrate that the conditional SI formulations for many practical problems, including generalized lasso, elastic net, non-negative least squares, and Huber regression with the $\ell_1$ penalty, belong to this class, which means that the proposed PP-based SI can be used extensively. 

	\item Furthermore, we discuss how the advantages of PP can be exploited for the effective performance of conditional SI in various data analysis tasks. 
	As an example, using PP, we demonstrate that conditional SI can be conducted with minimal conditioning for regularization parameter selection by cross-validation (CV),
the selection event of which is too complicated to be characterized as a single polytope, but can be fully characterized using PP\footnote{
     \citet{loftus2015selective}
     considered conditional SI for CV-based regularization parameter selection, but it was highly over-conditioned with additional events.
     We demonstrate that our PP-based SI is more powerful than this approach.
     }.

	\item We conducted intensive experiments on both synthetic and real-world datasets, by means of which we presented evidence that our proposed method can successfully control the false positive rate, has higher statistical power than polytope-based SI, and provides superior results in practical applications. 
%

\end{itemize}

A preliminary short version of this work was presented at the AI \& Statistics (AISTATS2021) conference \citep{le2021parametric}. In the conference version, we only studied a specific case of vanilla lasso. 
In this study, we extended the basic concept of PP-based SI to a more general class of problems that can be formulated as parametric QPs, which includes vanilla lasso as a special case.
Moreover, we extended the proposed method to various aspects and conducted intensive additional experiments to demonstrate the applicability of the generalized PP-based SI to a wider class of problems and settings.

\paragraph{Related works.}
In traditional statistical inference, it is assumed that the hypothesis is fixed in advance.
That is, the hypothesis on which we wish to conduct inference is determined prior to observing the dataset.
Therefore, if traditional statistical inference methods are applied to data-driven hypotheses, the inferential results will no longer be valid.
This problem has been discussed extensively in the context of testing the significance of the features selected by a feature selection method, such as lasso or stepwise feature selection.
Several approaches have been proposed to address this problem \citep{benjamini2005false, leeb2005model, leeb2006can, benjamini2009selective, potscher2010confidence, berk2013valid, lockhart2014significance, taylor2014post}.

In recent years, \citet{lee2016exact} proposed a practical SI framework to perform exact (non-asymptotic) inference for a set of features selected by lasso.
In their work, the authors revealed that the selection event can be characterized as a polytope by conditioning on a set of selected features and their signs.
Furthermore, \citet{hyun2018exact} demonstrated that polytope-based SI is applicable to generalized lasso by performing additional conditioning on the signs as well as the history (sequential order) whereby the selected elements entered the selected model.
Following the seminal work of \citep{lee2016exact}, conditional inference-based SI has been actively studied and applied to various problems \citep{fithian2015selective, choi2017selecting, tian2018selective, chen2019valid, hyun2018post, loftus2014significance, loftus2015selective, panigrahi2016bayesian, tibshirani2016exact, yang2016selective, suzumura2017selective, tanizaki2020computing, duy2020computing, duy2020quantifying, sugiyama2020more, tsukurimichi2021conditional}.

It is desirable to conduct more powerful inference by conditioning on as little information as possible in conditional SI \citep{fithian2014optimal}.
However, in polytope-based SI, an excessive amount of over-conditioning is required to represent the selection event using a single polytope.
The authors of \citet{lee2016exact} already mentioned the problem of over-conditioning and discussed the solution for removing the additional conditioning by enumerating all possible combinations of signs and taking the union over the resulting polyhedra.
Unfortunately, such an enumeration for an exponentially increasing number of sign combinations is only feasible when the number of selected features is small.
\citet{loftus2014significance} extended polytope-based SI to cases in which the selection event is characterized by quadratic inequalities.
Although we do not discuss quadratic inequality-based SI further, as it suffers from a similar over-conditioning issue, our proposed PP-based SI can also be applied to resolve the issue for this class of conditional SIs.

Our work was motivated by 
\citet{liu2018more}, in which the authors proposed solutions to overcome the over-conditioning issue of polytope-based SI for vanilla lasso in certain special settings. 
In one of the settings, inference on the full model parameters in which conditional SI can be performed with minimal conditioning was studied, because a full model parameter is not dependent on other parameters.
Moreover, our work was motivated by a discussion in the paper where the authors noted that multiple lasso fitting at a sequence of grid points may aid in alleviating over-conditioning. 
However, the authors did not suggest any practical computational methods to realize this concept. 
Furthermore, the conditional sampling distribution that is evaluated at a finite number of grid points only provides an approximation of the exact distribution, which means that the theoretical validity of the conditional SI is no longer guaranteed. 
Our proposed PP-based SI can be interpreted as a means of solving lasso at \emph{infinitely} many grid points, which completely resolves these challenging problems. 
As another direction to resolve over-conditioning, \citet{tian2018selective} and \citet{terada2019selective} proposed methods using randomization. 
A drawback of these randomization-based approaches (including the simple data-splitting approach) is that further randomness is added in both the feature selection and inference stages.

PP has long been studied in the optimization field to solve a family of optimization problems that are parameterized by a scalar parameter~\citep{Ritter84, Allgower93, Gal95, Best96}. Moreover, PP has been used in the context of the \emph{regularization path} in ML~\citep{osborne2000new, Efron04a, HasRosTibZhu04, RosZhu07, BacHecHor06, RosZhu07, Tsuda07, Lee07, Takeuchi09a, takeuchi2011target, Karasuyama11, hocking11a, Karasuyama12a, ogawa2013infinitesimal, takeuchi2013parametric}. 
The regularization path is a method of tracking the manner in which the optimal solution changes when the regularization parameter changes, which is useful for efficient model selection. Our main idea was to employ PP to track how the optimal solution and selected features change when the training dataset changes in the direction of the selected test statistic, which enables the exact sampling distribution of the test statistic that is conditional on the selection event to be identified. The power of the conditional SI introduced in the pioneering work of \citet{lee2016exact} can be optimized using the proposed approach, which is applicable to conditional SI for a wide class of problems including generalized lasso.
\section{Problem Statement} \label{sec:problem_statement}

To formulate the problem, we consider a random response vector 
\begin{equation} \label{eq:random_vector}
{\bm Y} = (Y_1, ..., Y_n)^\top \sim \NN({\bm \mu}, \Sigma),
\end{equation}
where $n$ is the number of instances, ${\bm \mu}$ is an unknown vector, and $\Sigma \in \RR^{n \times n}$ is a covariance matrix that is known or estimable from independent data.
The goal is \emph{statistical} quantification of the significance of the data-driven hypotheses that are obtained by applying the generalized lasso estimator to the response vector. 

\paragraph{Generalized lasso and its selection event.} 
We consider a linear regression model with $p$ features $\bm x_1, \ldots, \bm x_p \in \RR^n$ and denote the feature matrix as $X \in \RR^{n \times p}$, in which the features are considered as non-random.
We do not make any assumption about the relationship between the $\bm \mu$ and $p$ features $\bm x_1, \ldots, \bm x_p \in \RR^n$, but consider the case in which a linear model with generalized lasso regularization is employed to model the relationship between $X$ and a random response vector $\bm Y$.
Given an observed response 
vector ${\bm y}^{\rm obs} \in \RR^n$ that is sampled from model (\ref{eq:random_vector}), the generalized lasso optimization problem is expressed as
\begin{equation} \label{eq:generalized_lasso}
	\hat{{\bm \beta}} = \argmin \limits_{{\bm \beta} \in \RR^p} \frac{1}{2} \|{\bm y}^{\rm obs} - X {\bm \beta}\|^2_2 + \lambda \|D {\bm \beta}\|_1,
\end{equation}
where 
$D \in \RR^{m \times p}$ is a penalty matrix and $\lambda \geq 0$ is a regularization parameter. 
The matrix $D$ and its number of rows are predetermined by the user to produce the desired structures in the solution $\hat{\bm \beta}$ in \eq{eq:generalized_lasso}.
Examples of matrix $D$ are presented in Examples \ref{example_1}, \ref{example_2}, and \ref{example_3}.

As the optimization in \eq{eq:generalized_lasso} produces the sparsity of $D \hat{\bm \beta}$, we define a set of non-zero components (the active set) as follows: 
\begin{align*}
	\cM_{\rm obs} = \cA({\bm y}^{\rm obs}) = \{ j : (D \hat{\bm \beta})_j \neq 0\}, \quad j \in [m],
\end{align*}
where $\cA: \bm Y \mapsto \cM$ indicates the algorithm that maps a response vector $\bm Y$ to a set of non-zero components $\cM$. 
Thereafter, we define the selection event in which the active set for a random response vector $\bm Y$ is the same as the observed response vector $\bm y^{\rm obs}$:
\begin{align} \label{eq:selection_event}
	\left \{ \cA(\bm Y) = \cA({\bm y}^{\rm obs}) \right \}.
\end{align}

\paragraph{Statistical inference.}
Let $\bm \eta^\top_j \bm Y$ be a linear contrast that indicates the test statistic that we wish to consider, where $\bm \eta_j$ is defined depending on the problem and the $j^{\rm th}$ selected component in the observed active set.

\begin{example}\label{example_1}
{
\normalfont
In the case of testing the features selected by vanilla lasso \citep{tibshirani1996regression}, $D = I_p \in \RR^{p \times p}$, which is the identity matrix.
The test statistic 
$\bm \eta^\top_j \bm Y = \hat{\beta}_j$ represents the coefficient of the $j^{\rm th}$ selected feature \citep{lee2016exact}, where $\bm \eta_j$ is defined as
\begin{align} \label{eq:eta_lasso}
\bm{\eta}_j = X_{\cM_{\rm obs}} \left( X^\top_{\cM_{\rm obs}} X_{\cM_{\rm obs}}\right)^{-1} \bm{e}_j,
\end{align}
in which $\bm{e}_j \in \RR^{|\cM_{\rm obs}|}$ is a basis vector with $1$ at the $j^{\rm th}$ position.
This form of test statistic can also be applied to test the features that are selected by other regression methods, such as elastic net \citep{zou2005regularization}, Huber regression \citep{huber1992robust}, or non-negative least squares.
}
\end{example}

\begin{example}\label{example_2}
{\normalfont
In the context of \emph{changepoint (CP)} detection using fused lasso, the matrix $D \in \RR^{(p - 1) \times p}$ is expressed as 
\begin{align*}
	D = 
	\begin{pmatrix}
		-1 & 1 & 0 & \cdots & 0 & 0 \\ 
		0 & -1 & 1 & \cdots & 0 & 0 \\ 
		& & & \cdots & & \\ 
		0 & 0 & 0 & \cdots & -1 & 1 
	\end{pmatrix}.
\end{align*}
The test statistic $\bm \eta^\top_j \bm Y$ represents the difference in the sample mean between the left and right segments of the CP at the $j^{\rm th}$ position, which was also used in \citet{hyun2018exact}.
In this case, $\bm \eta_j$ is defined as
\begin{align}\label{eq:eta_cp}
\bm \eta_j = \frac{1}{j - j_{\rm prev}} \one^n_{j_{\rm prev}+1:j} - \frac{1}{j_{\rm next} - j} \one^n_{j+1:j_{\rm next}},
\end{align}
where $j_{\rm prev} \in \cM_{\rm obs}$ and $j_{\rm next} \in \cM_{\rm obs}$ are the CP positions before and after the selected CP at position $j$, respectively, and $\one^n_{s:e} \in \RR^n$ is a vector in which the elements from positions $s$ to $e$ are set to 1, and 0 otherwise.
}
\end{example}

\begin{example}\label{example_3}
{
\normalfont
In trend filtering, the aim is to test whether a change occurs in the trend at position $j \in \cM_{\rm obs}$. 
We define $\bm \eta_j$ as follows:
\begin{align*}
	\bm \eta_j = \bm e_{j - 1} - 2 \bm e_j + \bm e_{j + 1}, 
\end{align*}
where $\bm{e}_j \in \RR^n$.
The test statistic $\bm \eta_j^\top \bm Y$ indicates that we wish to test whether the points at positions $j$, $j-1$, and $j + 1$ lie on the same line statistically.
In this case, the matrix $D \in \RR^{(p - 2) \times p}$ is expressed as 
\begin{align*}
	D = 
	\begin{pmatrix}
		-1 & 2 & -1 & \cdots & 0 & 0 & 0\\ 
		0 & -1 & 2 & \cdots & 0 & 0 & 0\\ 
		& & & \cdots & & & \\ 
		0 & 0 & 0 & \cdots & -1 & 2 & -1 
	\end{pmatrix}.
\end{align*}
}
\end{example}

For the inference, we consider the following \emph{null hypothesis} and \emph{alternative hypothesis}:
\begin{align}\label{eq:hypotheses}
 {\rm H}_{0, j}: \bm \eta_j^\top \bm \mu  = 0 \quad \text{vs.} \quad {\rm H}_{1, j}: \bm \eta_j^\top \bm \mu \neq 0.
\end{align}
%

\paragraph{Conditional SI.}
Suppose that the hypotheses in \eq{eq:hypotheses} are fixed; that is, non-random. 
Thus, the \emph{naive} (two-sided) $p$-value 
in the classical $z$-test 
is obtained by 
\begin{align}
\label{eq:naive_p}
 \hspace{-2mm} P^{\rm naive}_j
 =
 \PP_{{\rm H}_{0, j}}
 \left(
 |\bm \eta_j^\top \bm Y| \ge |\bm \eta_j^\top \bm y^{\rm obs}|
 \right).
\end{align}
However,
as the hypotheses in \eq{eq:hypotheses} are not fixed in advance,
the naive $p$-value is not \emph{valid} 
in the sense that,
if we reject
${\rm H}_{0, j}$
with a
significance level
$\alpha$
(e.g., $\alpha=0.05$),
the false positive rate (type-I error) cannot be controlled at the level $\alpha$.
This is because the hypotheses in \eq{eq:hypotheses} 
are \emph{selected} by the data
and \emph{selection bias} exists.

It is necessary to remove the information that has been used for the initial hypothesis generation process to correct the selection bias. 
This can be achieved by considering the sampling distribution of the test statistic $\bm{\eta}^\top_j \bm Y$ that is conditional on the selection event; that is,
\begin{equation}\label{eq:condition_model}
	\bm{\eta}^\top_j \bm Y \mid \left \{ \cA(\bm Y) = \cA({\bm y}^{\rm obs}), {\bm q}({\bm Y}) = {\bm q} ({\bm y}^{\rm obs})\right \},
\end{equation}
where ${\bm q}({\bm Y}) = (I_n - {\bm c} {\bm \eta}^\top_j) {\bm Y}$ with $\bm c = \Sigma {\bm \eta}_j ({\bm \eta}^\top_j \Sigma {\bm \eta}_j)^{-1}$ is the nuisance component that is independent of the test statistic $\bm \eta_j^\top \bm Y$.
The second condition ${\bm q}({\bm Y}) = {\bm q} ({\bm y}^{\rm obs})$ 
indicates that the nuisance component for a random vector $\bm Y$ is the same as that for $\bm y^{\rm obs}$
\footnote{
${\bm q}(\bm Y)$ corresponds to the component $\bm z$ in the seminal paper (see \cite{lee2016exact}, Section 5, Eq. 5.2 and Theorem 5.2).}.

Once the selection event has been identified, the pivotal quantity can be computed:
\begin{align}\label{eq:pivotal_quantity}
	F^{\cZ}_{\bm{\eta}^\top_j \bm \mu, {\bm \eta}^\top_j \Sigma {\bm \eta}_j} (\bm{\eta}^\top_j \bm Y). 
	\mid \left \{ \cA(\bm Y) = \cA({\bm y}^{\rm obs}), {\bm q}({\bm Y}) = {\bm q} ({\bm y}^{\rm obs}) \right \},
\end{align}
which is the c.d.f. of the truncated normal distribution with a mean $\bm{\eta}^\top_j \bm \mu$, variance ${\bm \eta}^\top_j \Sigma {\bm \eta}_j$, and truncation region $\cZ$, which is calculated based on the selection event.
%
%
Based on the pivotal quantity, the \emph{selective type-I error} or \emph{selective $p$-value} \citep{fithian2014optimal} can be considered in the following form: 
 \begin{align} \label{eq:selective_p_value}
  P^{\rm selective}_j = 2\ \min\{\pi_j, 1 - \pi_j\},
\end{align}
where $\pi_j = 1 - F^{\cZ}_{0, {\bm \eta}^\top_j \Sigma {\bm \eta}_j} (\bm{\eta}^\top_j {\bm Y})$,
which is \emph{valid} in the sense that 
\begin{align*}
	{\rm Prob}_{{\rm H}_{0, j}} \left(P^{\rm selective}_j < \alpha \right) = \alpha, \forall \alpha \in [0, 1].
\end{align*}
Furthermore,
to obtain a confidence level of $1 - \alpha$ for any $\alpha \in [0, 1]$, by inverting the pivotal quantity in Equation (\ref{eq:pivotal_quantity}), we can determine the smallest and largest values of $\bm{\eta}^\top_j \bm \mu$ such that the value of the pivotal quantity remains in the interval $\left[ \frac{\alpha}{2}, 1- \frac{\alpha}{2} \right]$ \citep{lee2016exact}.

\paragraph{Challenge in conditional data space characterization.}
The main difficulty in the above conditional SI is that the characterization of the minimal conditional data space 
\[\left \{ \cA(\bm Y) = \cA({\bm y}^{\rm obs}), {\bm q}({\bm Y}) = {\bm q} ({\bm y}^{\rm obs})\right \}\] in Equation (\ref{eq:condition_model}) is intractable. 
To overcome this issue, \citet{hyun2018exact} considered the inference to be conditional not only on the active set, but also on the signs and history (order) whereby the elements of $D \hat{\bm \beta} $ entered the active set.
Unfortunately, such additional conditioning on the signs and history leads to low statistical power owing to over-conditioning \footnote{This over-conditioning corresponds to the additional conditioning on the signs of the selected features in the seminal conditional SI study of vanilla lasso \citep{lee2016exact}.}.

In the following section, we introduce a method for identifying the minimum amount of conditioning 
$\left \{ \cA(\bm Y) = \cA({\bm y}^{\rm obs}), {\bm q}({\bm Y}) = {\bm q} ({\bm y}^{\rm obs})\right \}$ that results in higher statistical power.
The main concept is to compute the path of the generalized lasso solutions in the direction of interest $\bm \eta_j$.
By focusing on the line along $\bm \eta_j$, the majority of irrelevant regions that do not affect the truncated normal sampling distribution can be skipped because they do not intersect with this line.
Thus, we can skip the majority of combinations of signs and history that never appear when applying generalized lasso to the data on the line.
\section{Proposed Method} \label{sec:proposed_method}

\begin{figure*}[!t]
\centering
\includegraphics[width=.85\linewidth]{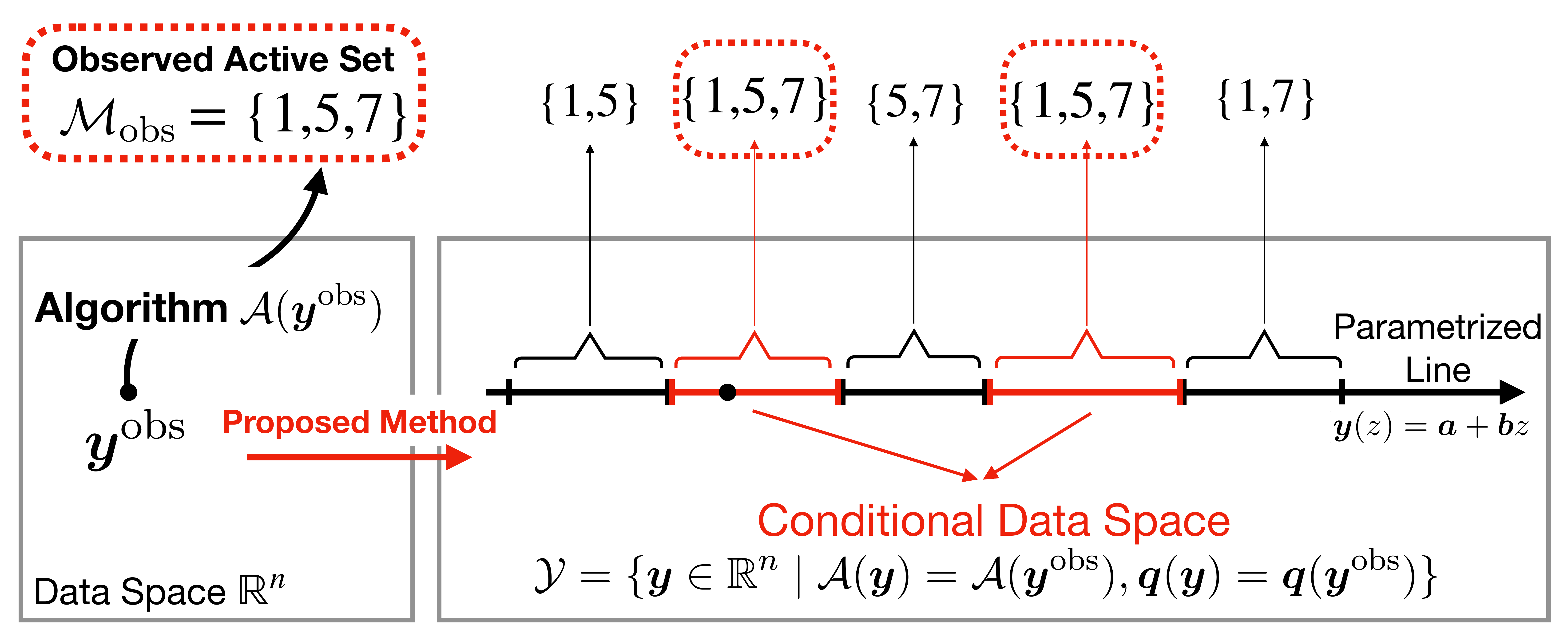}  
\caption{
Schematic of proposed method.
We obtain the observed active set $\cM_{\rm obs}$ by applying generalized lasso to the observed data $\bm y^{\rm obs}$.
The statistical inference for each selected element in the observed active set is conducted conditional on the subspace $\cY$, the data of which have the same active set as $\bm y^{\rm obs}$.
We introduce a PP method for characterizing the conditional data space $\cY$ by searching the parameterized line.
}
\label{fig:fig_illustration}
\end{figure*}

We present the technical details of the proposed method in this section.
A schematic of the method is provided in Figure \ref{fig:fig_illustration}. 
We first introduce a QP reformulation for the generalized lasso problem in \S \ref{subsec:qp_genlasso}.
Thereafter, the characterization of the conditional data space is presented in \S \ref{subsec:conditional_data_space_characterization}.
Subsequently, we propose a PP approach for identifying the conditional data space in \S \ref{subsec:piecewise}.
Finally, the detailed algorithm is presented in \S \ref{subsec:algorithm}.


\subsection{QP for Generalized Lasso} \label{subsec:qp_genlasso}

We demonstrate that the generalized lasso problem can be reformulated as a QP problem.

\begin{lemm} \label{lemm:qp_for_fused_lasso}
We denote $\bm \xi = D \bm \beta$, and the generalized lasso in \eq{eq:generalized_lasso} can be rewritten as
\begin{align}\label{eq:fused_lasso_trans}
	\left(\hat{\bm \beta}, \hat{\bm \xi} \right) = \argmin \limits_{\bm \beta \in \RR^p, \bm \xi \in \RR^m} \frac{1}{2} ||\bm y - X \bm \beta||_2^2 + \lambda ||\bm \xi||_1 \quad \text{subject to} \quad \bm \xi = D \bm \beta.
\end{align}
By decomposing $\bm \xi = \bm \xi^{+} - \bm \xi^{-}$, $\bm \xi^{+}, \bm \xi^{-}  \geq \bm 0$, the generalized lasso problem can be formulated as the following QP problem:
\begin{align}\label{eq:qp_fused_lasso}
\begin{aligned}
	\left(\hat{\bm \beta}, \hat{\bm \xi}^{+}, \hat{\bm \xi}^{-}\right) = \argmin \limits_{\bm \beta, \bm \xi^{+}, \bm \xi^{-}} & ~ \frac{1}{2}   
	\begin{pmatrix}
		\bm \beta \\  \bm \xi^{+} \\  \bm \xi^{-}
	\end{pmatrix}^\top
	\begin{pmatrix}
		X^\top X &  0 & 0 \\ 
		0 &  0 & 0 \\ 
		0 &  0 & 0
	\end{pmatrix}
	\begin{pmatrix}
		\bm \beta \\  \bm \xi^{+} \\  \bm \xi^{-}
	\end{pmatrix}
	+ 
	\left ( \lambda
	\begin{pmatrix}
		\bm 0_p  \\  {\bm 1}_m \\  {\bm 1}_m
	\end{pmatrix}
		-
	\begin{pmatrix}
		X^\top \bm y \\  \bm 0_m \\ \bm 0_m
	\end{pmatrix}
	\right )
	^\top
	\begin{pmatrix}
		\bm \beta \\  \bm \xi^{+} \\  \bm \xi^{-}
	\end{pmatrix} \\ 
	\textrm{s.t} ~  
	&
	\begin{pmatrix}
		0 & I_m & - I_m
	\end{pmatrix}
	\begin{pmatrix}
		\bm \beta \\  \bm \xi^{+} \\  \bm \xi^{-}
	\end{pmatrix} 
	=
	\begin{pmatrix}
		D & 0 & 0
	\end{pmatrix}
	\begin{pmatrix}
		\bm \beta \\  \bm \xi^{+} \\  \bm \xi^{-}
	\end{pmatrix},
	~\bm \xi^{+} \geq 0, ~\bm \xi^{-} \geq 0,
\end{aligned}
\end{align}
where $\bm 1_m \in \RR^m$ and $\bm 0_m \in \RR^m$ are vectors in which all elements are set to 1 and 0, respectively, and $I_m \in \RR^{m \times m}$ is an identity matrix.
\end{lemm}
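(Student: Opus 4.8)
The plan is to establish the two reformulations one after the other, in each case by exhibiting a correspondence between feasible points under which the objectives agree up to an additive constant, so that the set of optimal $\bm\beta$ is preserved.

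First I would dispatch \eq{eq:fused_lasso_trans}. The map $\bm\beta \mapsto (\bm\beta, D\bm\beta)$ is a bijection from $\RR^p$ onto the feasible set $\{(\bm\beta,\bm\xi) \in \RR^p \times \RR^m : \bm\xi = D\bm\beta\}$, and along this map the objective $\tfrac12\|\bm y - X\bm\beta\|_2^2 + \lambda\|\bm\xi\|_1$ is identically $\tfrac12\|\bm y - X\bm\beta\|_2^2 + \lambda\|D\bm\beta\|_1$. Hence the two problems have the same optimal value and the same set of optimal $\bm\beta$, with $\hat{\bm\xi} = D\hat{\bm\beta}$ at any optimum.

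Next I would treat the split $\bm\xi = \bm\xi^{+} - \bm\xi^{-}$ with $\bm\xi^{+}, \bm\xi^{-} \ge \bm 0$. The only substantive point is to verify that at an optimal solution the penalty term collapses to the $\ell_1$ norm, i.e.\ $\bm 1_m^\top\bm\xi^{+} + \bm 1_m^\top\bm\xi^{-} = \|\bm\xi^{+} - \bm\xi^{-}\|_1$. This follows from a complementarity argument: if some coordinate had $\xi_i^{+} > 0$ and $\xi_i^{-} > 0$, then replacing $(\xi_i^{+},\xi_i^{-})$ by $(\xi_i^{+}-\delta, \xi_i^{-}-\delta)$ with $\delta = \min(\xi_i^{+},\xi_i^{-}) > 0$ leaves $\bm\xi^{+}-\bm\xi^{-}$ unchanged (so the equality constraint and nonnegativity are preserved) while strictly decreasing the objective by $2\lambda\delta$, contradicting optimality when $\lambda > 0$; the case $\lambda = 0$ is trivial since the penalty then vanishes. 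Conversely, from any optimal $(\hat{\bm\beta},\hat{\bm\xi})$ of \eq{eq:fused_lasso_trans} the choice $\hat{\xi}_i^{+} = \max(\hat\xi_i, 0)$, $\hat{\xi}_i^{-} = \max(-\hat\xi_i, 0)$ is feasible for the split problem with the same objective value. So the split problem and \eq{eq:fused_lasso_trans} again share optimal value and the optimal $\bm\beta$-component.

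Finally I would expand the block matrices to confirm that \eq{eq:qp_fused_lasso} is exactly this split problem in matrix form. With $\bm u = (\bm\beta^\top, (\bm\xi^{+})^\top, (\bm\xi^{-})^\top)^\top$, the quadratic term equals $\tfrac12\bm\beta^\top X^\top X\bm\beta$ and the linear term equals $\lambda(\bm 1_m^\top\bm\xi^{+} + \bm 1_m^\top\bm\xi^{-}) - \bm y^\top X\bm\beta$; since $\tfrac12\|\bm y - X\bm\beta\|_2^2 = \tfrac12\bm\beta^\top X^\top X\bm\beta - \bm y^\top X\bm\beta + \tfrac12\|\bm y\|_2^2$, the two objectives differ only by the constant $\tfrac12\|\bm y\|_2^2$, which does not affect the $\argmin$. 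The equality constraint $(0\ \ I_m\ \ -I_m)\,\bm u = (D\ \ 0\ \ 0)\,\bm u$ reads $\bm\xi^{+} - \bm\xi^{-} = D\bm\beta$, and the remaining constraints are precisely $\bm\xi^{+} \ge \bm 0$ and $\bm\xi^{-} \ge \bm 0$, completing the identification. I do not expect a genuine obstacle; the one place needing an actual argument rather than bookkeeping is the complementarity claim above, which is what makes the split $\ell_1$ penalty exact.
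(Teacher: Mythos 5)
Your proposal is correct and follows essentially the same route as the paper: drop the constant term, impose the constraint $\bm\xi = D\bm\beta$, and split $\bm\xi = \bm\xi^{+} - \bm\xi^{-}$ so that the penalty becomes $\lambda\sum_j(\xi_j^{+}+\xi_j^{-})$. The only difference is that you actually prove, via the $\delta$-perturbation argument, the complementarity fact ($\hat\xi_j^{+}>0 \Rightarrow \hat\xi_j^{-}=0$) that the paper merely asserts, which is a welcome addition rather than a deviation.
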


\begin{proof}
The optimization problem in \eq{eq:fused_lasso_trans} can be rewritten as follows:
\begin{align}
	\left(\hat{\bm \beta}, \hat{\bm \xi} \right)= \argmin \limits_{\bm \beta, \bm \xi} \frac{1}{2}  \bm \beta^\top X^\top X \bm \beta
	  - (X^\top \bm y)^\top \bm \beta  + \lambda ||\bm \xi||_1 \quad \textrm{s.t} \quad \bm \xi = D \bm \beta \label{eq:eq_b}.
\end{align}
We obtain \eq{eq:qp_fused_lasso} by decomposing $\bm \xi = \bm \xi^{+} - \bm \xi^{-}$ with $\bm \xi^{+}, \bm \xi^{-} \geq \bm 0$.
In this case, the key point is that the component $||\bm \xi||_1$ in \eq{eq:eq_b} can be written as $||\bm \xi||_1 = \sum_{j \in [m]} (\xi^{+}_j + \xi^{-}_j)$ because at least one of $\xi^{+}_j$ and $\xi^{-}_j$, $j \in [m]$, must be set to zero in the optimal solution.
That is, $\hat{\xi}^{+}_j > 0 \Rightarrow \hat{\xi}^{-}_j = 0$, and vice versa. 
The concept of decomposing $\bm \xi = \bm \xi^{+} - \bm \xi^{-}$ is often employed to reformulate the $\ell_1$-norm of $\bm \xi$.
\end{proof}
%


\subsection{Conditional Data Space Characterization} \label{subsec:conditional_data_space_characterization}
We define the set of $\bm y \in \RR^n$ that satisfies the conditions in Equation (\ref{eq:condition_model}): 
\begin{equation} \label{eq:conditional_data_space}
	\hspace{-0.05mm} \cY = \{ {\bm y} \in \RR^{n} \mid \cA({\bm y}) = \cA({\bm y}^{\rm obs}), {\bm q} (\bm y) = {\bm q} ({\bm y}^{\rm obs})\}.
\end{equation}
According to the second condition, the data in $\cY$ are restricted to a line (see Section 6 in \cite{liu2018more} and \cite{fithian2014optimal}).
Therefore, the set $\cY$ can be rewritten using the scalar parameter $z \in \RR$, as follows:
\begin{equation} \label{eq:parametrized_data_space}
	\cY = \left \{ {\bm y}(z) = {\bm a} + {\bm b} z \mid z \in \cZ \right \},
\end{equation}
where 
${\bm a} = {\bm q}(\bm y^{\rm obs})$, 
${\bm b} = \Sigma {\bm \eta}_j ({\bm \eta}^\top_j \Sigma {\bm \eta}_j)^{-1} $,
and 
\begin{equation} \label{eq:truncation_region_z}
	\cZ = \left \{ z \in \RR \mid \cA({\bm y}(z)) = \cA({\bm y}^{\rm obs}) \right \}.
\end{equation}
Next, we consider a random variable $Z \in \RR$ and its observation $z^{\rm obs} \in \RR$, which satisfies ${\bm Y} = {\bm a} + {\bm b} Z$ and ${\bm y}^{\rm obs} =  {\bm a} + {\bm b} z^{\rm obs}$. 
The conditional inference in (\ref{eq:condition_model}) is rewritten as the problem of characterizing the sampling distribution of 
\begin{equation} \label{eq:condition_parametric}
	Z \mid \left \{ Z \in \cZ \right \}.
\end{equation}
As $Z \sim \NN(0, {\bm \eta}^\top_j \Sigma {\bm \eta}_j)$ under the null hypothesis, $Z \mid Z \in \cZ$ follows a truncated normal distribution.
Once the truncation region $\cZ$ has been identified, the pivotal quantity in Equation (\ref{eq:pivotal_quantity}) is equal to $F^{\cZ}_{0, {\bm \eta}^\top_j \Sigma {\bm \eta}_j} (Z)$, and it can be obtained easily.
Thus, the remaining task is the characterization of $\cZ$.
\paragraph{Characterization of truncation region $\cZ$.}
We introduce the optimization problem \eq{eq:qp_fused_lasso} with the parameterized response vectors ${\bm y}(z) = \bm a + \bm b z$ ($\bm a$, $\bm b$ that are defined in \eq{eq:parametrized_data_space}) for $z \in \RR$ as follows:
\begin{align} \label{eq:qp}
\begin{aligned}
\hat{\bm r}(z) = \argmin_{\bm r} \quad & \frac{1}{2} \bm r^\top P  \bm r + (\bm q^0 + \bm q^1 z)^\top \bm r\\
\textrm{s.t.} \quad & G \bm r \leq \bm h^0 + \bm h^1 z,
\end{aligned}
\end{align}
where 
$\bm r = \left (\bm \beta, \bm \xi^{+}, \bm \xi^{-} \right )^\top \in \RR^{p + 2m}$, 
$\bm q^0 = \left ( -X^\top \bm a, \lambda \bm 1_m,  \lambda \bm 1_m \right )^\top \in \RR^{p + 2m}$, 
$\bm q^1 = \left ( -X^\top \bm b, \bm 0, \bm 0 \right ) \in \RR^{p + 2m} $, $P \in \RR^{(p + 2m) \times (p + 2m)}$ and $G \in \RR^{4m \times (p + 2m) }$,
$\bm h^0 = \bm h^1 = \bm 0_{4m}$,
\begin{align*}
	P = 
	\begin{pmatrix}
		X^\top X &  0 & 0 \\ 
		0 &  0 & 0 \\ 
		0 &  0 & 0
	\end{pmatrix} ,
	\quad 
	G = 
	\begin{pmatrix}
		-D & D  & 0 & 0 \\ 
		 I_m & -I_m  & -I_m & 0 \\ 
		 -I_m & I_m & 0 & -I_m \\ 
	\end{pmatrix} ^\top.
\end{align*}

Let $\hat{\bm u}(z)$ be the vector of optimal Lagrange multipliers and ${\rm row} (G)$ be the number of rows in matrix $G$. The KKT conditions of \eq{eq:qp} are written as 
\begin{equation}\label{eq:kkt_1}
\begin{aligned}
	P \hat{\bm r}(z) + \bm q^0 + \bm q^1 z + G^\top \hat{\bm u}(z) &= 0, \\ 
	G \hat{\bm r}(z) - \bm h^0 - \bm h^1 z & \leq 0, \\ 
	\hat{u}_i(z) (G \hat{\bm r}(z) - \bm h^0 - \bm h^1 z)_i &= 0, \quad \forall i \in [{\rm row}(G)],  \\  
	\hat{u}_i(z) & \geq 0 , \quad \forall i \in [{\rm row}(G)].
\end{aligned}
\end{equation}

To construct the truncation region $\cZ$ in Equation (\ref{eq:truncation_region_z}), we must 1) compute the entire path of $\hat{\bm \xi}(z) = \hat{\bm \xi}^{+}(z) - \hat{\bm \xi}^{-}(z)$ in \eq{eq:qp}, 
and 2) identify the set of intervals of $z$ on which $\cA({\bm y}(z)) = \cA({\bm y}^{\rm obs})$.
However, it is difficult to compute $\hat{\bm \xi}^{+}(z)$ and $\hat{\bm \xi}^{-}(z)$ for infinitely many values of $z \in \RR$.
We demonstrate that the paths of $\hat{\bm \xi}^{+}(z)$ and $\hat{\bm \xi}^{-}(z)$ can be computed within \emph{finite} operations by introducing parametric quadratic programming.


\subsection{Piecewise Linear Homotopy} \label{subsec:piecewise}
In this section, we demonstrate that $\hat{\bm r}(z)$ in \eq{eq:qp} is a piecewise linear function of $z$, which also indicates that $\hat{\bm \xi}^{+}(z)$ and $\hat{\bm \xi}^{-}(z)$ are piecewise linear functions of $z$.

\begin{lemm} \label{lemm:piecewise_linear}
We denote $\cI_z = \{ i \in [{\rm row} (G)] : \hat{u}_i(z) > 0\} $, $\cI_z^c = [{\rm row} (G)] \setminus \cI_z$, and $G_{\cI_z}$ as the rows of matrix $G$ in a set $\cI_z$.
Consider two real values $z$ and $z^\prime$ ($z < z^\prime$). 
If $\cI_z = \cI_{z^\prime}$, we obtain 
\begin{align}
	\hat{\bm r}(z^\prime) -  \hat{\bm r}(z) &= \bm \psi (z) \times (z^\prime - z), \label{eq:psi}\\ 
	\hat{\bm u}_{\cI_z}(z^\prime) - \hat{\bm u}_{\cI_z} (z) &= \bm \gamma (z) \times (z^\prime - z) \label{eq:gamma},
\end{align}
where 
$\bm \psi (z) \in \RR^{{\rm row}(P)}, \bm \gamma (z) \in \RR^{|\cI_z|}$, 
$
\begin{bmatrix}
	\bm \psi (z)\\
	\bm \gamma (z)
\end{bmatrix}
=  
\begin{bmatrix}
	P & G_{\cI_z}^\top \\
	G_{\cI_z} & 0 \\ 
\end{bmatrix}^{-1}
\begin{bmatrix}
	- \bm q^1 \\
	\bm h^1_{\cI_z}
\end{bmatrix}
$.
\end{lemm}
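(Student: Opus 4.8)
The plan is to fix the active set of Lagrange multipliers and reduce the KKT system \eq{eq:kkt_1} to a square linear system with an affine-in-$z$ right-hand side. Write $\cI = \cI_z = \cI_{z'}$. By the definition of $\cI$ and complementary slackness in \eq{eq:kkt_1}, at $z$ we have $\hat u_i(z) = 0$ for every $i \in \cI^c$ and $(G\hat{\bm r}(z) - \bm h^0 - \bm h^1 z)_i = 0$ for every $i \in \cI$, and the same statements hold at $z'$. Substituting these into the stationarity equation, the KKT conditions at $z$ collapse to
\[
\begin{bmatrix} P & G_{\cI}^\top \\ G_{\cI} & 0 \end{bmatrix} \begin{bmatrix} \hat{\bm r}(z) \\ \hat{\bm u}_{\cI}(z) \end{bmatrix} = \begin{bmatrix} -\bm q^0 - \bm q^1 z \\ \bm h^0_{\cI} + \bm h^1_{\cI} z \end{bmatrix},
\]
with an identical identity obtained by replacing $z$ with $z'$.

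First I would subtract the two systems. The $z$-independent terms $-\bm q^0$ and $\bm h^0_{\cI}$ cancel, leaving
\[
\begin{bmatrix} P & G_{\cI}^\top \\ G_{\cI} & 0 \end{bmatrix} \begin{bmatrix} \hat{\bm r}(z') - \hat{\bm r}(z) \\ \hat{\bm u}_{\cI}(z') - \hat{\bm u}_{\cI}(z) \end{bmatrix} = (z' - z) \begin{bmatrix} -\bm q^1 \\ \bm h^1_{\cI} \end{bmatrix}.
\]
Multiplying on the left by the inverse of the saddle-point matrix then reproduces \eq{eq:psi} and \eq{eq:gamma} with $\bm\psi(z)$, $\bm\gamma(z)$ exactly as in the statement; in fact the right-hand side does not depend on $z$, so the slope is constant on each maximal interval where $\cI$ is unchanged, which is the piecewise-linear structure claimed. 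Finally, because \eq{eq:qp} is a convex QP, any feasible point satisfying \eq{eq:kkt_1} is globally optimal, so the affine interpolant obtained this way is genuinely the optimal solution throughout the interval.

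The step that requires care -- and the main obstacle -- is justifying that the $(p+2m+|\cI|)$-dimensional saddle-point matrix with blocks $P$, $G_{\cI}^\top$, $G_{\cI}$, and $0$ is invertible, since here $P = \mathrm{diag}(X^\top X, 0, 0)$ is only positive semidefinite. I would invoke the standard nonsingularity conditions for a KKT matrix: (i) $G_{\cI}$ has full row rank (linear independence of the active constraint normals), and (ii) $P$ is positive definite on $\ker G_{\cI}$. Here the equality constraint $\bm\xi^{+} - \bm\xi^{-} = D\bm\beta$ is always among the active rows and, together with the active sign constraints, it pins down the $\bm\xi^{\pm}$ block, so on the complementary directions the quadratic form acts through $X^\top X$ on $\bm\beta$; a mild general-position/full-rank assumption on the relevant columns of $X$ gives (ii), and strict complementarity together with uniqueness of the QP solution (both generic) can be used as an alternative route. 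If one only tracks the quantities that are always unique -- the fitted values $X\hat{\bm\beta}(z)$ and $\hat{\bm\xi}(z) = D\hat{\bm\beta}(z)$ -- the same computation goes through after quotienting out $\ker P \cap \ker G_{\cI}$, so \eq{eq:psi} and \eq{eq:gamma} hold for those components with no rank assumption on $X$, which is all that is needed to build the truncation region $\cZ$.
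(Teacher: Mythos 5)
Your proposal is correct and takes essentially the same approach as the paper: fix the active set $\cI_z$, collapse the KKT conditions to a saddle-point linear system affine in $z$, subtract the systems at $z$ and $z'$, and invert. Your extra discussion of when the KKT matrix is nonsingular corresponds to the paper's remark following the lemma, where uniqueness of the generalized lasso solution is assumed (citing the uniqueness analysis of Ali and Tibshirani) and degenerate cases are deferred to standard parametric QP techniques.
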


\begin{proof}
From the KKT conditions in \eq{eq:kkt_1}, we obtain 
\begin{equation}\label{eq:kkt_2}
\begin{aligned}
	P \hat{\bm r} (z) + \bm q^0 + \bm q^1 z + G^\top \hat{\bm u} (z) &= 0, \\ 
	(G \hat{\bm r}(z) - \bm h^0 - \bm h^1 z)_i  &= 0, \quad \forall i \in \cI_z, \\ 
	(G \hat{\bm r}(z) - \bm h^0 - \bm h^1 z)_i  &\leq 0, \quad \forall i \in \cI_z^c.
\end{aligned}
\end{equation}
According to \eq{eq:kkt_2}, we obtain the following linear system: 
\begin{align}
	&\begin{bmatrix}
		P & G_{\cI_z}^\top \\
		G_{\cI_z} & 0
	\end{bmatrix}
	\begin{bmatrix}
		\hat{\bm r}(z) \\
		\hat{\bm u}_{\cI_z}(z) 
	\end{bmatrix}
	= 
	\begin{bmatrix}
		- \bm q^0 \\
		\bm h^0_{\cI_z} 
	\end{bmatrix}
	+ 
	\begin{bmatrix}
		- \bm q^1 \\
		\bm h^1_{\cI_z}
	\end{bmatrix} z
	\nonumber \\ 
	\Leftrightarrow ~
	&\begin{bmatrix}
		\hat{\bm r}(z) \\
		\hat{\bm u}_{\cI_z}(z)
	\end{bmatrix}
	= 
	\begin{bmatrix}
		P & G_{\cI_z}^\top \\
		G_{\cI_z} & 0
	\end{bmatrix}^{-1}
	\begin{bmatrix}
		- \bm q^0 \\
		\bm h^0_{\cI_z}
	\end{bmatrix}
	+ 
	\begin{bmatrix}
		P & G_{\cI_z}^\top \\
		G_{\cI_z} & 0
	\end{bmatrix}^{-1}
	\begin{bmatrix}
		- \bm q^1 \\
		\bm h^1_{\cI_z}
	\end{bmatrix} z. \label{eq:piecewise_z}
\end{align}
Similarly, for $z^\prime$, we obtain 
\begin{align} \label{eq:piecewise_z_prime}
	\begin{bmatrix}
		\hat{\bm r}(z^\prime) \\
		\hat{\bm u}_{\cI_{z^\prime}}(z^\prime)
	\end{bmatrix}
	= 
	\begin{bmatrix}
		P & G_{\cI_{z^\prime}}^\top \\
		G_{\cI_{z^\prime}} & 0
	\end{bmatrix}^{-1}
	\begin{bmatrix}
		- \bm q^0 \\
		\bm h^0_{\cI_{z^\prime}}
	\end{bmatrix}
	+ 
	\begin{bmatrix}
		P & G_{\cI_{z^\prime}}^\top \\
		G_{\cI_{z^\prime}} & 0
	\end{bmatrix}^{-1}
	\begin{bmatrix}
		- \bm q^1 \\
		\bm h^1_{\cI_{z^\prime}}
	\end{bmatrix} z^\prime.
\end{align}
By subtracting \eq{eq:piecewise_z} from \eq{eq:piecewise_z_prime} and $\cI_z = \cI_{z^\prime}$, we can express the following:
\begin{align} \label{eq:piecewise_proof}
	\begin{bmatrix}
		\hat{\bm r}(z^\prime) \\
		\hat{\bm u}_{\cI_z}(z^\prime)
	\end{bmatrix}
	-
	\begin{bmatrix}
		\hat{\bm r}(z) \\
		\hat{\bm u}_{\cI_z}(z)
	\end{bmatrix}
	= 	
	\begin{bmatrix}
		P & G_{\cI_z}^\top \\
		G_{\cI_z} & 0
	\end{bmatrix}^{-1}
	\begin{bmatrix}
		- \bm q^1 \\
		\bm h^1_{\cI_z}
	\end{bmatrix} 
	\times (z^\prime - z).
\end{align}
We denote 
$
\begin{bmatrix}
	\bm \psi (z) \\ 
	\bm \gamma (z)
\end{bmatrix}
= 
\begin{bmatrix}
	P & G_{\cI_z}^\top \\
	G_{\cI_z} & 0 
\end{bmatrix}^{-1}
\begin{bmatrix}
	- \bm q^1 \\
	\bm h^1_{\cI_z}
\end{bmatrix}
$
with $\bm \psi (z) \in \RR^{{\rm row}(P)}$ and $\bm \gamma (z) \in \RR^{|\cI_z|}$,
and we subsequently achieve the results in Lemma \ref{lemm:piecewise_linear}.
\end{proof}

For simplicity, we assume that the generalized lasso solution is unique for any $\bm y(z), z \in \RR$. The uniqueness in the generalized lasso problem has been studied in \citet{ali2019generalized}. In this case, it can be guaranteed that the matrix inverse in Equation \eq{eq:piecewise_proof} always exists. 
If this is not the case, we can use parametric quadratic programming for the degenerate cases in \citet{Best96}.

\paragraph{Computation of breakpoint.} According to Lemma \ref{lemm:piecewise_linear}, the solution $\hat{\bm r}(z)$ is a linear function of $z$ until $z$ reaches a breakpoint, at which one component of $\hat{\bm u}(z)$ enters or leaves the set $\cI_z$.
At this point, we discuss the identification of the breakpoint.
\begin{lemm} \label{eq:compute_breakpoint}
Consider a real value $z$. 
Subsequently, $\cI_z = \cI_{z^\prime}$ for any real value $z^\prime$ in the interval $[z, z + t_z)$, where $z + t_z$ is the value of the breakpoint:
\begin{align}
	t_z = \min \{t_z^1, t_z^2\} &, \\ 
	t_z^1 = \min \limits_{j \in \cI_z^c} \left ( - \frac{ ( G_{\cI^c_z} \hat{\bm r}(z) - \bm h^0_{\cI_z^c} - \bm h^1_{\cI_z^c}z )_j}{(G_{\cI^c_z} \bm \psi (z) - \bm h^1_{\cI_z^c})_j} \right )_{++} 
	\quad &\text{ and } \quad 
	t_z^2 = \min \limits_{j \in \cI_z} \left ( - \frac{\hat{u}_j(z)}{\gamma_j(z)} \right )_{++}.
\end{align}
In this case, for any $a \in \RR$, $(a)_{++} = a$ if $a > 0$, and $(a)_{++} = \infty$ otherwise.
\end{lemm}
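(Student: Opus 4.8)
The plan is to construct, from the optimal primal–dual pair $(\hat{\bm r}(z), \hat{\bm u}(z))$ at the base point $z$, an explicit candidate that is affine in $z'$, verify that it solves the KKT system \eq{eq:kkt_1} exactly while $z'-z$ stays small, and then extract from the two inequality-type KKT conditions the largest admissible step. Concretely, for $t \ge 0$ put $z' = z+t$ and define $\tilde{\bm r}(z') = \hat{\bm r}(z) + \bm \psi(z)\,t$, $\tilde{\bm u}_{\cI_z}(z') = \hat{\bm u}_{\cI_z}(z) + \bm \gamma(z)\,t$, and $\tilde{\bm u}_{\cI_z^c}(z') = \bm 0$, with $\bm \psi(z), \bm \gamma(z)$ as in Lemma~\ref{lemm:piecewise_linear} (well defined because the matrix inverted there exists under the uniqueness/non-degeneracy assumption recorded above). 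By the same linear system \eq{eq:piecewise_z}--\eq{eq:piecewise_proof} used to prove Lemma~\ref{lemm:piecewise_linear}, the pair $(\tilde{\bm r}, \tilde{\bm u})$ satisfies, for \emph{every} $t$, the stationarity equation $P\tilde{\bm r}(z') + \bm q^0 + \bm q^1 z' + G^\top \tilde{\bm u}(z') = \bm 0$, the active-block equalities $(G\tilde{\bm r}(z') - \bm h^0 - \bm h^1 z')_i = 0$ for $i \in \cI_z$, and the complementary-slackness identities (these hold since the active-block residual vanishes on $\cI_z$ and $\tilde u_i(z') = 0$ on $\cI_z^c$ by construction). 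Hence the only KKT conditions in \eq{eq:kkt_1} that can be violated as $t$ grows are (i) primal feasibility on the inactive block, $(G\tilde{\bm r}(z') - \bm h^0 - \bm h^1 z')_j \le 0$ for $j \in \cI_z^c$, and (ii) dual feasibility, $\tilde u_j(z') \ge 0$ for $j \in \cI_z$.

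The next step is to turn (i) and (ii) into affine inequalities in $t$ and locate the first violation. By linearity of $\tilde{\bm r}$, for $j\in\cI_z^c$ the inactive residual equals $(G_{\cI_z^c}\hat{\bm r}(z) - \bm h^0_{\cI_z^c} - \bm h^1_{\cI_z^c} z)_j + (G_{\cI_z^c}\bm\psi(z) - \bm h^1_{\cI_z^c})_j\, t$, whose value at $t=0$ is strictly negative in the non-degenerate regime; it therefore stays negative for all $t$ whenever its slope is nonpositive, and when the slope is positive it reaches $0$ exactly at $t = -(G_{\cI_z^c}\hat{\bm r}(z) - \bm h^0_{\cI_z^c} - \bm h^1_{\cI_z^c} z)_j / (G_{\cI_z^c}\bm\psi(z) - \bm h^1_{\cI_z^c})_j > 0$. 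Taking the minimum over $j$ while sending the nonpositive-slope indices to $+\infty$ is precisely the definition of $t_z^1$ through the $(\cdot)_{++}$ operator. Symmetrically, $\tilde u_j(z') = \hat u_j(z) + \gamma_j(z)\, t$ with $\hat u_j(z) > 0$ for $j\in\cI_z$, so it remains nonnegative for all $t$ when $\gamma_j(z)\ge 0$ and first hits $0$ at $t = -\hat u_j(z)/\gamma_j(z) > 0$ when $\gamma_j(z)<0$, which is exactly $t_z^2$. Thus, setting $t_z = \min\{t_z^1, t_z^2\}$, the candidate $(\tilde{\bm r}(z'), \tilde{\bm u}(z'))$ satisfies all four groups of conditions in \eq{eq:kkt_1} for every $t \in [0, t_z)$.

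Finally I would invoke sufficiency of the KKT conditions for the convex QP \eq{eq:qp} — together with uniqueness of its primal–dual solution under the assumption stated after Lemma~\ref{lemm:piecewise_linear} — to conclude that $(\hat{\bm r}(z'),\hat{\bm u}(z')) = (\tilde{\bm r}(z'),\tilde{\bm u}(z'))$ for every $z'\in[z, z+t_z)$. Since $\tilde u_j(z')=0$ for $j\in\cI_z^c$ by construction and $\tilde u_j(z')>0$ for $j\in\cI_z$ whenever $t<t_z^2$ (hence whenever $t<t_z$), the index set of strictly positive multipliers is unchanged, i.e., $\cI_{z'} = \{ i : \hat u_i(z')>0\} = \cI_z$, which is the assertion of the lemma.

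I expect the main obstacle to be the careful sign bookkeeping around the $(\cdot)_{++}$ truncation: one must check, case by case on the sign of each residual slope $(G_{\cI_z^c}\bm\psi(z) - \bm h^1_{\cI_z^c})_j$ and each multiplier slope $\gamma_j(z)$, that the displayed ratio returns exactly the step at which the associated KKT inequality becomes tight, and that the indices whose inequality can never become binding for $t>0$ are mapped to $+\infty$ rather than to a spurious finite value; this is also where the non-degeneracy/strict-complementarity assumption is genuinely used, so that the inactive residuals start strictly negative. A secondary, lighter point is to state explicitly why the constructed KKT pair must coincide with the true optimizer — namely, that every KKT point of a convex QP is a global minimizer and that this minimizer (and its multipliers) is unique here.
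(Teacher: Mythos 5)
Your proposal is correct and follows essentially the same route as the paper: the paper likewise substitutes the affine updates from Lemma~\ref{lemm:piecewise_linear} into the inactive-block primal feasibility condition and the dual positivity condition, solves the resulting affine inequalities in $z'-z$, and takes the minimum over indices via the $(\cdot)_{++}$ operator to obtain $t_z^1$ and $t_z^2$. Your additional step — explicitly verifying that the affinely extrapolated pair satisfies the full KKT system and invoking uniqueness to identify it with $(\hat{\bm r}(z'),\hat{\bm u}(z'))$ — is a welcome tightening of the paper's argument, which implicitly relies on Lemma~\ref{lemm:piecewise_linear} whose hypothesis ($\cI_z=\cI_{z'}$) is exactly what is being established, but it does not change the substance of the proof.
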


\begin{proof}
We first illustrate how to derive $t_z^1$. According to \eq{eq:psi}, we obtain 
\begin{align*}
	\hat{\bm r}(z^\prime) =   \hat{\bm r}(z) + \bm \psi (z) \times (z^\prime - z). 
\end{align*}
Thereafter, we need to guarantee 
\begin{align}
	G_{\cI^c_z} \hat{\bm r}(z^\prime) - \bm h^0_{\cI_z^c} - \bm h^1_{\cI_z^c} z^\prime &\leq 0 \nonumber\\ 
	\Leftrightarrow ~ G_{\cI^c_z} (\hat{\bm r}(z) + \bm \psi (z) \times (z^\prime - z)) - \bm h^0_{\cI_z^c} - \bm h^1_{\cI_z^c} \times (z^\prime - z) - \bm h^1_{\cI_z^c}z  &\leq 0 \nonumber\\ 
	\Leftrightarrow ~  \left ( G_{\cI^c_z} \bm \psi (z) - \bm h^1_{\cI_z^c} \right ) \times (z^\prime - z) & \leq - ( G_{\cI^c_z} \hat{\bm r}(z) - \bm h^0_{\cI_z^c} - \bm h^1_{\cI_z^c} z)  \label{eq:first_condition}.
\end{align}
The right-hand side of \eq{eq:first_condition} is positive because $G_{\cI^c_z} \hat{\bm r}(z) - \bm h^0_{\cI_z^c} - \bm h^1_{\cI_z^c} z \leq 0$.
Therefore, to satisfy Equation \eq{eq:first_condition}, 
\begin{align*}
	z^\prime - z \leq \min \limits_{j \in \cI_z^c} \left ( - \frac{ ( G_{\cI^c_z} \hat{\bm r}(z) - \bm h^0_{\cI_z^c} - \bm h^1_{\cI_z^c} z)_j}{(G_{\cI^c_z} \bm \psi (z) - \bm h^1_{\cI_z^c})_j} \right )_{++} = t_z^1.
\end{align*}
Next, we explain how to derive $t_z^2$.
According to \eq{eq:gamma}, we obtain
\begin{align*}
	\hat{\bm u}_{\cI_z}(z^\prime) = \hat{\bm u}_{\cI_z} (z) + \bm \gamma (z) \times (z^\prime - z).
\end{align*} 
We need to guarantee 
\begin{align} \label{eq:second_condition}
	\hat{\bm u}_{\cI_z}(z^\prime) > 0 \Leftrightarrow \hat{\bm u}_{\cI_z} (z) + \bm \gamma (z) \times (z^\prime - z) > 0.
\end{align}
Therefore, to satisfy Equation \eq{eq:second_condition}, 
\begin{align*}
	z^\prime - z < \min \limits_{j \in \cI_z} \left ( - \frac{\hat{u}_j(z)}{\gamma_j(z)} \right )_{++} = t_z^2.
\end{align*}
Finally, using $t_z = \min \{t_z^1, t_z^2\}$, we obtain the interval in which $\cI_{z^\prime} = \cI_z$ for any $z^\prime \in [z, z + t_z)$.
\end{proof}


\subsection{Algorithm} \label{subsec:algorithm}

\begin{algorithm}[!t]
\renewcommand{\algorithmicrequire}{\textbf{Input:}}
\renewcommand{\algorithmicensure}{\textbf{Output:}}
\begin{footnotesize}
 \begin{algorithmic}[1]
  \REQUIRE $X, {\bm y}^{\rm obs}, \lambda, D, [z_{\rm min}, z_{\rm max}]$
	\vspace{2pt}
	\STATE Obtain observed active set $\cM_{\rm obs} = \cA(\bm y^{\rm obs})$ for data $(X, {\bm y}^{\rm obs})$
	\vspace{2pt}
	\FOR {each selected $j \in \cM_{\rm obs}$}
		\vspace{2pt}
		\STATE Compute $\bm \eta_j$, and subsequently calculate $\bm a$ and $\bm b$ based on $\bm \eta_j$ and $\bm y^{\rm obs}$ $\leftarrow$ Equation (\ref{eq:parametrized_data_space})
		\vspace{4pt}
		\STATE $\cA(\bm y(z)) \leftarrow {\tt compute\_solution\_path}$ ($X$, $\lambda$,  $D$, $\bm a$, $\bm b$, $[z_{\rm min}, z_{\rm max}]$)
		\vspace{4pt}
		\STATE Truncation region $\cZ \leftarrow \{z: \cA(\bm y(z)) = \cM_{\rm obs}\}$
		\vspace{4pt}
		\STATE $P^{\rm selective}_j \leftarrow $ Equation (\ref{eq:selective_p_value}) (and$/$or selective confidence interval)
		\vspace{2pt}
	\ENDFOR
	\vspace{2pt}
  \ENSURE $\{P^{\rm selective}_j\}_{j \in \cM_{\rm obs}}$ (and$/$or selective confidence intervals)
 \end{algorithmic}
\end{footnotesize}
\caption{{\tt parametric\_SI}}
\label{alg:parametric_SI}
\end{algorithm}

\begin{algorithm}[t]
\renewcommand{\algorithmicrequire}{\textbf{Input:}}
\renewcommand{\algorithmicensure}{\textbf{Output:}}
\begin{footnotesize}
 \begin{algorithmic}[1]
  \REQUIRE $X, \lambda, D, \bm a, \bm b, [z_{\rm min}, z_{\rm max}]$
	\vspace{2pt}
	\STATE Initialization: $k = 1$, $z_k=z_{\rm min}$, $\cT = {z_k}$
	\vspace{2pt}
	\WHILE {$z_k < z_{\rm max}$}
		\vspace{4pt}
		\STATE $t_{z_k}, \cM_{z_k} \leftarrow {\tt compute\_step\_size}(X, z_k, \bm a, \bm b, \lambda, D)$
		\vspace{4pt}
		\STATE $z_{k+1} = z_k + t_{z_k}$, $\cT = \cT \cup \{z_{k+1}\}$ ($z_{k+1}$ is the value of the next breakpoint)
		\vspace{2pt}
		\STATE $k = k + 1$
		\vspace{2pt}
	\ENDWHILE
	\vspace{2pt}
  \ENSURE $\{\cM_{z_k}\}_{k \in [|\cT| - 1 ]}$
 \end{algorithmic}
\end{footnotesize}
\caption{{\tt compute\_solution\_path}}
\label{alg:solution_path}
\end{algorithm}

\begin{algorithm}[t]
\renewcommand{\algorithmicrequire}{\textbf{Input:}}
\renewcommand{\algorithmicensure}{\textbf{Output:}}
\begin{footnotesize}
 \begin{algorithmic}[1]
  \REQUIRE $X, z, \bm a, \bm b, \lambda, D$
  	\vspace{2pt}
	\STATE $\bm y(z) = \bm a + \bm b z$
	\STATE Compute $\hat{\bm r}(z)$ for data $(X, \bm y(z))$ and calculate $\hat{\bm \xi} (z) = \hat{\bm \xi}^{+} (z) - \hat{\bm \xi}^{-} (z)$ based on $\hat{\bm r}(z)$ 
	\vspace{2pt}
	\STATE Obtain $\cM_z = \cA(\bm y(z)) = \{j : \hat{\xi}_j(z) \neq 0\}$
	\vspace{4pt}
	\STATE Compute $t_z$ $\leftarrow$ Lemma \ref{eq:compute_breakpoint}
	\vspace{2pt}
  \ENSURE $t_z, \cM_{z}$
 \end{algorithmic}
\end{footnotesize}
\caption{{\tt compute\_step\_size}}
\label{alg:compute_step_size}
\end{algorithm}

In this section, we present the detailed algorithm of the proposed PP-based SI method. In Algorithm \ref{alg:parametric_SI}, to obtain the active set, we simply apply generalized lasso to the data $(X, \bm y^{\rm obs})$, and we obtain $\cM_{\rm obs}$.
Thereafter, we conduct SI for each observed active set.
For every $j \in \cM_{\rm obs},$ we first obtain the direction of interest $\bm \eta_j$. 
The main task is to compute the solution path of $\hat{\bm r}(z) $ in Equation (\ref{eq:qp}) for the parameterized response vector $\bm y (z)$,
where the parameterized solution $\hat{\bm r}(z)$ varies for different $j \in \cM_{\rm obs}$ because the direction of interest $\bm \eta_j$ is dependent on $j$.
This task can be achieved by Algorithm \ref{alg:solution_path}.
Finally, after obtaining the path, we can easily determine the truncation region $\cZ$, which is used to compute the selective $p$-value or selective confidence interval.

In Algorithm \ref{alg:solution_path}, a sequence of breakpoints is computed individually.
The algorithm is initialized at $z_k = z_{\rm min}, k = 1$.
At each $z_k$, the task is to determine the next breakpoint $z_{k+1}$.
This task can be performed by computing the step size in Algorithm \ref{alg:compute_step_size}.
This step is repeated until $z_k > z_{\rm max}$. 
%
%
By identifying all of the breakpoints $\{ z_t \}_{t \in [|\cT|]}$, the entire path of $\cM_z$ for $z \in \RR$ is expressed as
\begin{align*}
 \cM_z = \cA(\bm y(z))
 =
 \mycase{
 \cA(\bm y(z_1)) & \text{ if } z \in [z_1, z_2], \\
 \cA(\bm y(z_2)) & \text{ if } z \in [z_2, z_3], \\
 ~~~~~ \vdots & \\
 \cA(\bm y(z_{|\cT| - 1}))& \text{ if } z \in [z_{|\cT| - 1}, z_{|\cT|}].
 }
\end{align*}

\paragraph{Selection of $[z_{\rm min}, z_{\rm max}]$.} 
Under normality, very positive and negative values of $z$ do not affect the inference. Therefore, it is reasonable to consider a range of values, such as $[ - 20 \sigma, 20 \sigma]$ \citep{liu2018more} or $\left [ - |\bm \eta_j^\top \bm y^{\rm obs}| - 20 \sigma, |\bm \eta_j^\top \bm y^{\rm obs}| + 20 \sigma \right ]$ \citep{sugiyama2020more}, where $\sigma$ is the standard deviation of the sampling distribution of the test statistic.

In Line 2 of Algorithm \ref{alg:compute_step_size}, $\hat{\bm r} (z)$ can be computed based on the KKT conditions.
However, it is well known that numerical issues will arise. 
Therefore, in our experiments, we modify the algorithm slightly to overcome these numerical problems.
In particular, we first replace Line 1 in Algorithm \ref{alg:compute_step_size} with $\bm y (z) = \bm a + \bm b (z + \Delta z)$, where $\Delta z$ is a small value such that $z_k  + \Delta z < z_{k + 1}$ for all $k \in [|\cT| - 1]$.
Subsequently, at Line 2 of Algorithm \ref{alg:compute_step_size}, we simply obtain $\hat{\bm r} (z)$ by applying the QP solver to $\bm y(z)$.
We can confirm whether $\Delta z$ is sufficiently small by verifying whether exactly one component in the vector of the optimal Lagrange multipliers $\hat{\bm u}(z)$ has already entered or left the set $\cI_z$.
This condition is satisfied in all of the experiments by simply setting $\Delta z = 0.0001$.

The complexity of Algorithm \ref{alg:parametric_SI} is dependent on the number of breakpoints. 
In the literature on PP, the worst-case complexity increases exponentially with the problem size~\citep{Ritter84, Allgower93, Gal95, Best96, mairal2012complexity}. 
However, in practice, it has been reported that the number of breakpoints is approximately linear with the problem size and it does not actually increase as much as in the theoretical worst case. 
This has also been noted in regularization path studies~\citep{osborne2000new, Efron04a, HasRosTibZhu04, mairal2012complexity}. 
In fact, in all of the experiments in \S6, the number of breakpoints is within a reasonable size and the computational cost is not a major problem of the proposed method.
\section{Generality of Proposed Method}

Although we only focused on generalized lasso in \S3, the parametric QP formulation in \eq{eq:qp} is more general and the forms of matrices $P$ and $G$ as well as vectors $\bm q^0, \bm q^1, \bm h^0$, and $\bm h^1$ can be changed depending on the problem.
That is, the method proposed method in \S3 is flexible and can be applied to any problem that can be converted into a parametric QP in the form of \eq{eq:qp}.
In this section, we demonstrate the extensions of the proposed method for testing the statistical significance of the features that are selected by various feature selection algorithms.

When applying a feature selection algorithm $\cA$ to the observed response vector $\bm y^{\rm obs}$, the observed active set can be defined as follows: 
\begin{align*}
	\cM_{\rm obs} = \cA(\bm y^{\rm obs}) = \{j : \hat{\beta}_j \neq 0\}.
\end{align*}
%
%
In this setting, $D = I_p$.
To test the selected features in $\cM_{\rm obs}$, the conditional inference is the same as that defined in \eq{eq:condition_model}
and the characterization of the truncation region $\cZ$ is the same as \eq{eq:truncation_region_z}.
To identify $\cZ$, the remaining task is to compute the solution path $\hat{\bm \beta}(z)$ for $z \in \RR$ and to identify the intervals of $z$ in which we obtain the same active set as $\bm y^{\rm obs}$.
In the following sections, we present the parametric QP formulations for vanilla lasso, elastic net, non-negative least squares, and Huber regression.
As all of these regression problems can be converted into the form of \eq{eq:qp}, the path of $\hat{\bm \beta}(z)$ can be computed within finite operations by using PP, as demonstrated in \S \ref{subsec:piecewise}.

\paragraph{Parametric QP for vanilla lasso.} Vanilla lasso with a parameterized response vector $\bm y (z)$ for $z \in \RR$ is defined as 
\begin{align}\label{eq:vanilla_lasso}
	 \hat{\bm \beta}_{\rm lasso}(z)&= \argmin \limits_{\bm \beta \in \RR^p} \frac{1}{2} ||\bm y (z) - X \bm \beta||_2^2 + \lambda ||\bm \beta||_1.
\end{align}
\begin{lemm}\label{lemma:lasso_qp}
By decomposing $\bm \beta = \bm \beta^{+} - \bm \beta^{-}$, $\bm \beta^{+}, \bm \beta^{-} \geq \bm 0_p$, the lasso problem in \eq{eq:vanilla_lasso} can be solved by the following parametric QP:
\begin{align*}
 	\left ( \hat{\bm \beta}^{+}_{\rm lasso} (z), \hat{\bm \beta}^{-}_{\rm lasso} (z) \right ) = \argmin \limits_{\bm \beta^{+} , \bm \beta^{-} \in \RR ^p}
	& \quad  \frac{1}{2} 
	\begin{pmatrix}
		\bm \beta^{+} \\ \bm \beta^{-}
	\end{pmatrix}^\top 
	\begin{pmatrix}
		X^\top X & - X^\top X \\ 
		- X^\top X & X^\top X
	\end{pmatrix}  
	\begin{pmatrix}
		\bm \beta^{+} \\ \bm \beta^{-}
	\end{pmatrix} \\ 
	& \quad + \left (\lambda \bm 1_{2p} -
	\begin{pmatrix}
		X^\top \bm y(z) \\ -X^\top \bm y(z)
	\end{pmatrix}
	\right)^\top 
	\begin{pmatrix}
		\bm \beta^{+} \\ \bm \beta^{-}
	\end{pmatrix}
	\\
\textrm{s.t.}  & \quad  \bm \beta^{+} \geq \bm 0_p, \bm \beta^{-} \geq \bm 0_p.
\end{align*}
\end{lemm}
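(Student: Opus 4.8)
The plan is to reuse the $\ell_1$-splitting argument from the proof of Lemma~\ref{lemm:qp_for_fused_lasso}, now applied directly to $\bm\beta$ rather than to $\bm\xi = D\bm\beta$; indeed vanilla lasso is the special case $D = I_p$ of \eq{eq:generalized_lasso}, so the statement is essentially a transcription of that earlier reformulation with $\bm\beta$ in place of $\bm\xi$. First I would substitute $\bm\beta = \bm\beta^{+} - \bm\beta^{-}$ with $\bm\beta^{+},\bm\beta^{-}\geq\bm 0_p$ into the objective of \eq{eq:vanilla_lasso}. Expanding $\frac{1}{2}\|\bm y(z) - X\bm\beta\|_2^2$ and discarding the term $\frac{1}{2}\|\bm y(z)\|_2^2$, which is constant in the decision variables and hence immaterial to the $\argmin$, leaves $\frac{1}{2}\bm\beta^\top X^\top X\bm\beta - (X^\top\bm y(z))^\top\bm\beta + \lambda\|\bm\beta\|_1$. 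A routine block expansion using $\bm\beta^\top X^\top X\bm\beta = (\bm\beta^{+} - \bm\beta^{-})^\top X^\top X(\bm\beta^{+} - \bm\beta^{-})$ rewrites the quadratic part as the quadratic form in $(\bm\beta^{+},\bm\beta^{-})$ with matrix $\begin{pmatrix} X^\top X & -X^\top X\\ -X^\top X & X^\top X\end{pmatrix}$, and $(X^\top\bm y(z))^\top\bm\beta = (X^\top\bm y(z))^\top\bm\beta^{+} - (X^\top\bm y(z))^\top\bm\beta^{-}$ rewrites the linear part with coefficient $\lambda\bm 1_{2p} - \begin{pmatrix} X^\top\bm y(z)\\ -X^\top\bm y(z)\end{pmatrix}$, reproducing exactly the objective of the stated QP.

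The remaining — and only nontrivial — step is to replace $\|\bm\beta\|_1$ by $\bm 1_{2p}^\top(\bm\beta^{+};\bm\beta^{-}) = \sum_{j\in[p]}(\beta^{+}_j + \beta^{-}_j)$. Here I would invoke the same observation as in Lemma~\ref{lemm:qp_for_fused_lasso}: at any optimal solution of the QP, for every $j\in[p]$ at least one of $\hat\beta^{+}_j$ and $\hat\beta^{-}_j$ equals zero, so that $\hat\beta^{+}_j + \hat\beta^{-}_j = |\hat\beta^{+}_j - \hat\beta^{-}_j|$. Indeed, if both were strictly positive, subtracting $\min(\hat\beta^{+}_j,\hat\beta^{-}_j)$ from each leaves the difference $\hat\beta^{+}_j - \hat\beta^{-}_j$ — and hence the value of the quadratic form and of the linear term involving $X^\top\bm y(z)$ — unchanged, while strictly decreasing $\hat\beta^{+}_j + \hat\beta^{-}_j$ and therefore the objective, contradicting optimality. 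Equivalently, for any fixed $\bm\beta$ the minimum of $\sum_{j\in[p]}(\beta^{+}_j + \beta^{-}_j)$ over all decompositions $\bm\beta^{+},\bm\beta^{-}\geq\bm 0_p$ with $\bm\beta^{+}-\bm\beta^{-}=\bm\beta$ equals $\|\bm\beta\|_1$, attained by $\beta^{+}_j = \max(\beta_j,0)$, $\beta^{-}_j = \max(-\beta_j,0)$. Consequently minimizing the QP over $(\bm\beta^{+},\bm\beta^{-})$ is equivalent to minimizing \eq{eq:vanilla_lasso} over $\bm\beta$, with $\hat{\bm\beta}_{\rm lasso}(z) = \hat{\bm\beta}^{+}_{\rm lasso}(z) - \hat{\bm\beta}^{-}_{\rm lasso}(z)$.

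I do not anticipate a genuine obstacle: the entire argument is the $\ell_1$ trick already used in the paper, and the only point requiring care is the complementarity claim $\hat\beta^{+}_j\hat\beta^{-}_j = 0$, which is settled by the strict-decrease argument above (and when $\lambda = 0$ the claim about the penalty term is moot since it vanishes). The block-matrix bookkeeping, while slightly tedious, is entirely mechanical.
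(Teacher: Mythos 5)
Your proposal is correct and follows essentially the same route as the paper: expand the square, drop the constant term, decompose $\bm\beta = \bm\beta^{+} - \bm\beta^{-}$, and replace $\|\bm\beta\|_1$ by $\sum_j(\beta^{+}_j+\beta^{-}_j)$ via the complementarity observation borrowed from Lemma~\ref{lemm:qp_for_fused_lasso}. Your explicit strict-decrease justification of $\hat\beta^{+}_j\hat\beta^{-}_j=0$ is slightly more detailed than the paper's, which simply asserts it, but the argument is the same.
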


\begin{proof}
According to \eq{eq:vanilla_lasso}, we obtain 
\begin{align}
	\hat{\bm \beta}_{\rm lasso}(z) &= \argmin \limits_{\bm \beta \in \RR^p} \frac{1}{2} ||\bm y (z) - X \bm \beta||_2^2 + \lambda ||\bm \beta||_1 \nonumber \\ 
	& = \argmin \limits_{\bm \beta \in \RR^p}  \frac{1}{2} 
	\left ( \bm y(z)^\top \bm y(z)  - \bm \beta^\top X^\top \bm y(z) - \bm y(z)^\top X \bm \beta  + \bm \beta^\top X^\top X \bm \beta  \right ) + \lambda ||\bm \beta||_1 \label{eq:lasso_1}.
\end{align}
Similar to the proof of Lemma \ref{lemm:qp_for_fused_lasso}, as the component $\frac{1}{2} \bm y(z)^\top \bm y(z)$ does not affect the optimal solution $\hat{\bm \beta}_{\rm lasso}(z)$, we can rewrite \eq{eq:lasso_1} as 
\begin{align}
	\hat{\bm \beta}_{\rm lasso}(z) = \argmin \limits_{\bm \beta \in \RR^p}  \frac{1}{2} \bm \beta^\top X^\top X \bm \beta
	   - (X^\top \bm y(z))^\top \bm \beta  + \lambda ||\bm \beta||_1 \label{eq:lasso_2}.
\end{align}
Finally, by decomposing $\bm \beta = \bm \beta^{+} - \bm \beta^{-}$, $\bm \beta^{+}, \bm \beta^{-} \geq \bm 0_p$ and $||\bm \beta||_1 = \sum_{j \in [p]} (\beta^{+}_j + \beta^{-}_j)$, we obtain the result in Lemma \ref{lemma:lasso_qp}.
\end{proof}

In our preliminary conference paper \citep{le2021parametric}, we initially presented the idea of introducing PP for vanilla lasso in a slightly different (but essentially the same) manner.

\paragraph{Parametric QP for elastic net.} The elastic net with a parameterized response vector $\bm y (z)$ for $z \in \RR$ is defined as 
\begin{align} \label{eq:elastic_net}
	\hat{\bm \beta}_{\rm elastic} (z) = \argmin \limits_{\bm \beta \in \RR^p} \frac{1}{2n} ||\bm y (z) - X \bm \beta||_2^2 + \lambda ||\bm \beta||_1 + \frac{1}{2} \zeta ||\bm \beta||_2^2.
\end{align}
\begin{lemm}\label{lemma:elastic_net_qp}
By decomposing $\bm \beta = \bm \beta^{+} - \bm \beta^{-}$, $\bm \beta^{+}, \bm \beta^{-} \geq \bm 0_p$, the elastic net problem in \eq{eq:elastic_net} can be solved using the following parametric QP:
\begin{align*}
 	\left ( \hat{\bm \beta}^{+}_{\rm elastic} (z), \hat{\bm \beta}^{-}_{\rm elastic} (z) \right )= \argmin \limits_{\bm \beta^{+} , \bm \beta^{-} \in \RR ^p}
	& \quad  \frac{1}{2n} 
	\begin{pmatrix}
		\bm \beta^{+} \\ \bm \beta^{-}
	\end{pmatrix}^\top 
	\begin{pmatrix}
		X^\top X & - X^\top X \\ 
		- X^\top X & X^\top X
	\end{pmatrix}  
	\begin{pmatrix}
		\bm \beta^{+} \\ \bm \beta^{-}
	\end{pmatrix} \\ 
	& \quad +
	\zeta \cdot \frac{1}{2}
	\begin{pmatrix}
		\bm \beta^{+} \\ \bm \beta^{-}
	\end{pmatrix}^\top 
	\begin{pmatrix}
		I_p & - I_p \\ 
		- I_p & I_p
	\end{pmatrix}  
	\begin{pmatrix}
		\bm \beta^{+} \\ \bm \beta^{-}
	\end{pmatrix}\
	\\ 
	& \quad + \left (\lambda \bm 1_{2p} -
	\frac{1}{n}
	\begin{pmatrix}
		X^\top \bm y(z) \\ - X^\top \bm y(z)
	\end{pmatrix}
	\right)^\top 
	\begin{pmatrix}
		\bm \beta^{+} \\ \bm \beta^{-}
	\end{pmatrix}
	\\
\textrm{s.t.}  & \quad  \bm \beta^{+} \geq \bm 0_p, \bm \beta^{-} \geq \bm 0_p.
\end{align*}
\end{lemm}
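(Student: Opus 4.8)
The plan is to reuse the strategy of the proofs of Lemma~\ref{lemm:qp_for_fused_lasso} and Lemma~\ref{lemma:lasso_qp}: first reduce the elastic net objective in \eq{eq:elastic_net} to a ``quadratic plus $\ell_1$'' form by expanding the squared loss and discarding the term that does not depend on $\bm \beta$, and then lift the $\ell_1$ penalty by writing $\bm \beta = \bm \beta^{+} - \bm \beta^{-}$ with $\bm \beta^{+}, \bm \beta^{-} \geq \bm 0_p$.

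First I would expand $\|\bm y(z) - X\bm \beta\|_2^2 = \bm y(z)^\top \bm y(z) - 2\bm \beta^\top X^\top \bm y(z) + \bm \beta^\top X^\top X \bm \beta$. Since $\frac{1}{2n}\bm y(z)^\top \bm y(z)$ does not involve $\bm \beta$, it does not affect the minimizer, so $\hat{\bm \beta}_{\rm elastic}(z)$ also minimizes $\frac{1}{2n}\bm \beta^\top X^\top X\bm \beta - \frac{1}{n}(X^\top\bm y(z))^\top\bm \beta + \lambda\|\bm \beta\|_1 + \frac{1}{2}\zeta\|\bm \beta\|_2^2$ over $\bm \beta \in \RR^p$.

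Next I would introduce $\bm \beta = \bm \beta^{+} - \bm \beta^{-}$ with $\bm \beta^{+}, \bm \beta^{-} \geq \bm 0_p$. The only point requiring care — exactly the argument already used in the earlier lemmas — is that at an optimum of the lifted problem $\min(\beta^{+}_j, \beta^{-}_j) = 0$ for every $j \in [p]$: the squared loss and the $\ell_2$ term depend on $(\bm \beta^{+}, \bm \beta^{-})$ only through the difference $\bm \beta^{+} - \bm \beta^{-}$, whereas subtracting $c_j = \min(\beta^{+}_j, \beta^{-}_j)$ from both $\beta^{+}_j$ and $\beta^{-}_j$ leaves $\bm \beta$ unchanged while strictly decreasing $\sum_{j\in[p]}(\beta^{+}_j + \beta^{-}_j)$ whenever some $c_j > 0$. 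Hence at the optimum $\|\bm \beta\|_1 = \sum_{j\in[p]}(\beta^{+}_j + \beta^{-}_j) = \bm 1_p^\top\bm \beta^{+} + \bm 1_p^\top\bm \beta^{-}$, which is precisely the $\lambda$-term of the claimed QP; consequently the lifted QP and the reduced elastic net problem have the same optimal value and the same minimizing $\bm \beta = \bm \beta^{+} - \bm \beta^{-}$.

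Finally I would substitute this decomposition into the remaining terms, using
\[
\bm \beta^\top X^\top X\bm \beta = \begin{pmatrix}\bm \beta^{+}\\ \bm \beta^{-}\end{pmatrix}^\top\begin{pmatrix}X^\top X & -X^\top X\\ -X^\top X & X^\top X\end{pmatrix}\begin{pmatrix}\bm \beta^{+}\\ \bm \beta^{-}\end{pmatrix}, \quad \|\bm \beta\|_2^2 = \begin{pmatrix}\bm \beta^{+}\\ \bm \beta^{-}\end{pmatrix}^\top\begin{pmatrix}I_p & -I_p\\ -I_p & I_p\end{pmatrix}\begin{pmatrix}\bm \beta^{+}\\ \bm \beta^{-}\end{pmatrix},
\]
together with $(X^\top\bm y(z))^\top\bm \beta = (X^\top\bm y(z))^\top\bm \beta^{+} - (X^\top\bm y(z))^\top\bm \beta^{-}$. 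Collecting the quadratic, linear, and $\ell_1$ pieces gives exactly the parametric QP in the statement. Apart from the $\ell_1$-lifting observation, every step is routine algebra; moreover $z$ enters only linearly through $X^\top\bm y(z)$, so the resulting program has the parametric QP form of \eq{eq:qp}, and its solution path $\hat{\bm \beta}_{\rm elastic}(z)$ can therefore be traced by the piecewise-linear homotopy of \S\ref{subsec:piecewise}.
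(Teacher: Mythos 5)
Your proof is correct and follows exactly the route the paper intends: the paper's own proof of this lemma is a one-line reference to the proof of Lemma~\ref{lemma:lasso_qp}, and your argument reproduces that strategy (expand the loss, drop the constant, lift the $\ell_1$ term via $\bm\beta = \bm\beta^{+}-\bm\beta^{-}$ with the $\min(\beta_j^{+},\beta_j^{-})=0$ observation), correctly noting the one new point that the added $\ell_2$ term also depends only on the difference $\bm\beta^{+}-\bm\beta^{-}$.
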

\begin{proof}
The proof of Lemma \ref{lemma:elastic_net_qp} is similar to the proof of Lemma \ref{lemma:lasso_qp}.
\end{proof}

\paragraph{Parametric QP for non-negative least squares.} The non-negative least squares problem with a parametrized response vector $\bm y (z)$ for $z \in \RR$ is defined as 
\begin{align*}
	\hat{\bm \beta}_{\rm non-negative} (z) = \argmin  \limits_{\bm \beta \in \RR^p} \frac{1}{2} ||\bm y (z) - X \bm \beta||_2^2  \quad 
	\textrm{s.t}  \quad  \bm \beta \geq 0.
\end{align*}
The above problem can be formulated as the following parametric QP: 
\begin{align*}
	\hat{\bm \beta}_{\rm non-negative} (z) = \argmin  \limits_{\bm \beta \in \RR^p} \frac{1}{2} \bm \beta^\top X^\top X \bm \beta - \left ( X^\top \bm y (z) \right )^\top \bm \beta
	\quad  
	\textrm{s.t} \quad  \bm \beta \geq 0.
\end{align*}

\paragraph{Parametric QP for Huber regression with $\ell_1$ penalty.}
The Huber regression with the $\ell_1$ penalty for a parameterized response vector $\bm y(z)$ is formulated as
\begin{align}
 \label{eq:eq_huber}
 \hat{\bm \beta}_{\rm huber}(z)
 =
 \argmin_{\bm \beta \in \RR^p}
 \sum_{i=1}^n
 L_{\delta}(y_i(z) - \bm x_i^\top \bm \beta) 
 + 
 \lambda ||\bm \beta||_1,
\end{align}
with
\begin{align*}
 L_\delta(e)
 =
 \mycase{
 \frac{1}{2} e^2
 &
 \text{ if } |e| \le \delta,
 \\
 \delta (|e| - \frac{1}{2} \delta)
 &
 \text{ otherwise},
 }
\end{align*}
where $\delta > 0$ is also a predetermined tuning parameter.
Let $\bm e = \bm y(z) - X \bm \beta$, $\bm \phi \in \RR^n$, and $\bm \nu \in \RR^n$ be the vectors in which the $i^{\rm th}$ element $\phi_i$ and $\nu_i$ are respectively defined as 
\begin{align*}
	\phi_i = \min \{|e_i|, \delta \}, \quad \nu_i = \max \{|e_i| - \delta, 0\}, 
\end{align*}
where $e_i$ is the $i^{\rm th}$ element of $\bm e$.
Obviously, $|e_i| = \phi_i + \nu_i$.
Subsequently, by decomposing $\bm \beta = \bm \beta^{+} - \bm \beta^{-}$, $\bm \beta^{+}, \bm \beta^{-} \geq \bm 0_p$, the problem in \eq{eq:eq_huber} can be solved by the following parametric QP:
\begin{align*}
	\left ( \hat{\bm \phi}(z), \hat{\bm \nu} (z), \hat{\bm \beta}^{+}_{\rm huber}(z), \hat{\bm \beta}^{-}_{\rm huber}(z) \right ) = \argmin \limits_{\bm \phi, \bm \nu, \bm \beta^{+}, \bm \beta^{-}}  & \frac{1}{2} \bm \phi^\top \bm \phi + \delta \cdot \bm 1_n^\top  \bm \nu + \sum \limits_j \left( \beta_j^{+} + \beta_j^{-} \right ) \\ 
	\textrm{s.t }  & -\bm \phi - \bm \nu \leq \bm y(z) - X\left( \bm \beta^{+} - \bm \beta^{-} \right) \leq \bm \phi + \bm \nu, \\ 
	& \bm 0_n \leq \bm u \leq \delta \cdot \bm 1_n, ~\bm v \geq \bm 0_n, \\
	& \bm \beta^{+} \geq \bm 0_p, ~ \bm \beta^{-} \geq \bm 0_p.
\end{align*}

The lasso-like formulation of the Huber regression has been discussed in \citet{she2011outlier}. Conditional SI for outliers was studied in \citet{chen2019valid}, and we recently investigated its PP version in \citet{tsukurimichi2021conditional}.

\section{Characterization of CV-Based Tuning Parameter Selection Event}  \label{sec:cv_event} 

Various ML tasks involve careful tuning of a regularization parameter $\lambda$ that controls the balance between an empirical loss term and a regularization term; for example, this is commonly achieved by CV. However, the majority of the current SI studies have assumed a pre-specified $\lambda$ and have ignored the fact that $\lambda$ is selected based on the data, because it is difficult to characterize the CV selection event. \cite{loftus2015selective} and \citet{markovic2017unifying} proposed solutions to incorporate CV events. However, the former work required additional conditioning on all intermediate models, which led to a loss of power, whereas the latter considered a randomization version of CV instead of vanilla CV.

In this section, we introduce a new means of characterizing the \emph{minimal} selection event in which $\lambda$ is selected based on the data; for example, via CV \footnote{We note that the following discussion is only applicable when the number of features $p$ is independent of $n$.}.
For notational simplicity, we consider the case in which the data are divided into training and validation sets, and the latter is used for selecting $\lambda$. The following discussion can easily be extended to CV scenarios. We rewrite the observed data as follows: 
\begin{align*}
\{X, \bm{y}^{\rm obs}\} = 
\left \{
(X_{\rm train}\ X_{\rm val} )^\top \in \RR^{n \times p},
(\bm y^{\rm obs}_{\rm train}\ \bm y^{\rm obs}_{\rm val})^\top \in\RR^{n}
\right \}.
\end{align*}
Given a set of regularization parameter candidates $\Lambda$, the process of selecting $\lambda$ is as follows:
\begin{enumerate}
	\item  For each $\lambda \in \Lambda$, we first obtain $\hat{\bm \beta}_\lambda$ using the training data
	\begin{align*} 
		\hat{\bm \beta}_\lambda \in \argmin \limits_{\bm \beta} 		\frac{1}{2} \|\bm y_{\rm train}^{\rm obs} - X_{\rm train} \bm \beta\|^2_2 + \lambda \|D \bm \beta\|_1.
	\end{align*}
	Thereafter, the validation error is defined as 
	\begin{align*} 
		E_\lambda = \frac{1}{2} \|\bm y_{\rm val}^{\rm obs} - X_{\rm val} \bm \hat{\bm \beta}_\lambda\|^2_2.
	\end{align*}
	\item We select $\lambda^{\rm obs} = \lambda \in \Lambda$, which has the corresponding smallest validation error $E_\lambda$.

\end{enumerate}
The selection event of the above validation process is defined as
\begin{align} \label{eq:cv_event}
	\{\cV(\bm Y) = \cV(\bm y^{\rm obs})\},
\end{align}
where $\cV(\bm y^{\rm obs}) =  \lambda^{\rm obs} \in \Lambda$ is the event that $\lambda^{\rm obs}$ is selected when validation is performed on $\bm y^{\rm obs}$.

After selecting $\lambda^{\rm obs}$, we can obtain the observed active set $\cM_{\rm obs}$ by applying generalized lasso on $\bm y^{\rm obs}$ with the selected $\lambda^{\rm obs}$, which can be defined as in \eq{eq:selection_event}.
However, to conduct a statistical test for each element in $\cM_{\rm obs}$, the conditional inference will be different from \eq{eq:condition_model}, because we must incorporate the selection event of the validation process.
Therefore, for each $j \in \cM_{\rm obs}$, we consider the following conditional inference: 
\begin{align}\label{eq:conditional_model_with_cv}
	\bm \eta_j^\top \bm Y \mid 
	 \{ 
		\cA(\bm Y) = \cA(\bm y^{\rm obs}),  
		\cV(\bm Y) = \cV(\bm y^{\rm obs}),
		\bm q(\bm Y) = \bm q(\bm y^{\rm obs})
	 \}.
\end{align}
According to the third condition, the data are restricted on the line, as discussed in \S \ref{subsec:conditional_data_space_characterization}. 
Therefore, the conditional data space in \eq{eq:conditional_model_with_cv} can be rewritten as: 
\begin{align*}
	\cY_{\rm CV} =\{\bm y(z) = \bm a + \bm bz \mid z \in \cZ_{\rm CV}\},
\end{align*}
where 
\begin{align*} 
	\cZ_{\rm CV} = \{z \in \RR \mid 
		\cA(\bm y(z)) = \cA(\bm y^{\rm obs}),
		\cV(\bm y(z)) = \cV(\bm y^{\rm obs})\}.
\end{align*}
The remaining task to conduct the inference is to identify $\cZ_{\rm CV}$.
We can decompose $\cZ_{\rm CV}$ into two separate sets 
\begin{align*}
	\cZ_{\rm CV} = \cZ_1 \cap \cZ_2,
\end{align*}
where 
$\cZ_1 = \{z \in \RR \mid \cA(\bm y(z)) = \cA(\bm y^{\rm obs})\}$
and 
$\cZ_2 = \{z \in \RR \mid \cV(\bm y(z)) = \cV(\bm y^{\rm obs})\}.$
The set $\cZ_1$ can easily be constructed using the method proposed in the previous sections. 
The remaining challenge is to identify $\cZ_2$.

To construct $\cZ_2$, it is necessary to identify the intervals of $z$ on which $\lambda^{\rm obs}$ has the smallest validation error.
That is, we can redefine 
\begin{align*}
	\cZ_2 = \{z \in \RR \mid E_{\lambda^{\rm obs}}(z) \leq E_\lambda(z) \text{ for any } \lambda \in \Lambda \},
\end{align*}
where 
\begin{align} 
	E_\lambda(z) = \frac{1}{2} &\|\bm y_{\rm val}(z) - X_{\rm val} \bm \hat{\bm \beta}_\lambda(z)\|^2_2, \label{eq:validation_step}, \\ 
	\hat{\bm \beta}_\lambda(z) \in \argmin \limits_{\bm \beta \in \RR^p} \frac{1}{2} &\|\bm y_{\rm train}(z) - X_{\rm train} \bm \beta\|^2_2 + \lambda \|D \bm \beta\|_1. \label{eq:training_step}
\end{align}
For each $\lambda \in \Lambda$, although it appears to be intractable to compute $E_{\lambda}(z)$ for infinitely many values of $z \in \RR$, the task can be completed within finite operations using PP.

\begin{algorithm}[!t]
\renewcommand{\algorithmicrequire}{\textbf{Input:}}
\renewcommand{\algorithmicensure}{\textbf{Output:}}
\begin{footnotesize}
 \begin{algorithmic}[1]
  \REQUIRE $X, \bm y^{\rm obs}, \Lambda, D, K, [z_{\rm min}, z_{\rm max}]$
  	\vspace{2pt}
	\STATE Conduct $K$-fold CV to select $\lambda^{\rm obs}$
	\vspace{2pt}
	\STATE Obtain $\cM_{\rm obs} = \cA(\bm y^{\rm obs})$ for data $(X, {\bm y}^{\rm obs})$ with the selected $\lambda^{\rm obs}$
	\vspace{2pt}
	\FOR {each $j \in \cM_{\rm obs}$}
		\vspace{2pt}
		\STATE Compute $\bm \eta_j$, and subsequently calculate $\bm a$ and $\bm b$ based on $\bm \eta_j$ and $\bm y^{\rm obs}$ $\leftarrow$ Equation (\ref{eq:parametrized_data_space})
		\vspace{4pt}
		\STATE Obtain $\cA(\bm y(z))$ using Algorithm \ref{alg:solution_path} $\leftarrow {\tt compute\_solution\_path}$ ($X$, $\lambda^{\rm obs}$,  $D$, $\bm a$, $\bm b$, $[z_{\rm min}, z_{\rm max}]$)
		\vspace{4pt}
		\STATE $\cZ_1 \leftarrow \{z: \cA(\bm y(z)) = \cM_{\rm obs}\}$ $~~$ (characterize model selection event)
		\vspace{4pt}
		\STATE $\cZ_2 \leftarrow$ {\tt compute\_$\cZ_2$}($X, \bm a, \bm b, \Lambda, D, K$) $~~$ (characterize CV selection event)
		\vspace{4pt}
		\STATE  $\cZ_{\rm CV} = \cZ_1 \cap \cZ_2$
		\vspace{4pt}
		\STATE Compute $P^{\rm selective}_j $ in Equation (\ref{eq:selective_p_value}) with truncation region $\cZ_{\rm CV}$
		\vspace{2pt}
	\ENDFOR
	\vspace{2pt}
  \ENSURE $\{P^{\rm selective}_j\}_{j \in \cM_{\rm obs}}$
 \end{algorithmic}
\end{footnotesize}
\caption{{\tt SI\_with\_$K$\_fold\_cross\_validation}}
\label{alg:cv_algorithm}
\end{algorithm}

As demonstrated in \S \ref{subsec:piecewise}, the optimal solution $\hat{\bm \beta}_\lambda(z)$ in \eq{eq:training_step} is a piecewise linear function of $z$.
That is, 
\begin{align} \label{eq:ev_1}
\hat{\bm \beta}_\lambda(z)
 =
 \mycase{
 \hat{\bm \beta}_\lambda(z_1) + \bm s_\lambda^1 (z - z_1) & \text{ if } z \in [z_1, z_2], \\
 \hat{\bm \beta}_\lambda(z_2) + \bm s_\lambda^2 (z - z_2) & \text{ if } z \in [z_2, z_3], \\
 ~~~~~~~~ \vdots & \\
 \hat{\bm \beta}_\lambda(z_{|\cT_\lambda| - 1}) + \bm s_\lambda^{|\cT_\lambda| - 1} (z - z_{|\cT_\lambda| - 1}) & \text{ if } z \in [z_{|\cT_\lambda| - 1}, z_{|\cT_\lambda|}],
 }
\end{align}
where $\bm s_\lambda^{k \in |\cT_\lambda|} $ is the slope vector, the elements of which are dependent on $\bm \psi (z_{k \in |\cT_\lambda|})$ in \eq{eq:psi}.
The subscript $\lambda$ in $\hat{\bm \beta}_\lambda(z), \bm s_\lambda^{k}, \cT_\lambda$ indicates that these components are dependent on $\lambda$.
Moreover, because we decompose $\bm y(z) = \left( \bm y_{\rm train}(z), \bm y_{\rm val}(z)\right)^\top$, the vectors $\bm a$ and $\bm b$ can be rewritten as follows: 
\begin{align*}
	\bm a = \left(\bm a_{\rm train} ~ \bm a_{\rm val}\right)^\top, \quad \bm b = \left(\bm b_{\rm train} ~ \bm b_{\rm val}\right)^\top.
\end{align*}
Therefore, we can write
\begin{align} \label{eq:ev_2}
	\bm y_{\rm val}(z) = \bm a_{\rm val} + \bm b_{\rm val} z.
\end{align}
According to the piecewise linearity of $\hat{\bm \beta}_\lambda (z)$ in \eq{eq:ev_1} and the linearity of $\bm y_{\rm val}(z)$ in \eq{eq:ev_2}, the validation error $E_\lambda(z)$ in \eq{eq:validation_step} is a \emph{piecewise quadratic} function of $z$, which can be expressed as follows:
\begin{align*} 
E_\lambda(z)
 =
 \mycase{
 (1/2)  || ( \bm a_{\rm val} - X_{\rm val} \hat{\bm \beta}_\lambda(z_1) + X_{\rm val} \bm s_\lambda^1 z_1 ) + ( \bm b_{\rm val} - X_{\rm val} \bm s_\lambda^1 ) z  ||_2^2 & \text{ if } z \in [z_1, z_2], \\
  (1/2)  || ( \bm a_{\rm val} - X_{\rm val} \hat{\bm \beta}_\lambda(z_2) + X_{\rm val} \bm s_\lambda^2 z_2 ) + ( \bm b_{\rm val} - X_{\rm val} \bm s_\lambda^2 ) z  ||_2^2 & \text{ if } z \in [z_2, z_3], \\
 ~~~~~~~~~~~~~~~~~~~~~~~~~~~~~ \vdots & \\
  (1/2)  || ( \bm a_{\rm val} - X_{\rm val} \hat{\bm \beta}_\lambda(z_{|\cT_\lambda| - 1}) + X_{\rm val} \bm s_\lambda^{|\cT_\lambda| - 1} z_{|\cT_\lambda| - 1}).  
  \\ ~~~~~~~~~~~~~~~~~~~~~~~~~~~~~~~~~~~~~~ + ( \bm b_{\rm val} - X_{\rm val} \bm s_\lambda^{|\cT_\lambda| - 1} ) z  ||_2^2 & \text{ if } z \in [z_{|\cT_\lambda| - 1}, z_{|\cT_\lambda|}].
 }
\end{align*}
At this point, for each $\lambda \in \Lambda$, we have a corresponding validation error $E_\lambda(z)$, which is a piecewise quadratic function of $z$. 
Finally, $\cZ_2$ can be identified by determining the intervals of $z$ in which the validation error $E_{\lambda^{\rm obs}}(z)$ corresponding to $\lambda^{\rm obs}$ is the minimum among a set of piecewise quadratic functions.
The procedure for the $K$-fold CV case is presented in Algorithm \ref{alg:cv_algorithm}.

\begin{algorithm}[!t]
\renewcommand{\algorithmicrequire}{\textbf{Input:}}
\renewcommand{\algorithmicensure}{\textbf{Output:}}
\begin{footnotesize}
 \begin{algorithmic}[1]
  \REQUIRE $X, \bm a, \bm b, \Lambda, D, K$
  	\vspace{2pt}
	\FOR {$\lambda \in \Lambda$}
	\vspace{2pt}
		\FOR {$k \leftarrow 1$ to $K$}
			\vspace{2pt}
			\STATE Compute $\hat{\bm \beta}_\lambda^k (z)$ as in \eq{eq:training_step} for fold $k$
			\vspace{2pt}
			\STATE Compute validation error $E_\lambda^k (z)$ as in \eq{eq:validation_step} for fold $k$
			\vspace{2pt}
		\ENDFOR
		\vspace{2pt}
		\STATE Compute mean error $\bar{E}_\lambda (z) = \frac{1}{K} \sum_{k=1}^K E_\lambda^k (z)$ among $K$ folds for current $\lambda$ candidate 
	\vspace{2pt}
	\ENDFOR
  	\vspace{2pt}
	\STATE $\cZ_2 = \{z \in \RR \mid \bar{E}_{\lambda^{\rm obs}}(z) \leq \bar{E}_\lambda(z) \text{ for any } \lambda \in \Lambda \}$
	\vspace{2pt}
  \ENSURE $\cZ_2$
 \end{algorithmic}
\end{footnotesize}
\caption{{\tt compute\_$\cZ_2$}}
\label{alg:compute_c_Z_2}
\end{algorithm}

\section{Experiments}
In this section, we discuss the performance evaluation of the proposed method.
First, the experimental setup is presented in \S \ref{subsec:exp_setup}.
Thereafter, the results on synthetic and real data are outlined in \S \ref{subsec:exp_numerical_results} and \S \ref{subsec:exp_real_data_results}, respectively.

\subsection{Experimental Setup} \label{subsec:exp_setup}

We conducted experiments on fused lasso, which is one of the most commonly studied cases of generalized lasso, as well as on feature selection methods including vanilla lasso, elastic net, non-negative least squares, and Huber regression + $\ell_1$ penalty.
We executed the code on an Intel(R) Xeon(R) CPU E5-2687W v4 @ 3.00 GHz.

\paragraph{Methods for comparison.} We present the false positive rate (FPR), true positive rate (TPR), and confidence interval (CI) for the following conditional inferences:

%
%

\begin{itemize}
\item Proposed method: Conditional inference \emph{without} extra conditioning, as defined in \eq{eq:condition_model}, which was the main focus of this study.

\item Over-conditioning (OC): Conditional inference with extra conditioning. 
Regarding fused lasso, OC is the method that was proposed in \citet{hyun2018exact}.
Regarding vanilla lasso, elastic net, and Huber regression + $\ell_1$ penalty, OC is the polytope-based SI that was proposed in \citet{lee2016exact}.

\item Data splitting (DS) \citep{cox1975note}: DS is the commonly used procedure for selection bias correction. 
In this approach, the data are randomly divided into two halves: one half is used for model selection and the other half is used for inference. 
\end{itemize}

\paragraph{Synthetic data generation for fused lasso.}
We set $X = I_n$, and the matrix $D \in \RR^{(n-1) \times n}$ was defined as 
\begin{align*}
\begin{pmatrix}
	-1 & 1 & 0 & \cdots & 0 & 0 \\ 
	0 & -1 & 1 & \cdots & 0 & 0 \\ 
	& & & \cdots & & \\ 
	0 & 0 & 0 & \cdots & -1 & 1
\end{pmatrix}.
\end{align*}
Regarding the FPR experiments, we generated 100 null data $\bm y = (y_1, ..., y_n)^\top$, where $y_{i \in [n]} \sim \NN(0, 1)$ for each $n \in \{70, 80, 90, 100\} $.
To test the TPR, we generated $\bm y = (y_1, ..., y_n)^\top$ with $n = 60$, in which 
	\begin{align*}
		y_i \sim \NN (\mu_i, 1), \quad \mu_i =
		\begin{cases}
			0, &\text{ if } 1 \leq i \leq 20 \text{ or } 41 \leq i \leq 60,\\ 
			0 + \Delta_{\mu}, &\text{ if } 21 \leq i \leq 40.
		\end{cases}
\end{align*}
The significance level $\alpha$ was set to 0.05.
We used Bonferroni correction to account for the multiplicity in all of the experiments.
If $v$ selected features (hypotheses) are tested simultaneously, Bonferroni correction tests each individual hypothesis at $\alpha^\ast = \alpha / v$.
We ran 100 trials for each $\Delta_{\mu} \in \{1, 2, 3, 4\}$ and we repeated the experiment 10 times.

\paragraph{Synthetic data generation for feature selection methods.} We generated $n$ outcomes as $y_i = \bm x_i^\top \bm \beta + \veps_i$,  $i = 1, ..., n$, where $\bm x_i \sim \NN(0, I_p)$, in which $p = 5$, and $\veps_i \sim \NN(0, 1)$.
We set the regularization parameter $\lambda = 1$ and significance level $\alpha = 0.05$.
Bonferroni correction was also applied.
For the FPR experiments, all of the elements of $\bm \beta$ were set to 0.
For the TPR experiments, the first two elements of $\bm \beta$ were set to 0.25. 
We ran 100 trials for each $n \in \{50, 100, 150, 200\}$ and we repeated this experiment 10 times.
For the CI experiments, we set $n=100, p=5$ and ran 100 trials.

\begin{figure}[!t]
\begin{subfigure}{.49\textwidth}
  \centering
  \includegraphics[width=.8\linewidth]{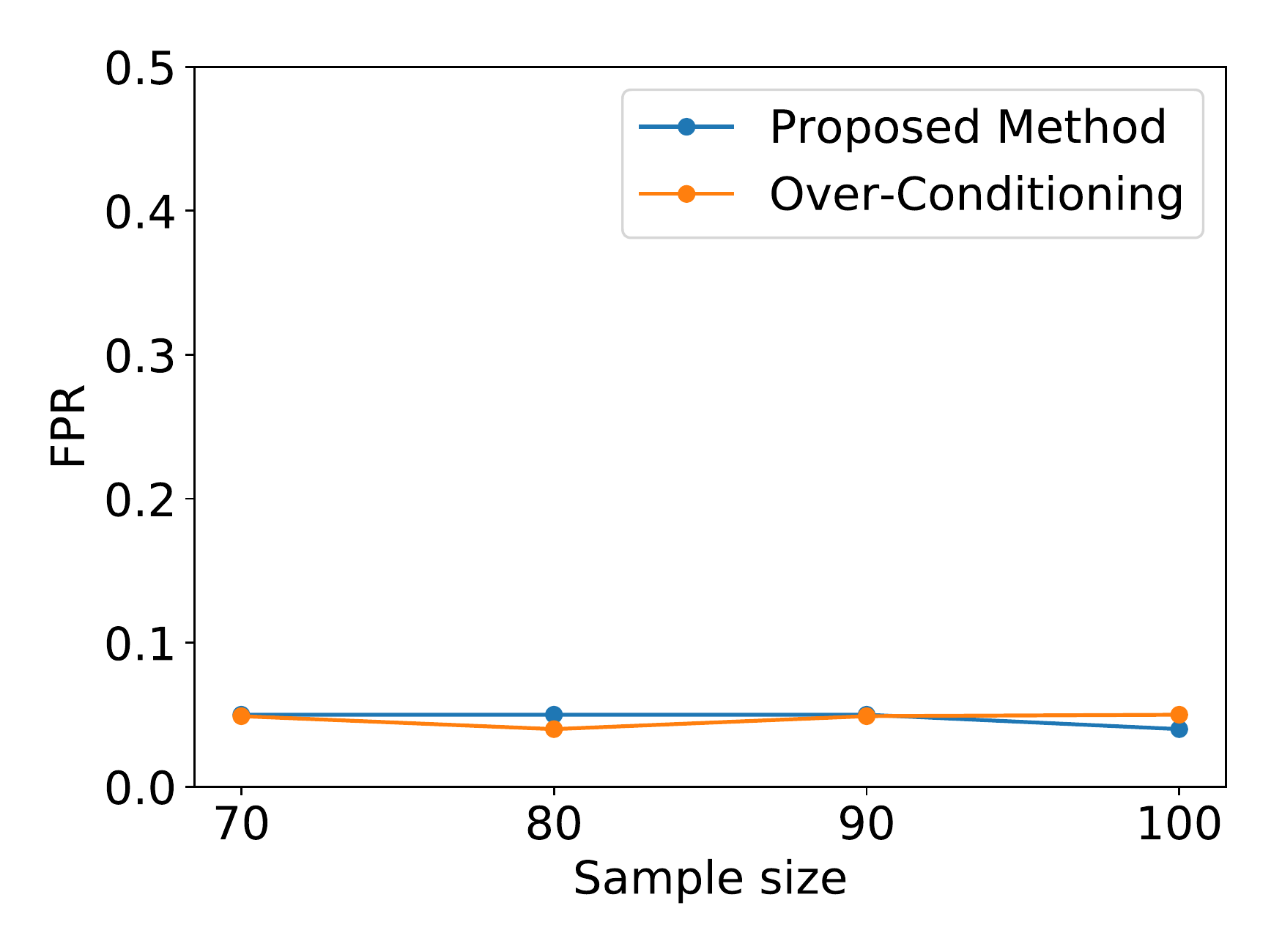}  
  \caption{FPR}
\end{subfigure}
\begin{subfigure}{.49\textwidth}
  \centering
  \includegraphics[width=.8\linewidth]{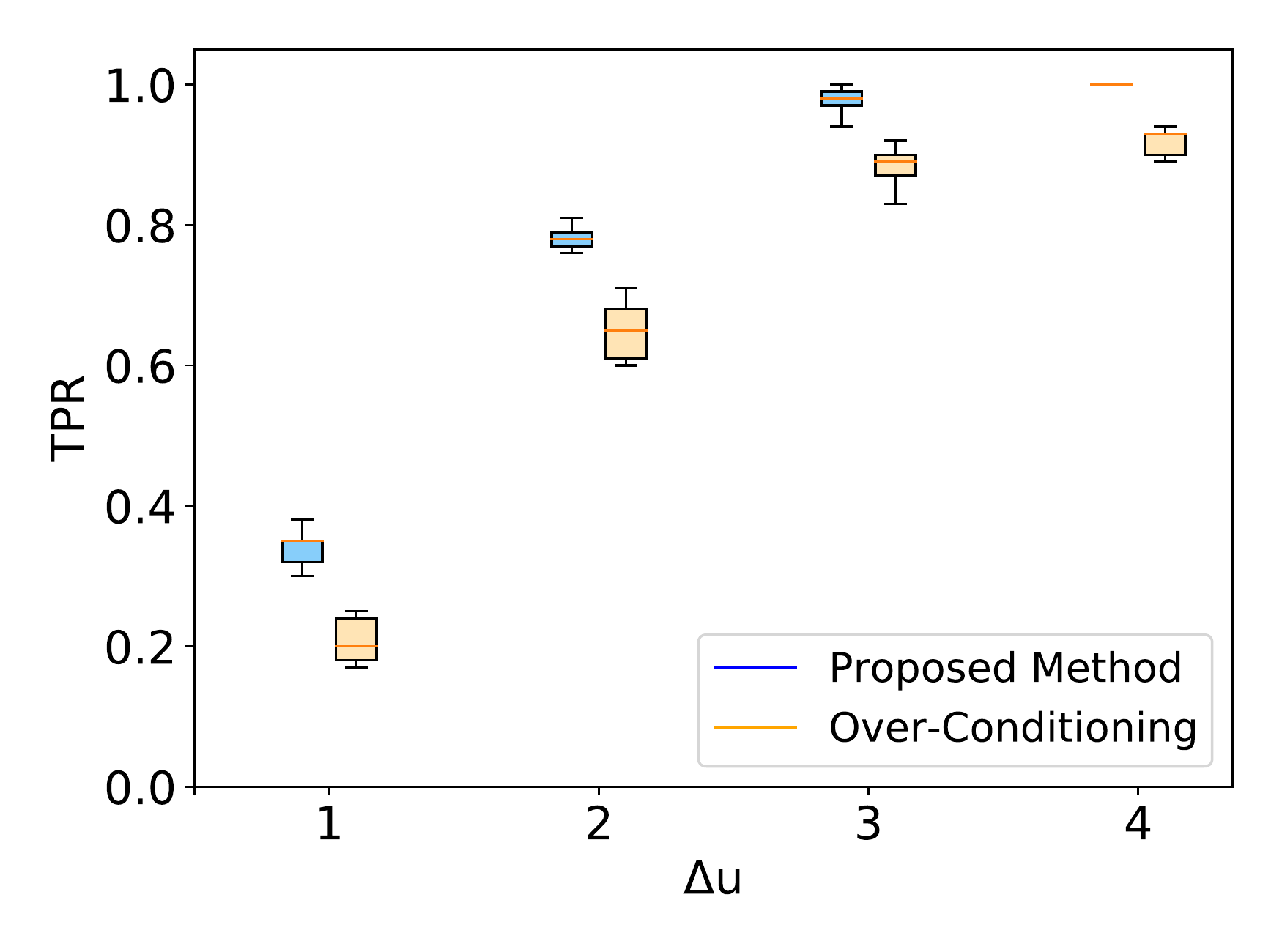}  
  \caption{TPR}
\end{subfigure}
\caption{Results of FPR and TPR for fused lasso.}
\label{fig:fpr_tpr_fused_lasso}

\vspace*{\floatsep}%

\begin{subfigure}{.325\textwidth}
  \centering
  \includegraphics[width=\linewidth]{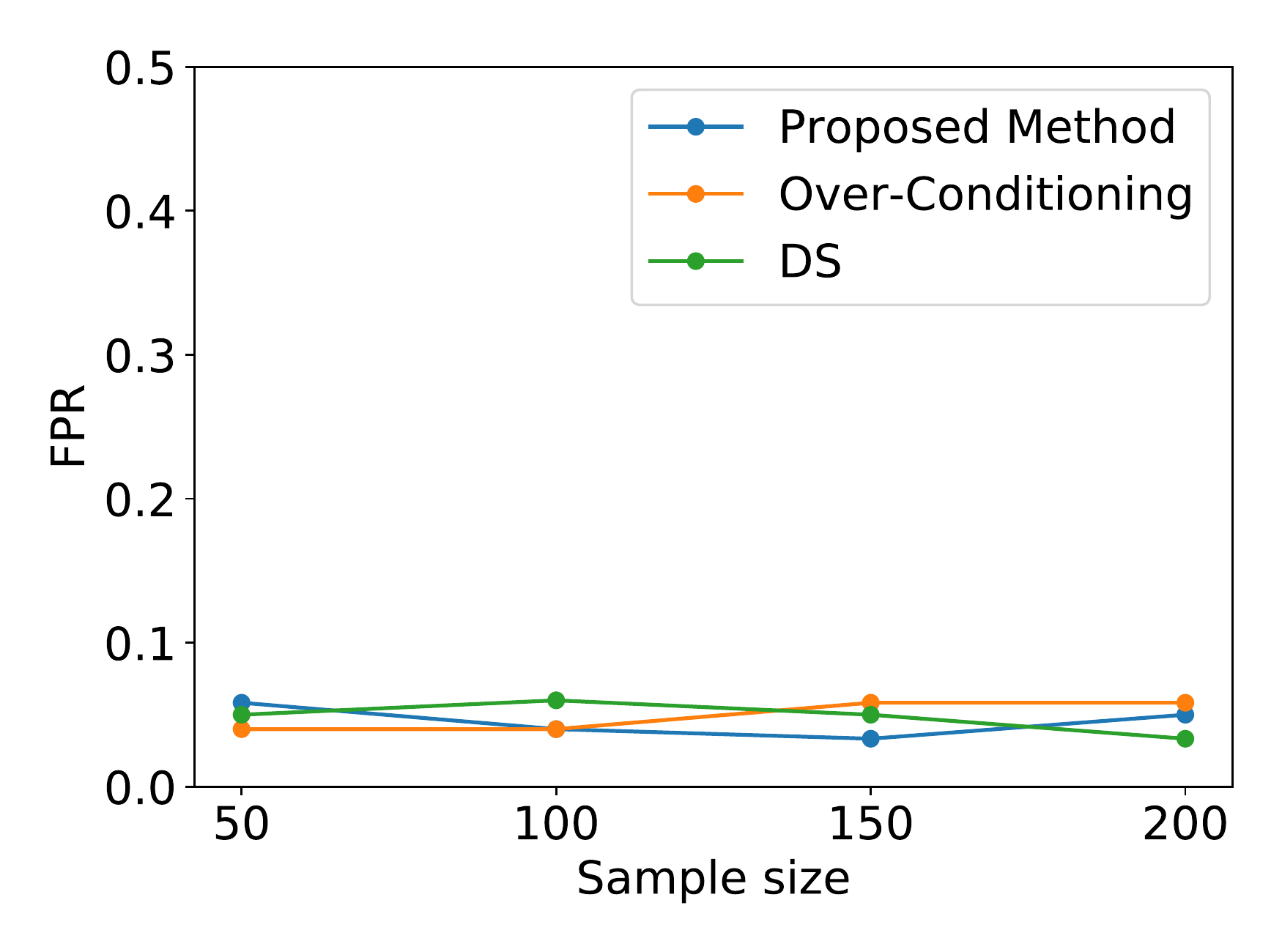}  
  \caption{FPR}
\end{subfigure}
\begin{subfigure}{.325\textwidth}
  \centering
  \includegraphics[width=\linewidth]{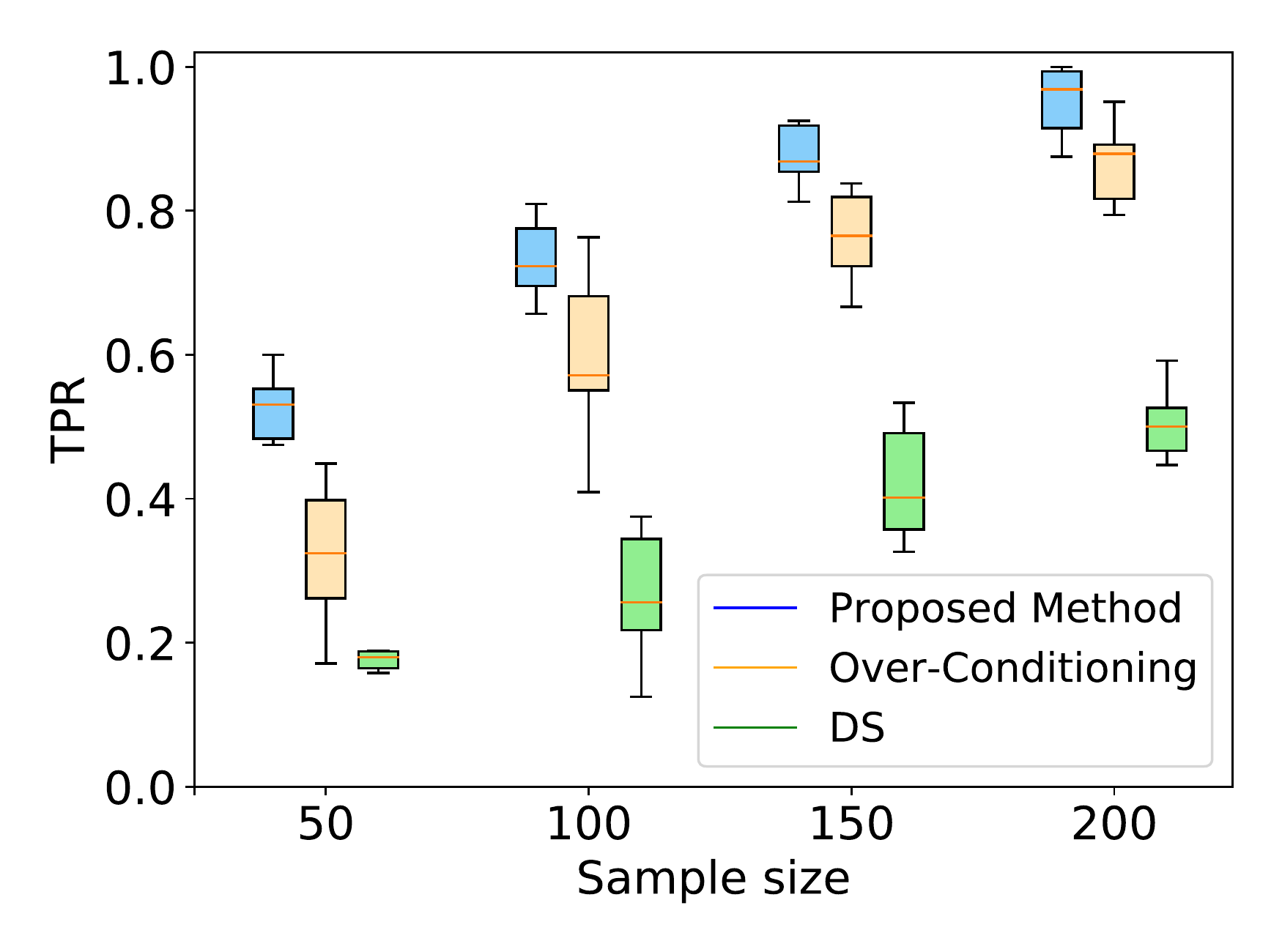}  
  \caption{TPR}
\end{subfigure}
\begin{subfigure}{.325\textwidth}
  \centering
  \includegraphics[width=\linewidth]{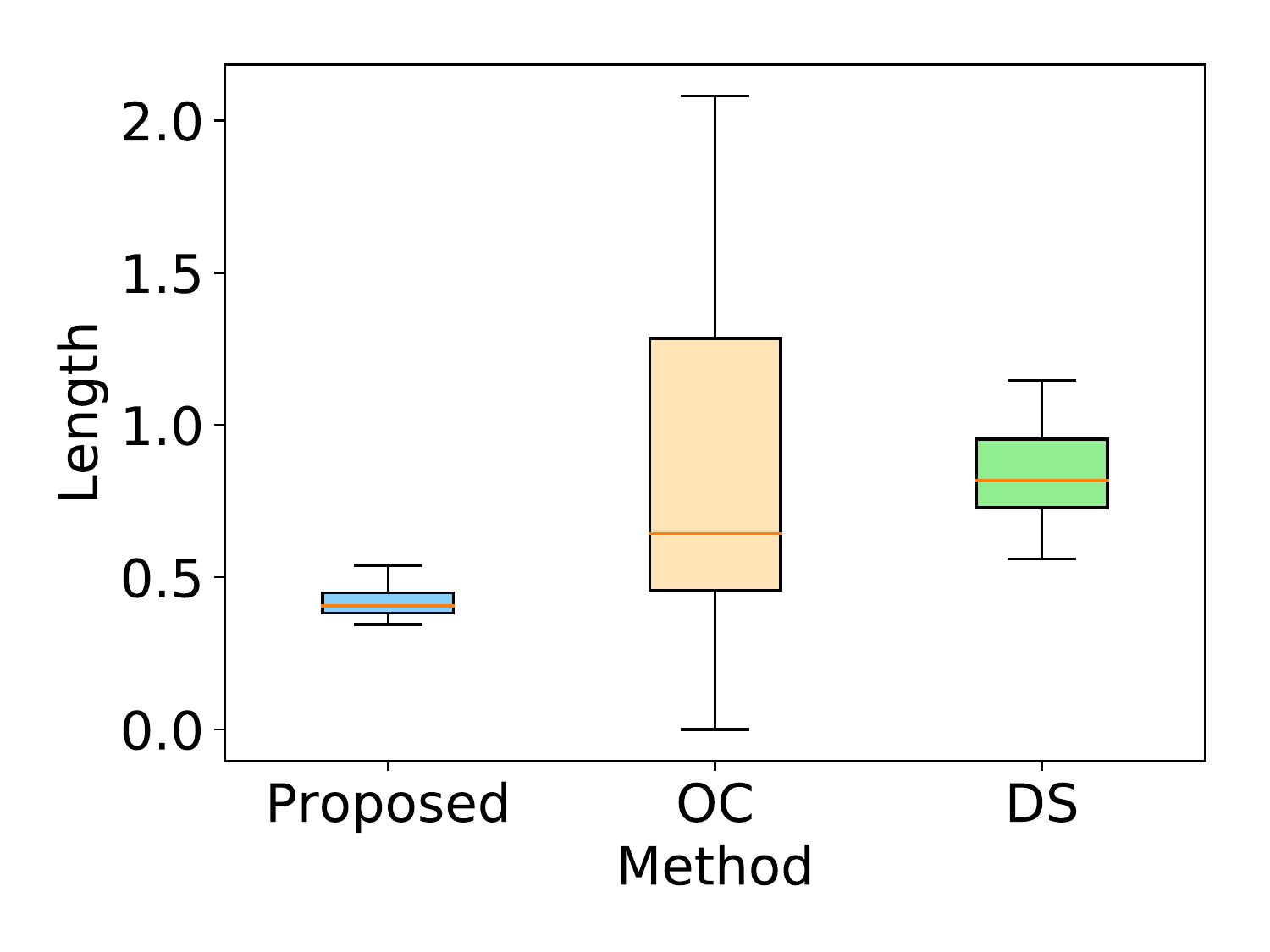}  
  \caption{CI}
\end{subfigure}
\caption{Results of FPR, TPR, and CI for vanilla lasso.}
\label{fig:fpr_tpr_lasso}

\vspace*{\floatsep}%

\begin{subfigure}{.325\textwidth}
  \centering
  \includegraphics[width=\linewidth]{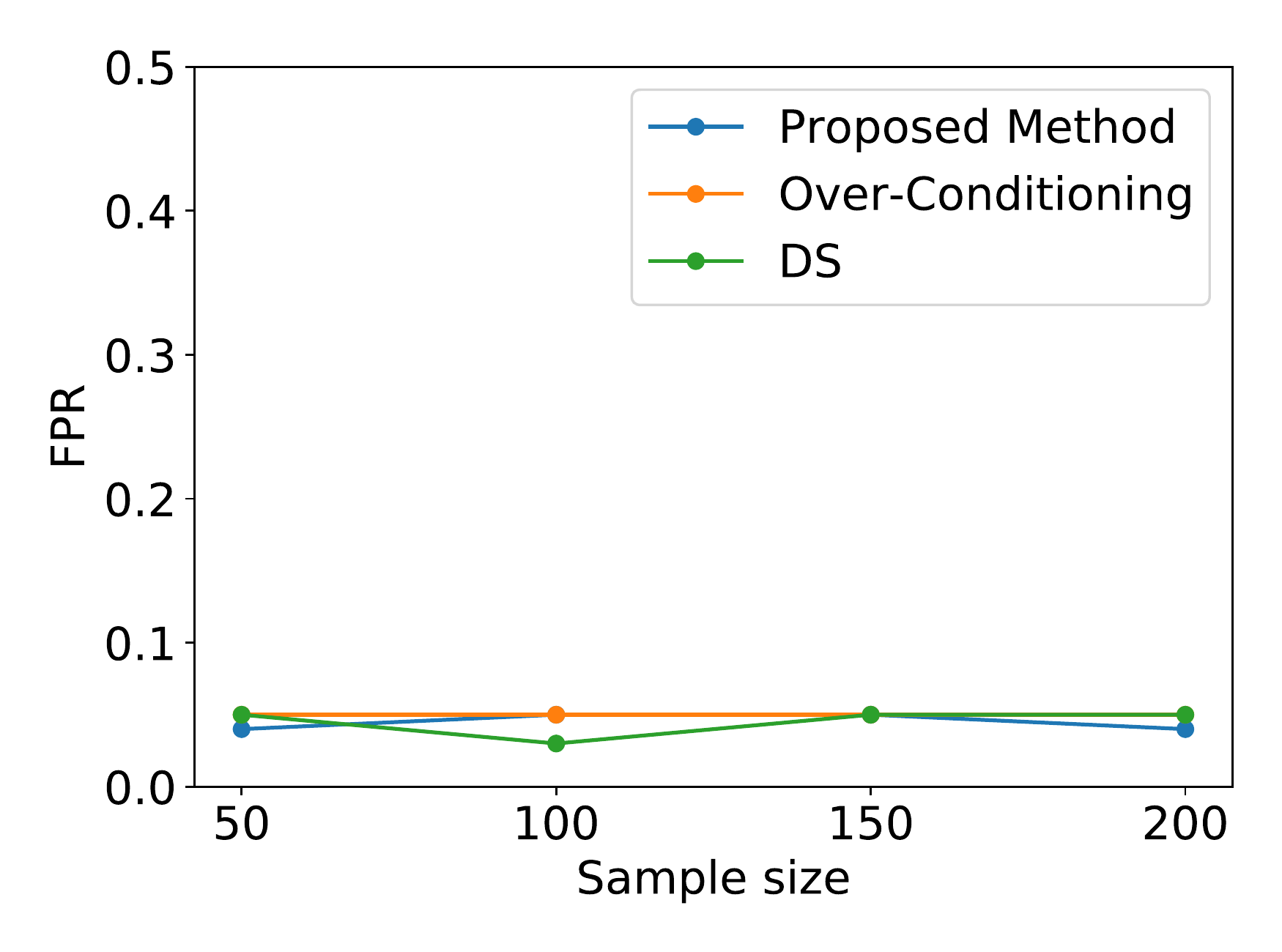}  
  \caption{FPR}
\end{subfigure}
\begin{subfigure}{.325\textwidth}
  \centering
  \includegraphics[width=\linewidth]{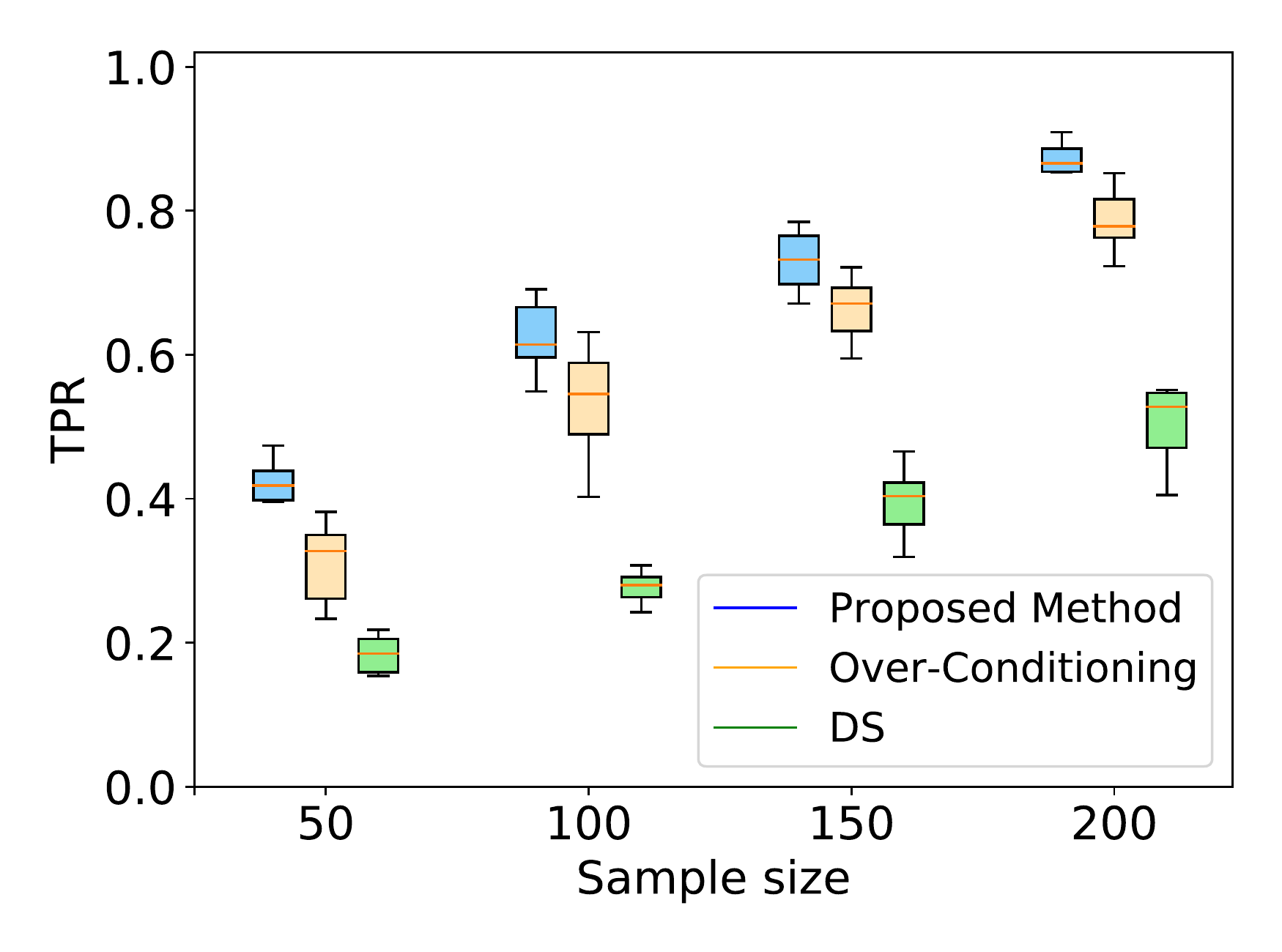}  
  \caption{TPR}
\end{subfigure}
\begin{subfigure}{.325\textwidth}
  \centering
  \includegraphics[width=\linewidth]{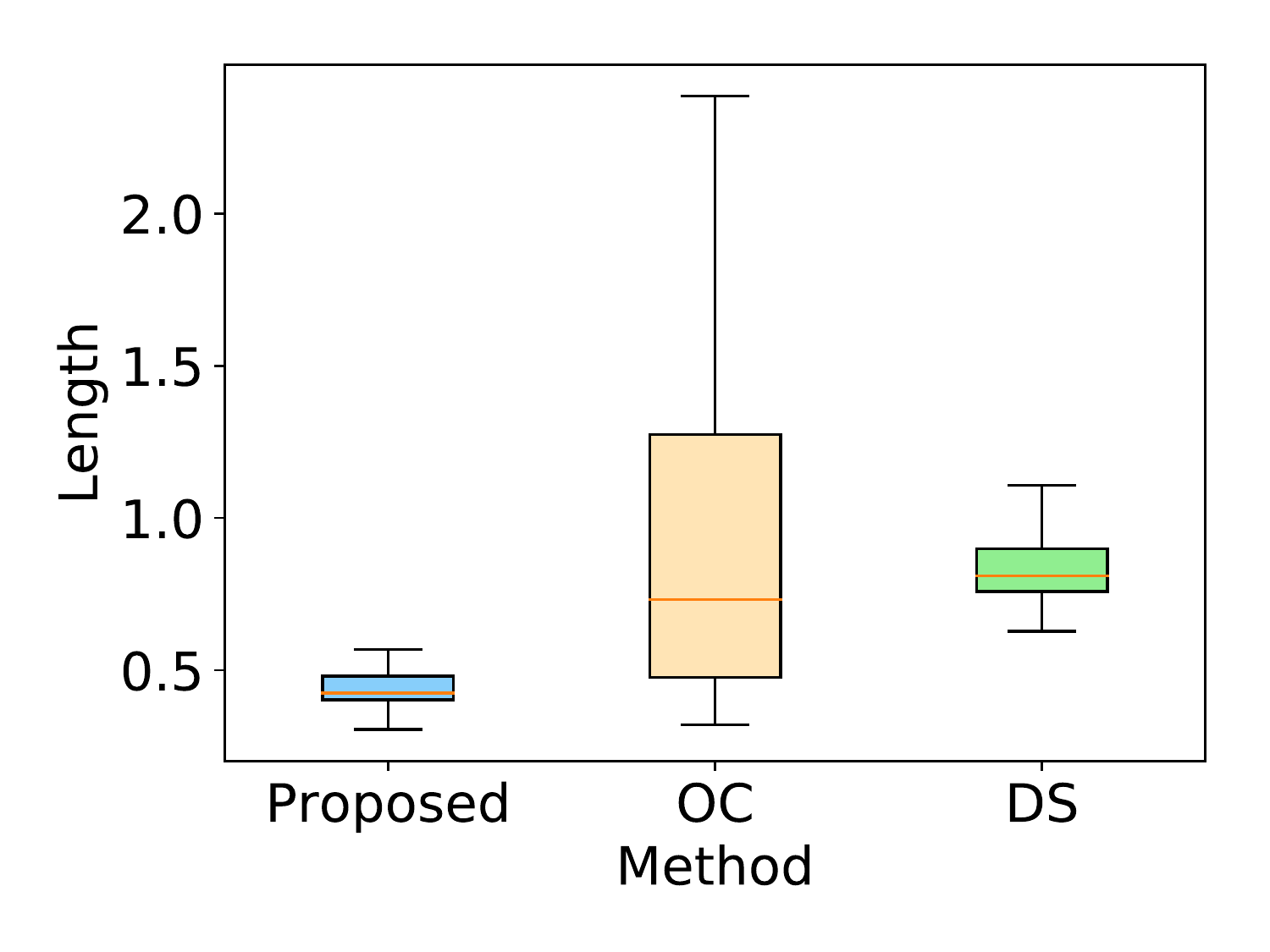}  
  \caption{CI}
\end{subfigure}
\caption{Results of FPR, TPR, and CI for elastic net.}
\label{fig:fpr_tpr_elastic_net}

\end{figure}

\paragraph{Definition of TPR.} In SI, statistical testing is only conducted when at least one hypothesis is discovered by the algorithm.
Therefore, the definition of the TPR, which is also known as the \emph{conditional power}, is as follows:
\begin{equation*}
	{\rm TPR} = \frac{{\rm \#\ correctly~detected\ \&\ rejected}}{{\rm \#\ correctly~detected}}.
\end{equation*}
In the case of fused lasso, a detection is considered as correct if it is within $L = \pm 2$ of the true CP locations. 
This is because it is often difficult to identify the exact CPs accurately in the presence of noise.
Many existing studies considered a detection to be correct if it was within $L$ positions of the true CP locations \citep{truong2020selective}.
In the case of feature selection, ${\rm \#\ correctly~detected}$ indicates the number of truly positive features that are selected by the algorithm (e.g., lasso), whereas  ${\rm \#\ rejected}$ indicates the number of truly positive features for which the null hypothesis is rejected by the SI.

\subsection{Numerical Results} \label{subsec:exp_numerical_results}

\begin{figure}[!t]

%

\begin{subfigure}{.325\textwidth}
  \centering
  \includegraphics[width=\linewidth]{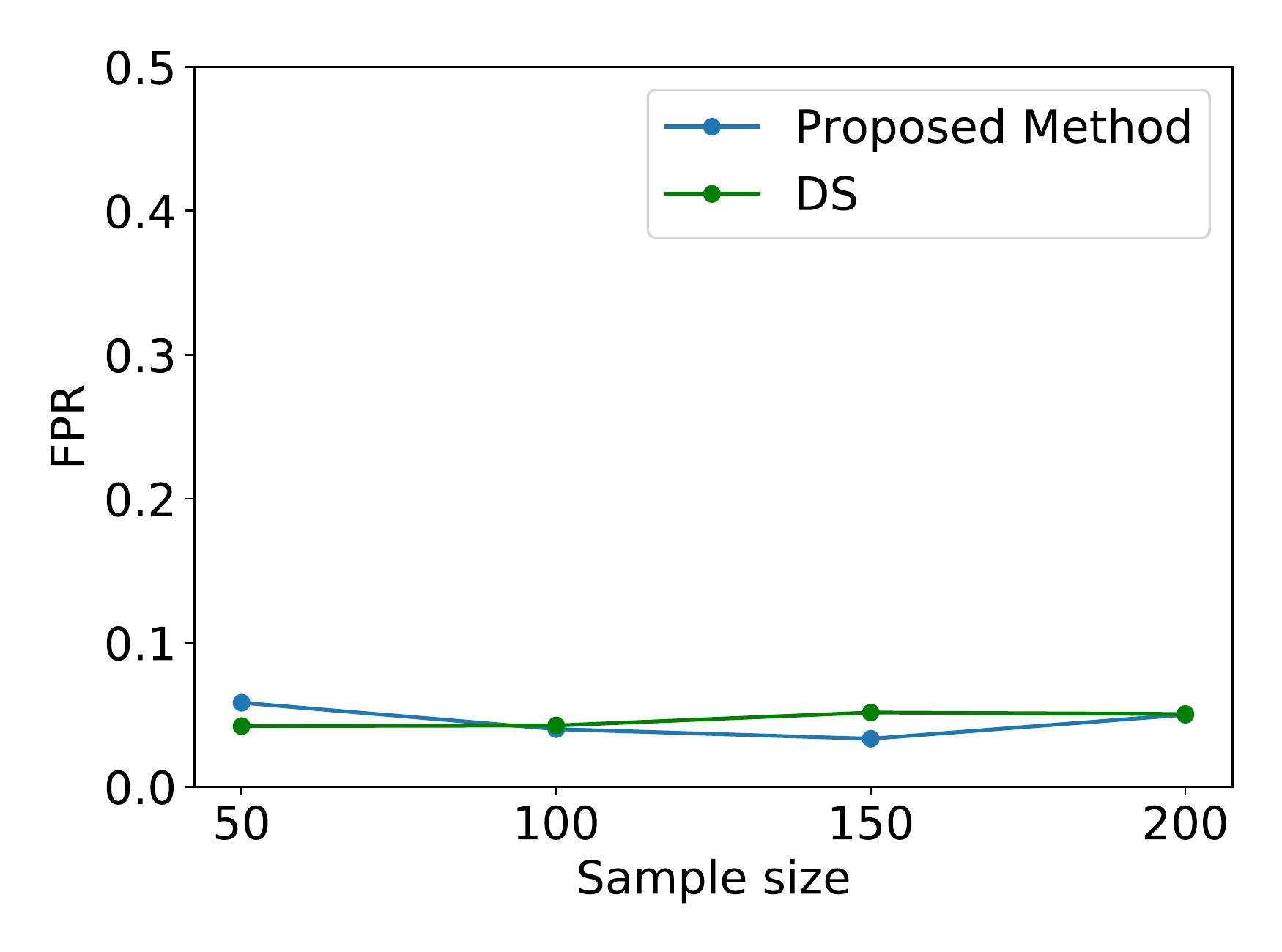}  
  \caption{FPR}
\end{subfigure}
\begin{subfigure}{.325\textwidth}
  \centering
  \includegraphics[width=\linewidth]{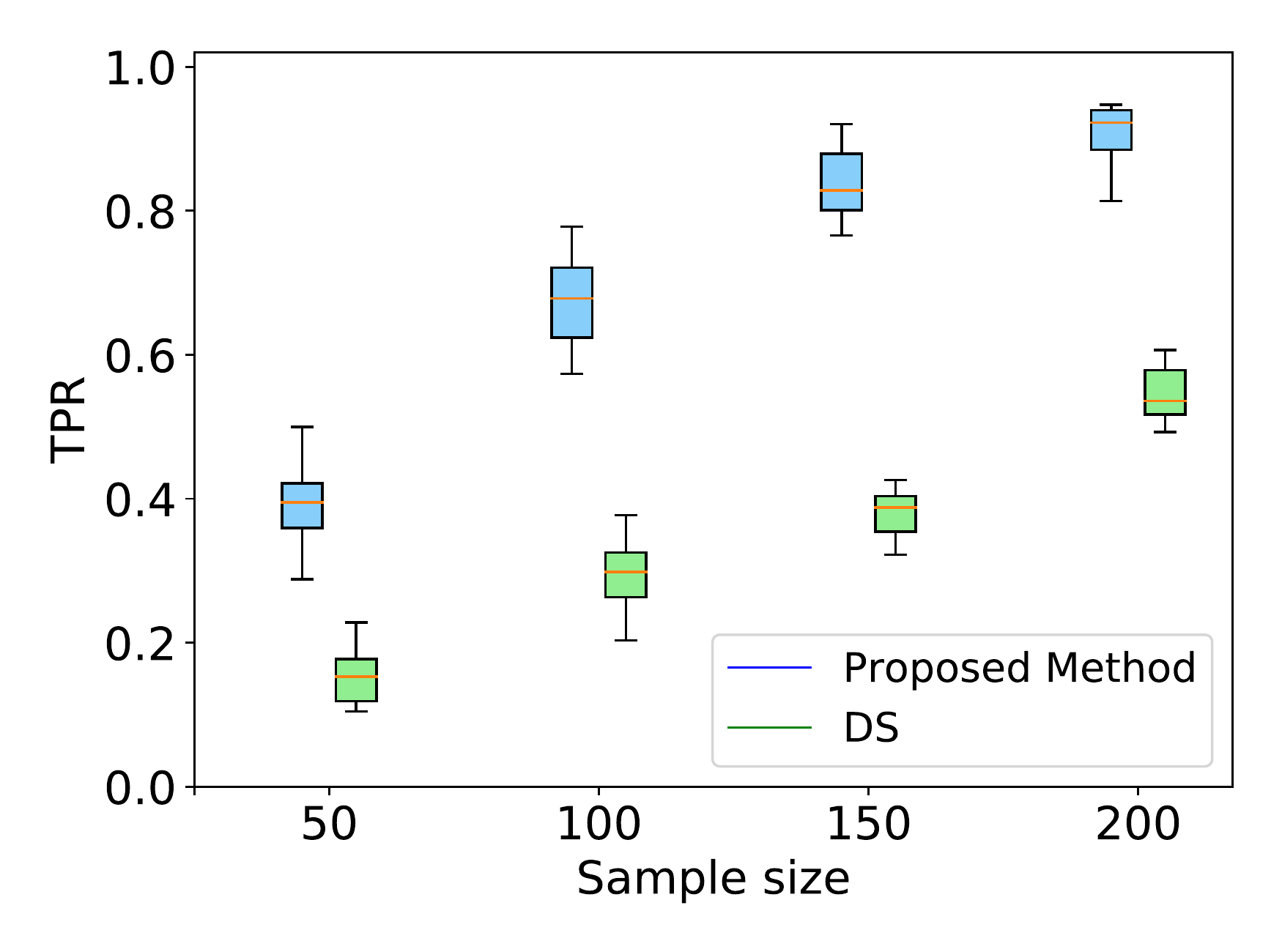}  
  \caption{TPR}
\end{subfigure}
\begin{subfigure}{.325\textwidth}
  \centering
  \includegraphics[width=\linewidth]{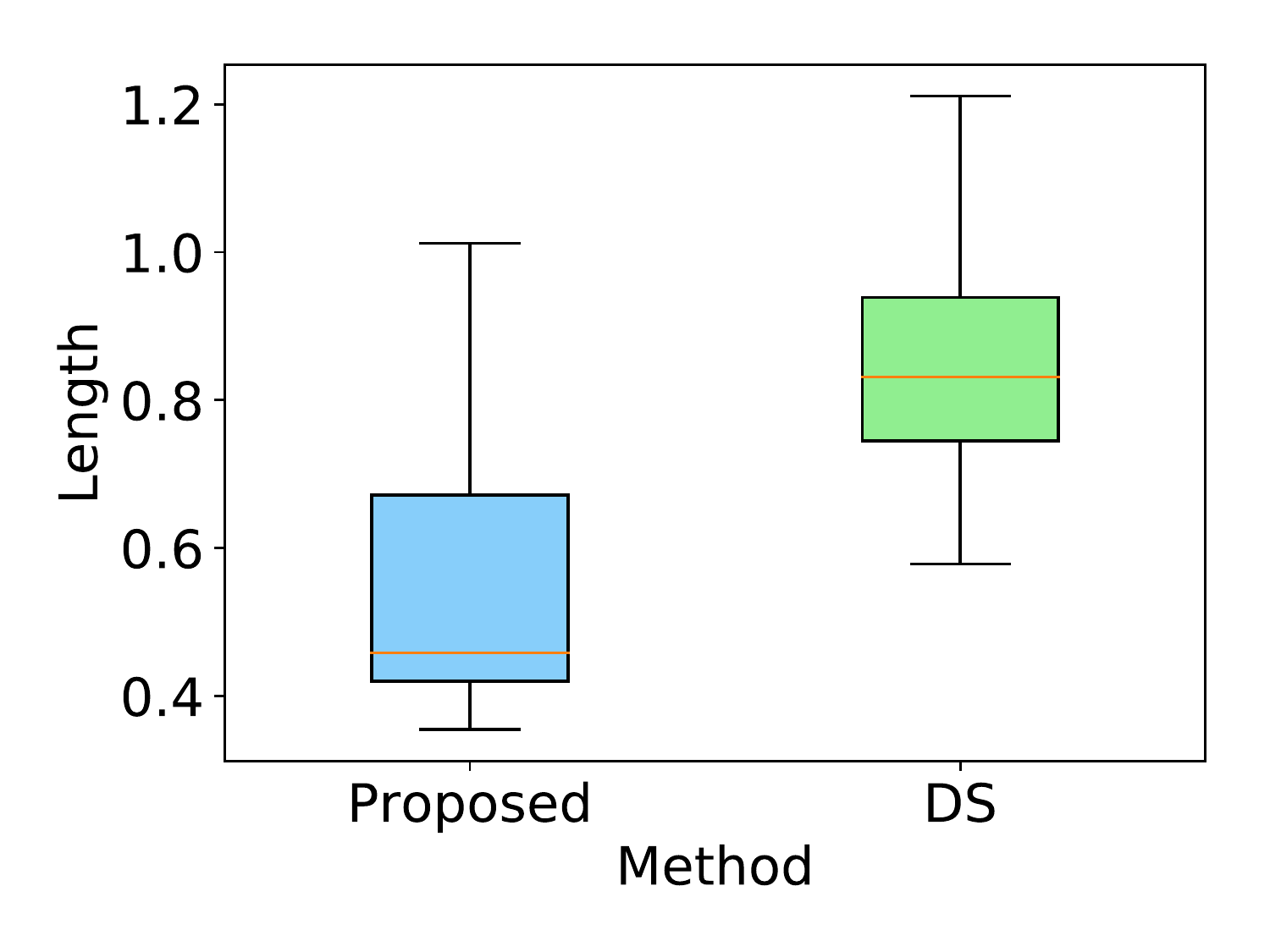}  
  \caption{CI}
\end{subfigure}
\caption{Results of FPR, TPR, and CI for non-negative least squares.}
\label{fig:fpr_tpr_non_negative}

\end{figure}

\paragraph{FPR, TPR, and CI results.} 
The fused lasso results are presented in Figure \ref{fig:fpr_tpr_fused_lasso}. The results for the vanilla lasso, elastic net, non-negative least squares, and Huber regression + $\ell_1$ norm are depicted in Figures \ref{fig:fpr_tpr_lasso}, \ref{fig:fpr_tpr_elastic_net}, \ref{fig:fpr_tpr_non_negative}, and \ref{fig:fpr_tpr_huber_l1}, respectively.
No over-conditioning occurred in the case of the non-negative least squares as we had already restricted the coefficients to be positive.
In summary, although all of the methods could properly control the FPR at a significance level of $\alpha$, 
the proposed method had the highest power among the methods.
The CI results were also consistent with the TPR results.
That is, the shortest CI for the proposed method indicated that it exhibited the highest power.

\begin{figure}[t!]

\begin{subfigure}{.325\textwidth}
  \centering
  \includegraphics[width=\linewidth]{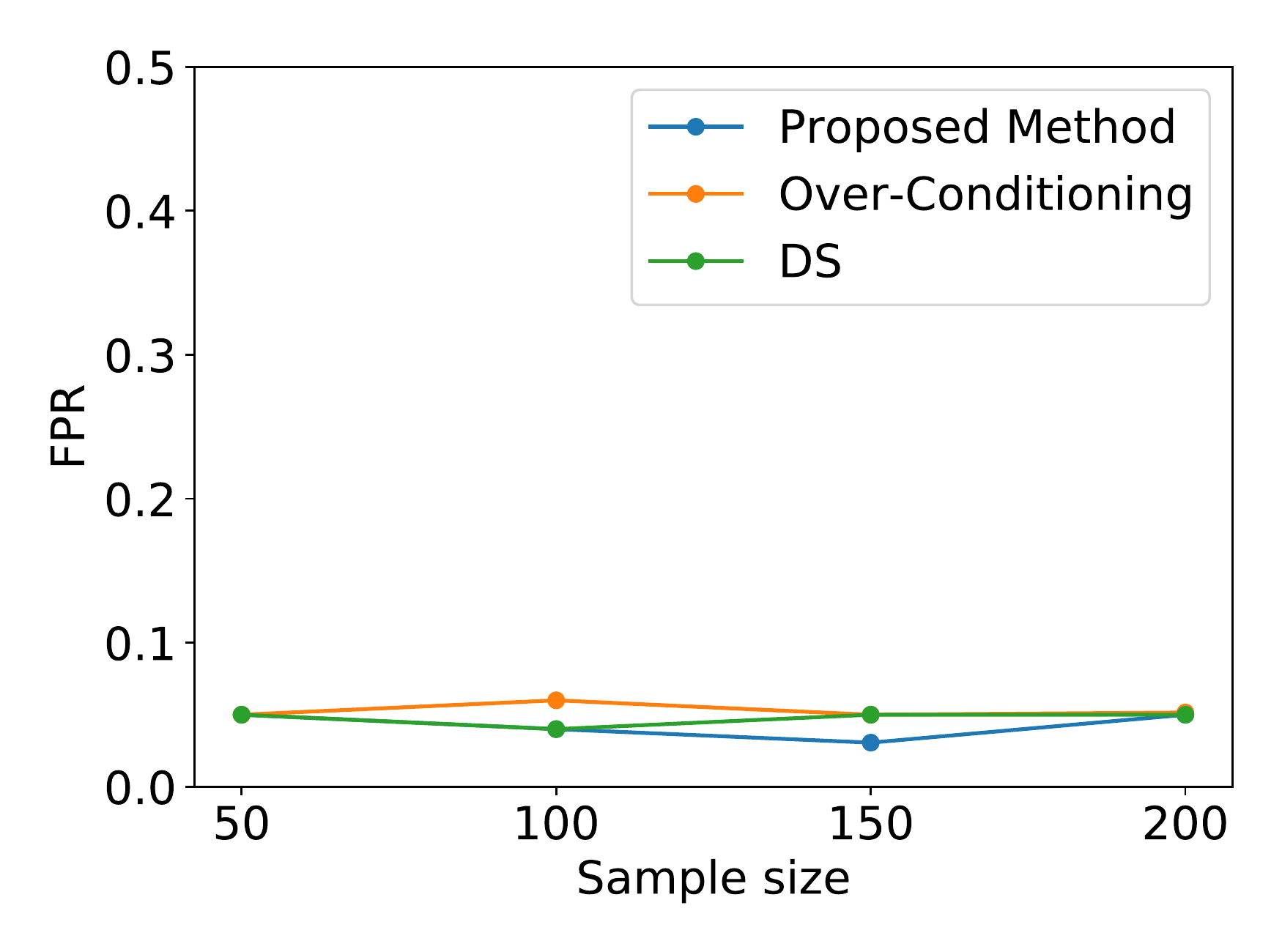}  
  \caption{FPR}
\end{subfigure}
\begin{subfigure}{.325\textwidth}
  \centering
  \includegraphics[width=\linewidth]{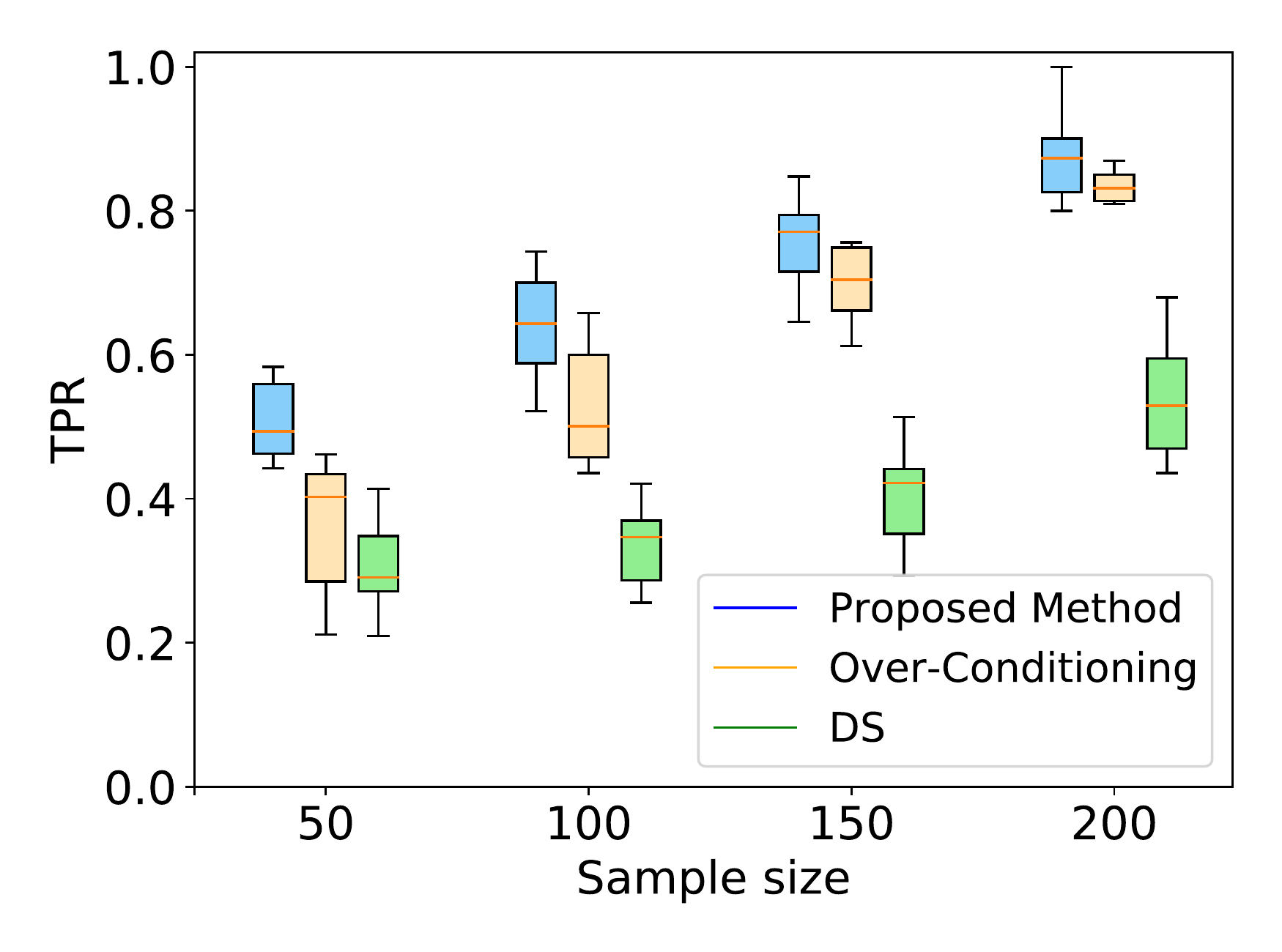}  
  \caption{TPR}
\end{subfigure}
\begin{subfigure}{.325\textwidth}
  \centering
  \includegraphics[width=\linewidth]{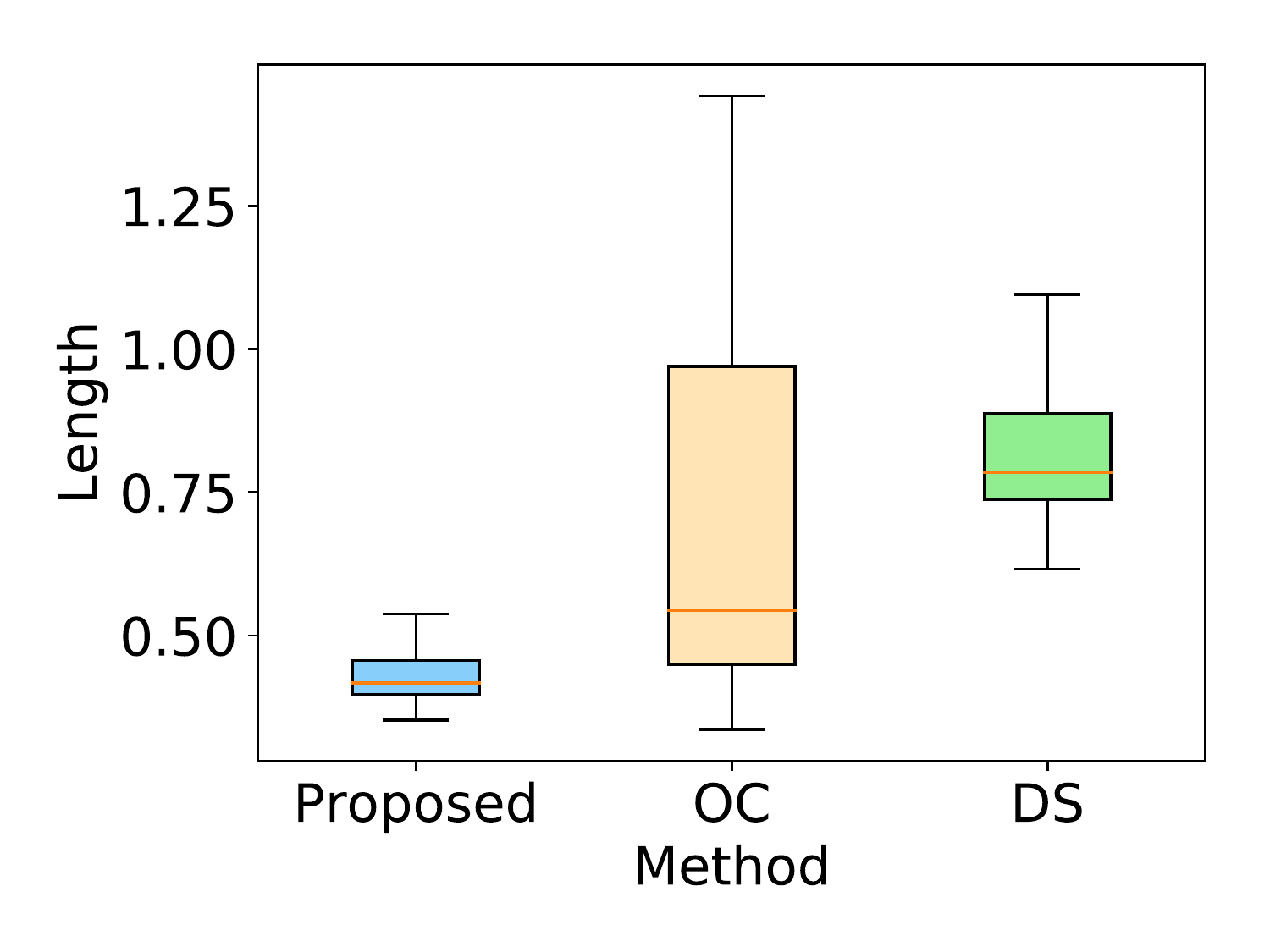}  
  \caption{CI}
\end{subfigure}
\caption{ FPR, TPR, and CI results for Huber regression with $\ell_1$ penalty.}
\label{fig:fpr_tpr_huber_l1}

\end{figure}



\begin{figure}[!t]
\begin{subfigure}{.245\textwidth}
  \centering
  \includegraphics[width=\linewidth]{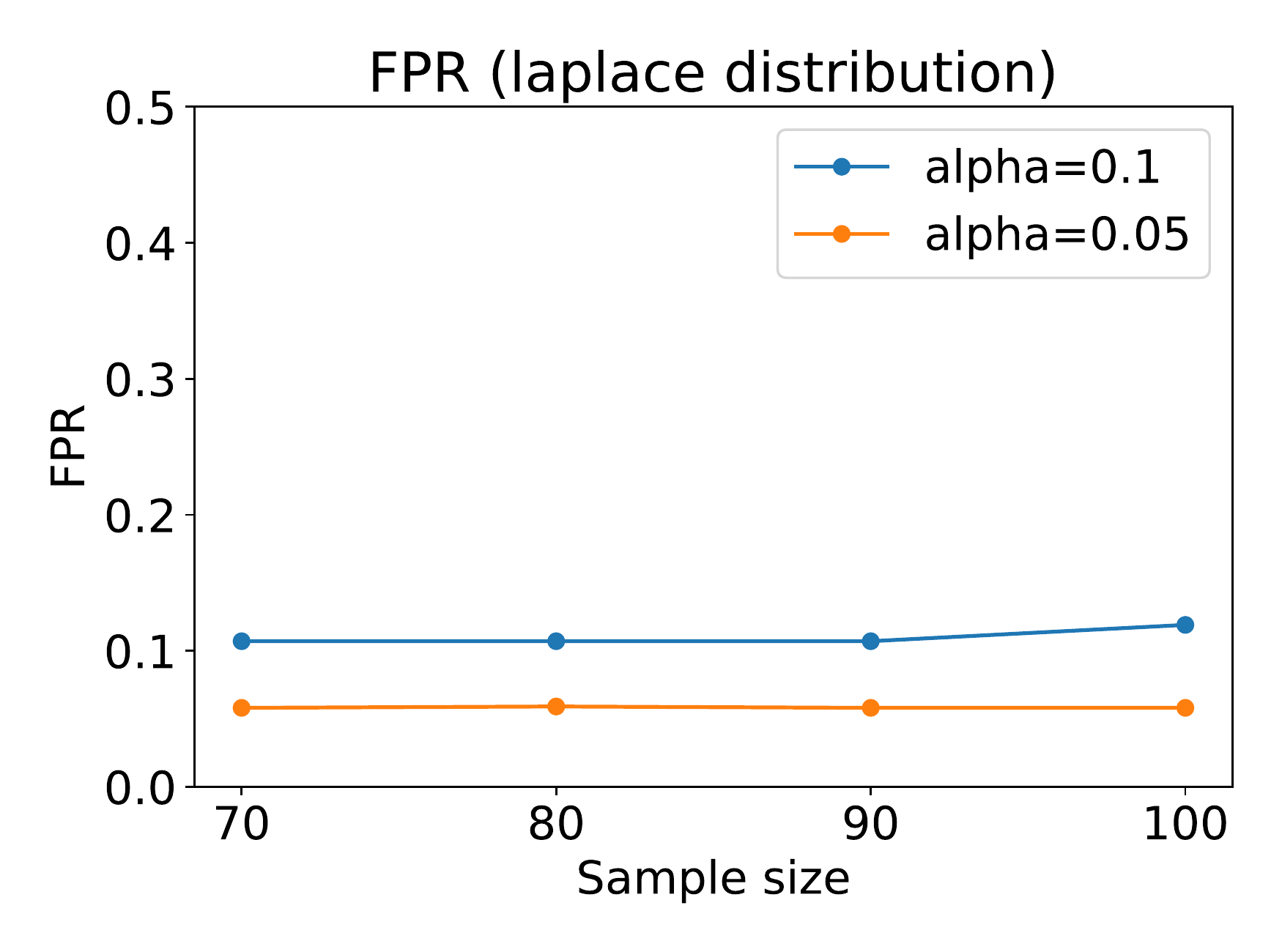}
\end{subfigure}
\begin{subfigure}{.245\textwidth}
  \centering
  \includegraphics[width=\linewidth]{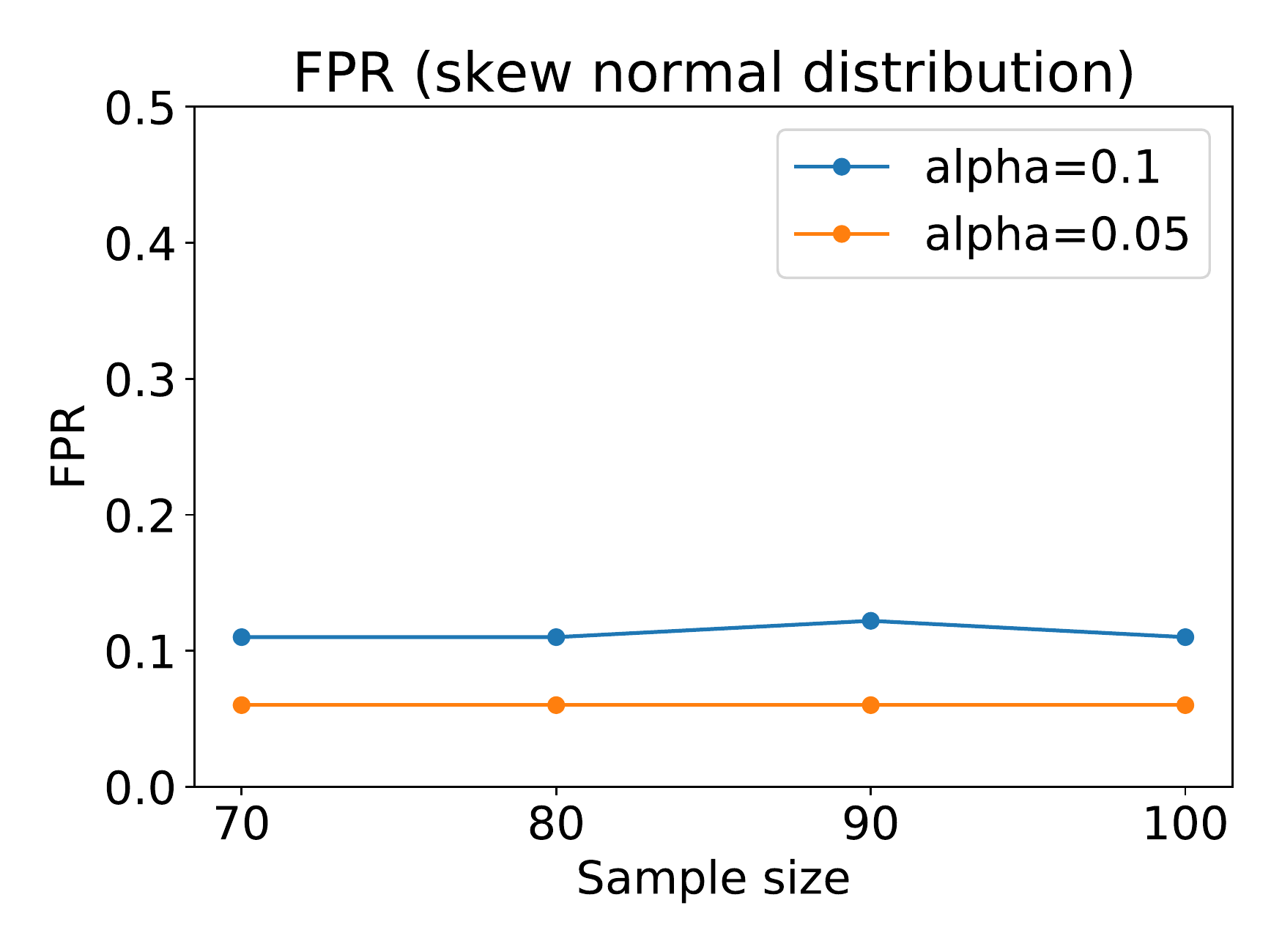}
\end{subfigure}
\begin{subfigure}{.245\textwidth}
  \centering
  \includegraphics[width=\linewidth]{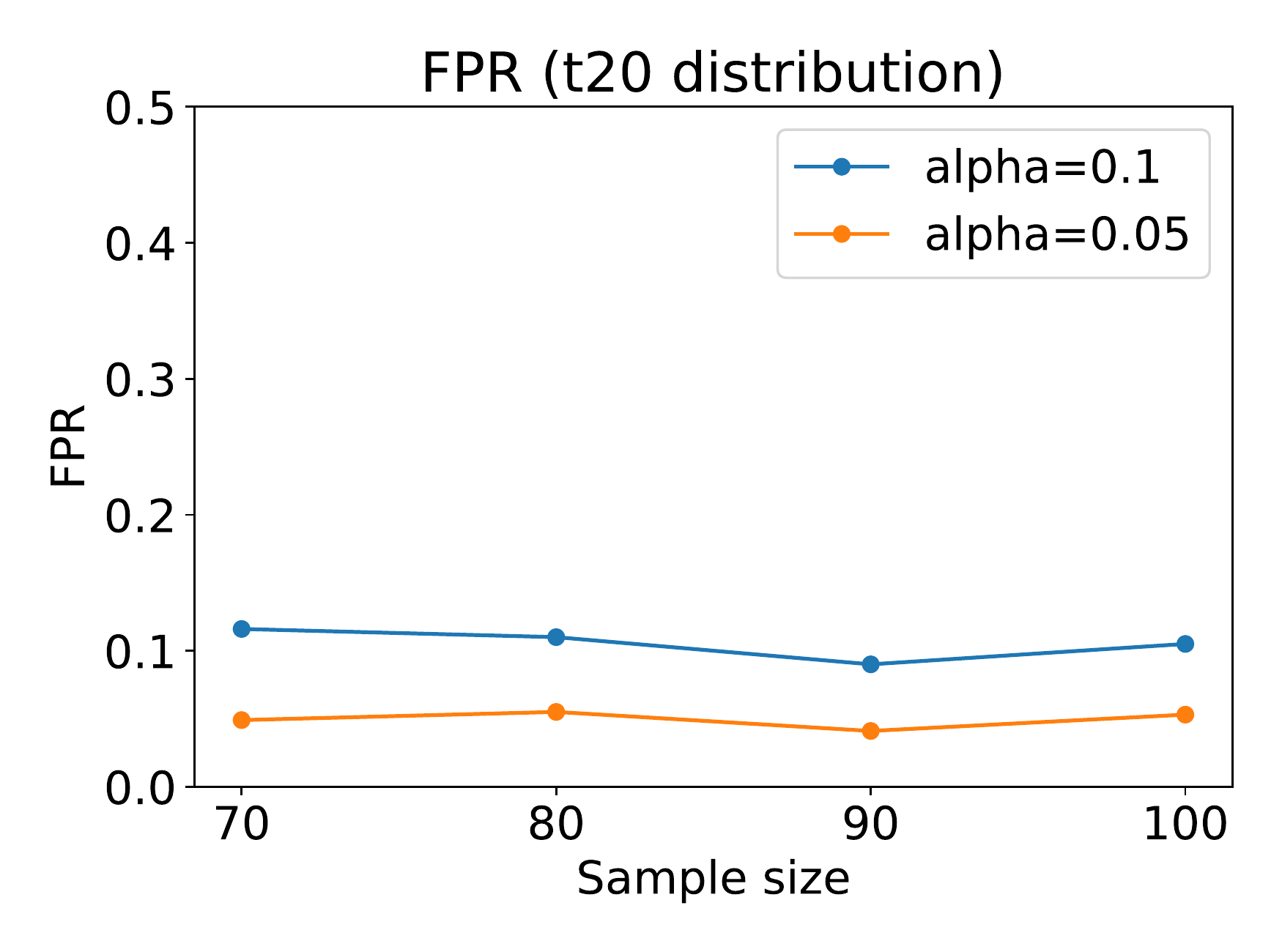}
\end{subfigure}
\begin{subfigure}{.245\textwidth}
  \centering
  \includegraphics[width=\linewidth]{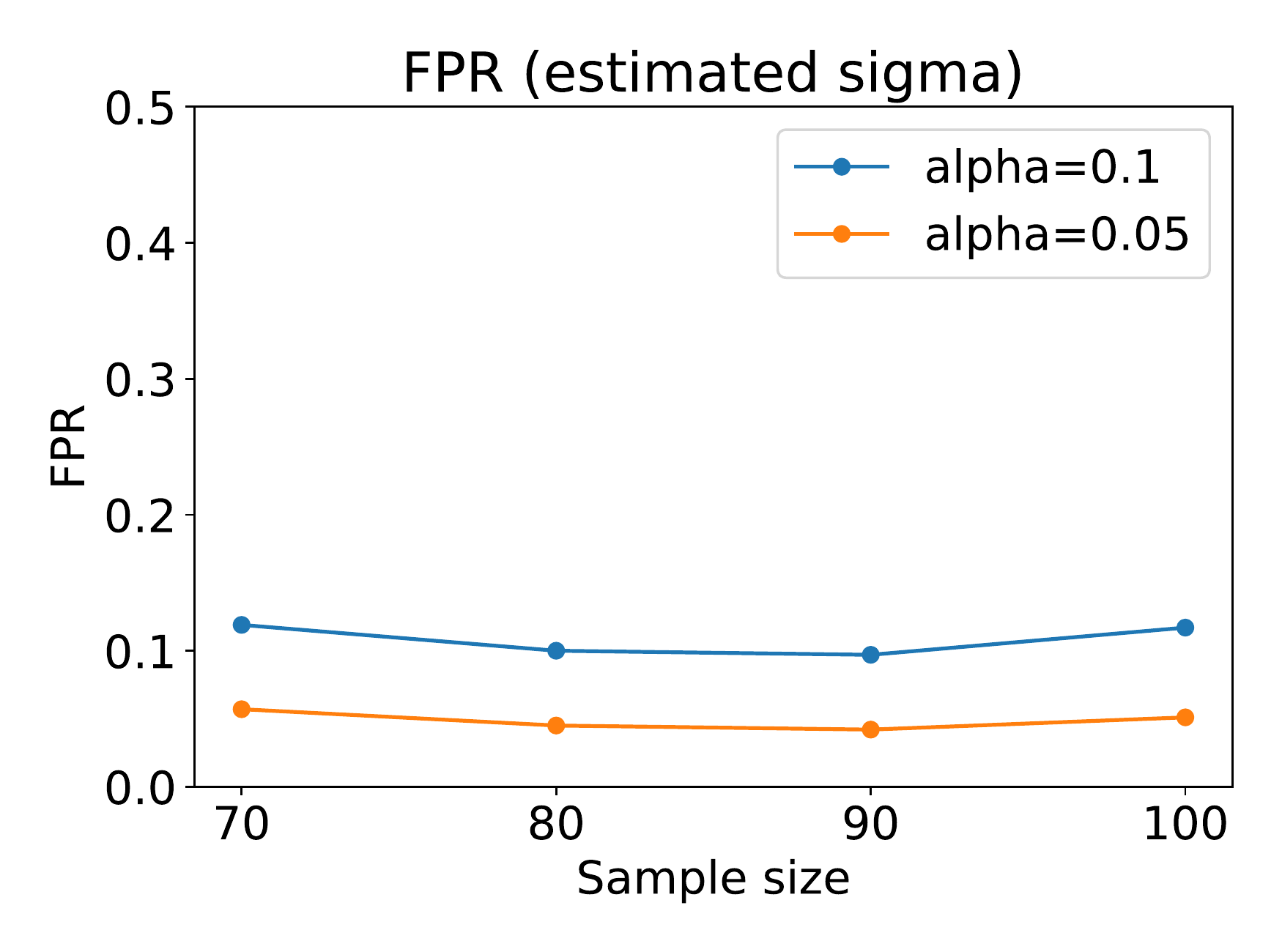}
\end{subfigure}
\caption{Robustness of proposed method for fused lasso.}
\label{fig:robust_fused_lasso}

\vspace*{\floatsep}%

\begin{subfigure}{.245\textwidth}
  \centering
  \includegraphics[width=\linewidth]{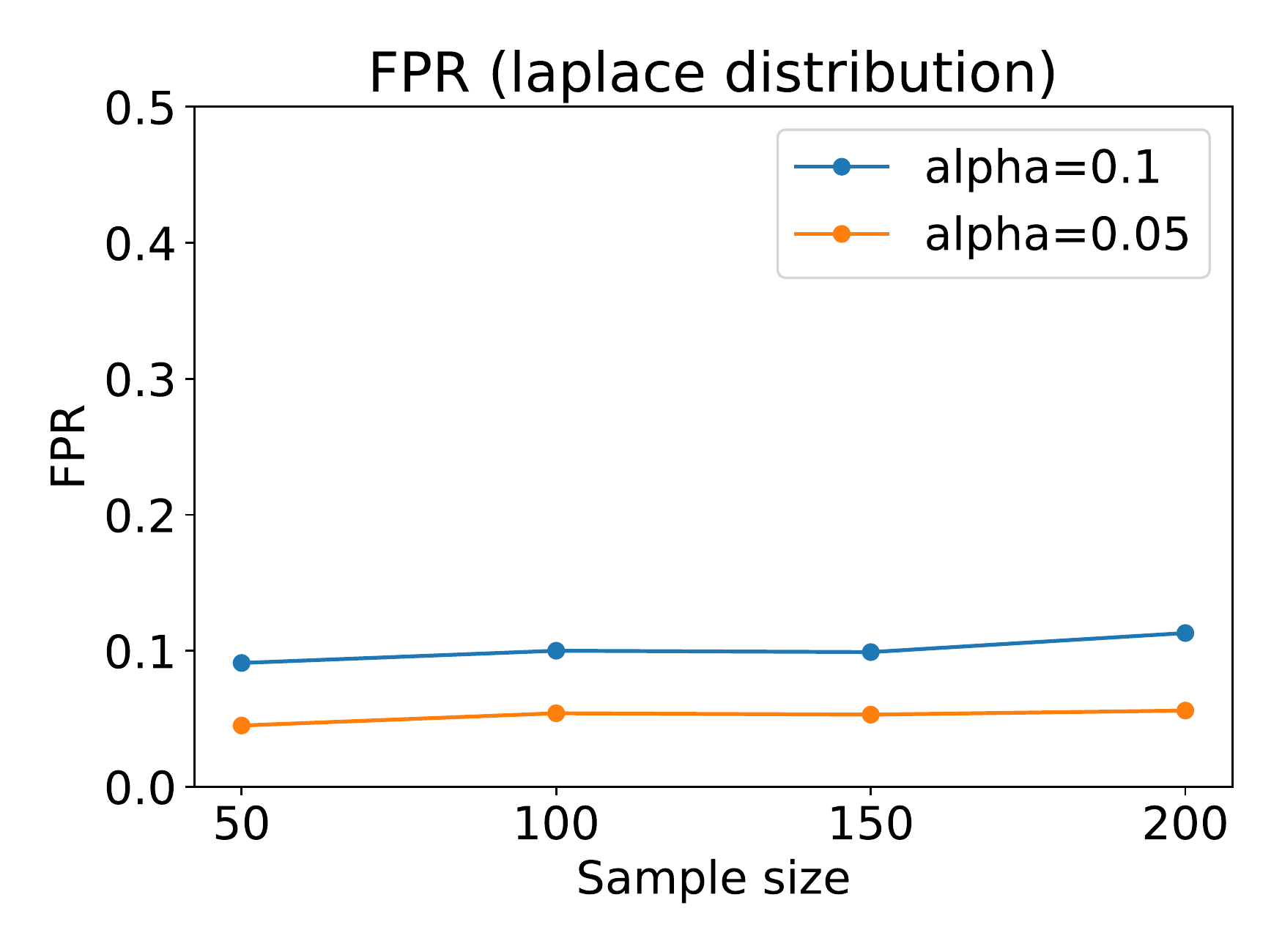}
\end{subfigure}
\begin{subfigure}{.245\textwidth}
  \centering
  \includegraphics[width=\linewidth]{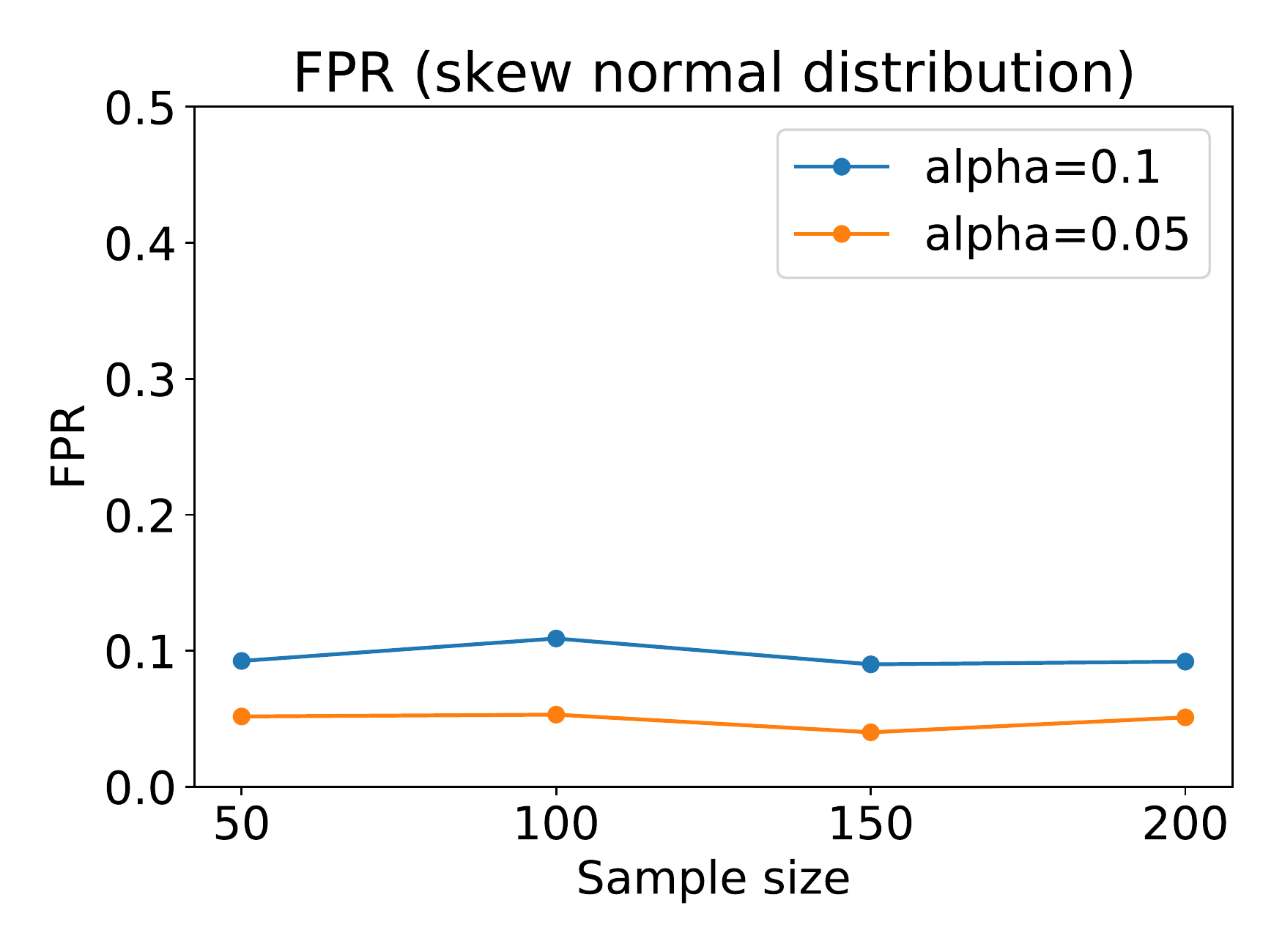}
\end{subfigure}
\begin{subfigure}{.245\textwidth}
  \centering
  \includegraphics[width=\linewidth]{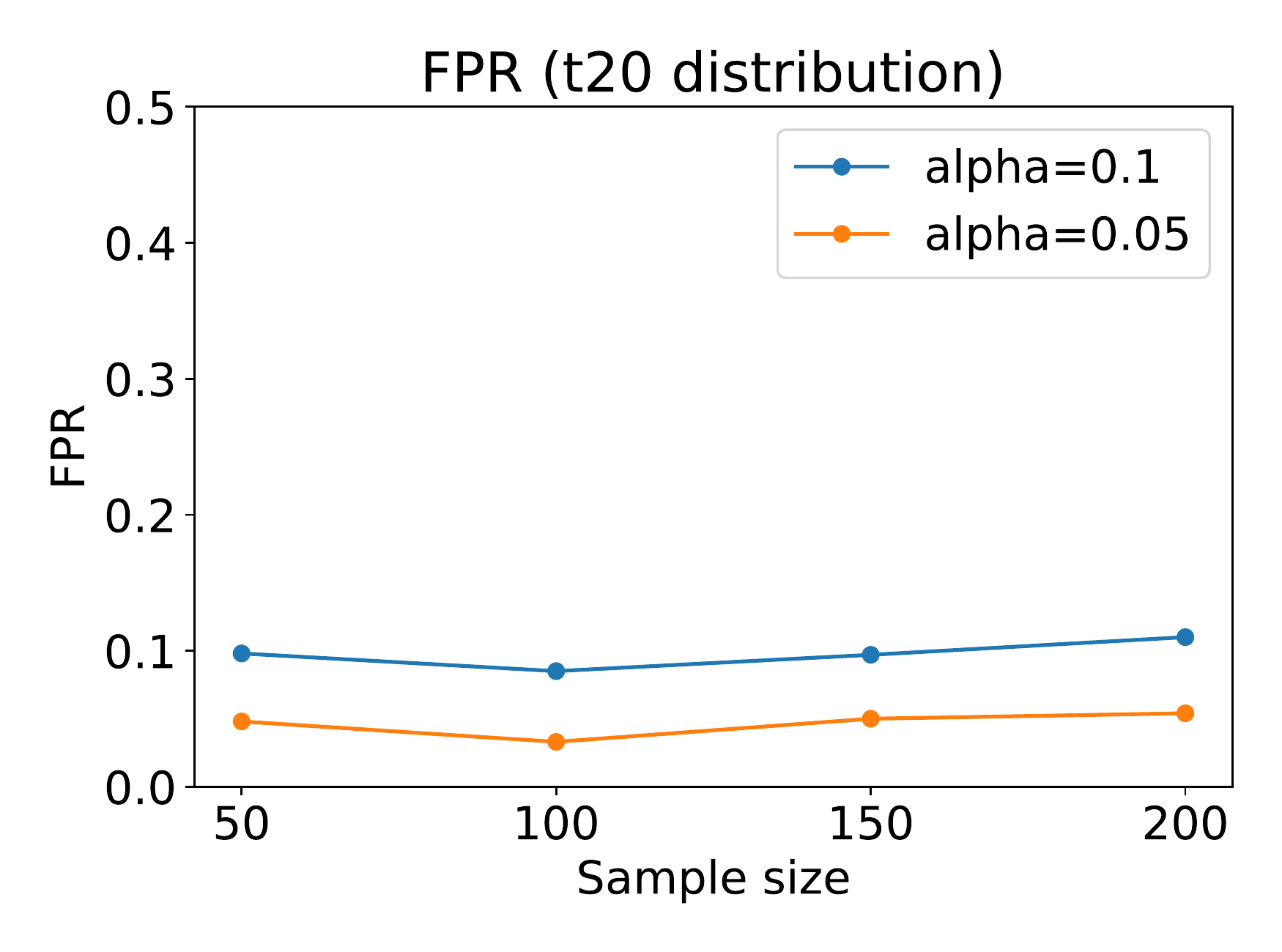}
\end{subfigure}
\begin{subfigure}{.245\textwidth}
  \centering
  \includegraphics[width=\linewidth]{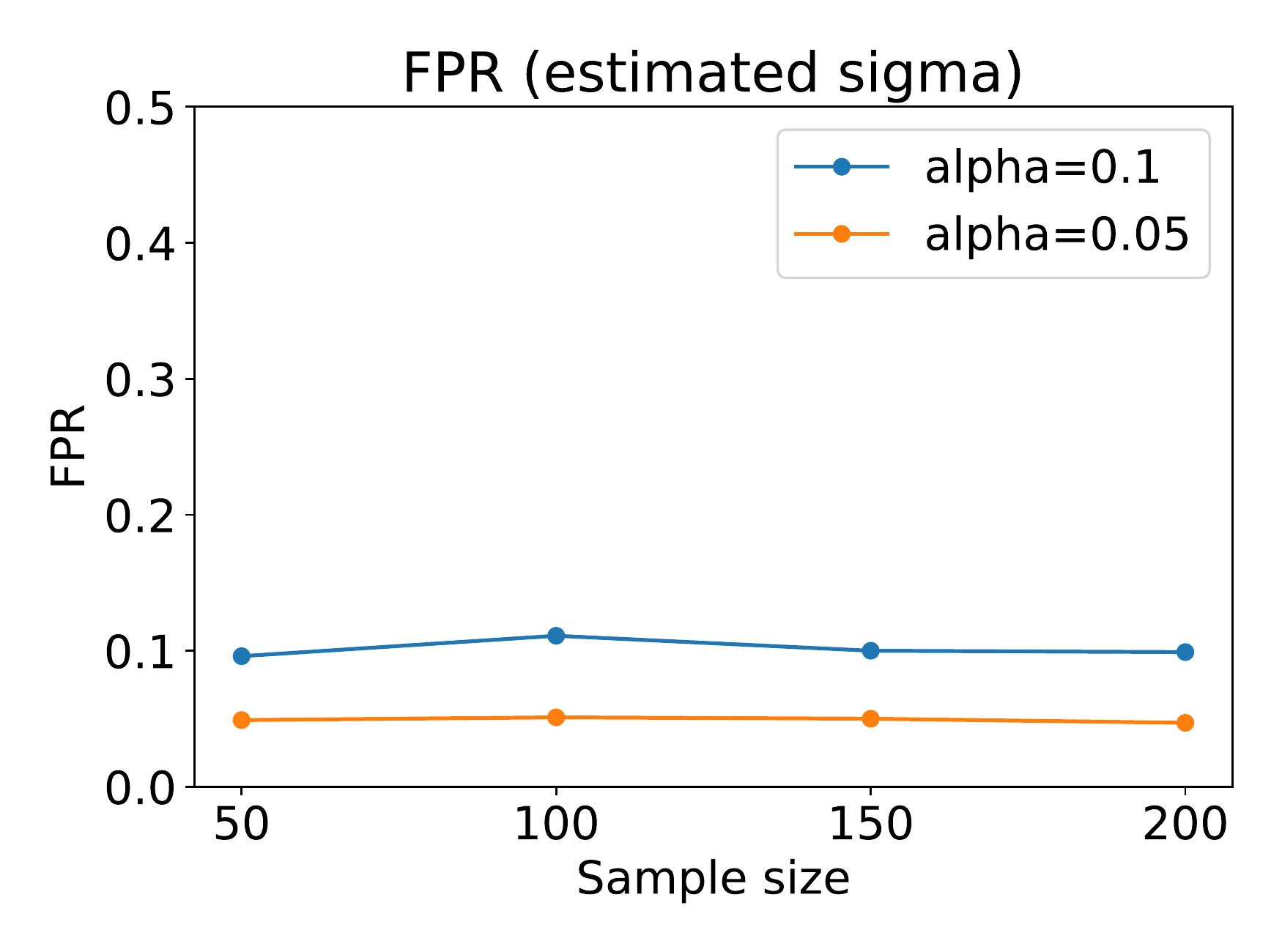}
\end{subfigure}
\caption{Robustness of proposed method for vanilla lasso.}
\label{fig:robust_lasso}

\vspace*{\floatsep}%

\begin{subfigure}{.245\textwidth}
  \centering
  \includegraphics[width=\linewidth]{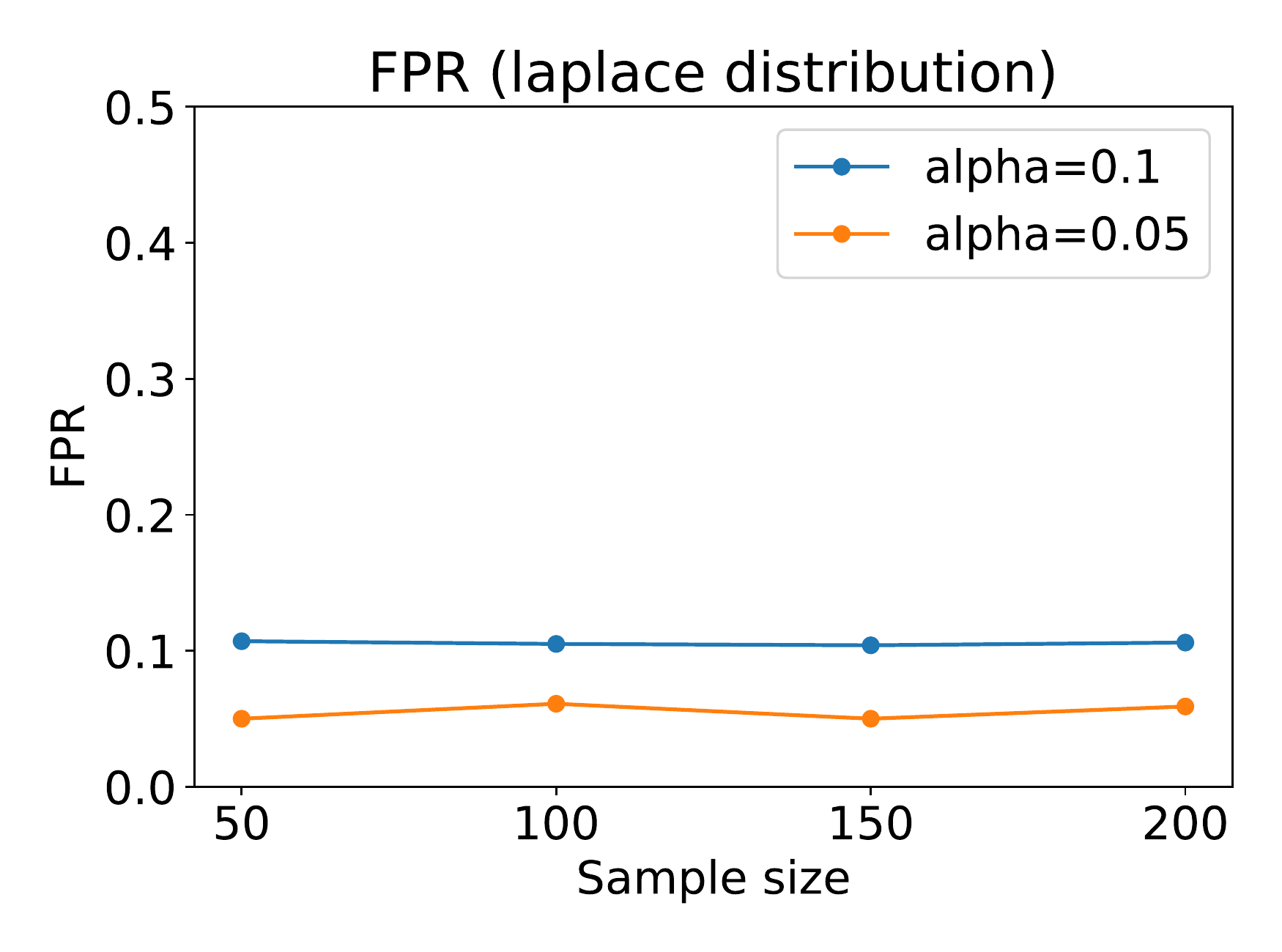}
\end{subfigure}
\begin{subfigure}{.245\textwidth}
  \centering
  \includegraphics[width=\linewidth]{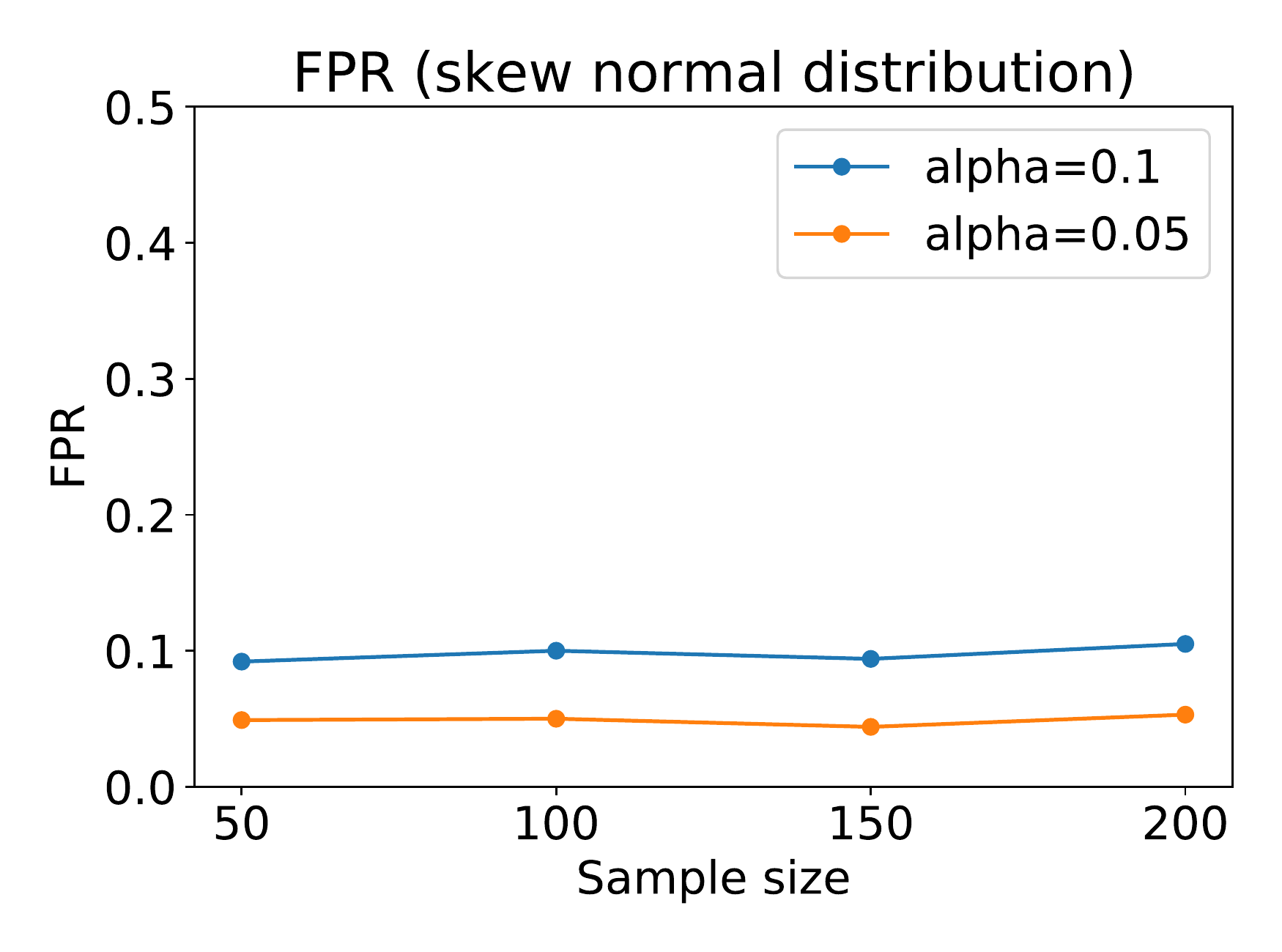}
\end{subfigure}
\begin{subfigure}{.245\textwidth}
  \centering
  \includegraphics[width=\linewidth]{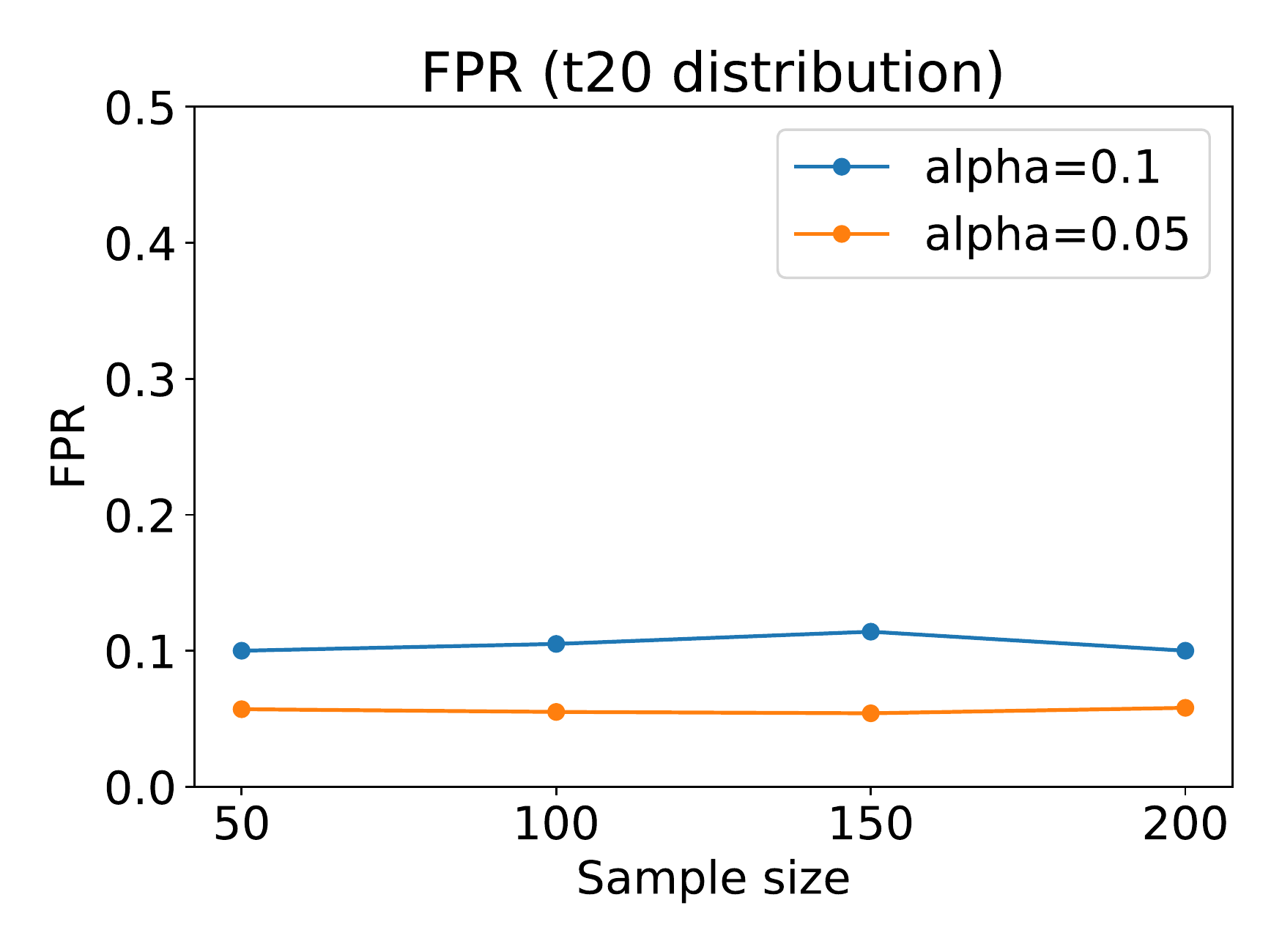}
\end{subfigure}
\begin{subfigure}{.245\textwidth}
  \centering
  \includegraphics[width=\linewidth]{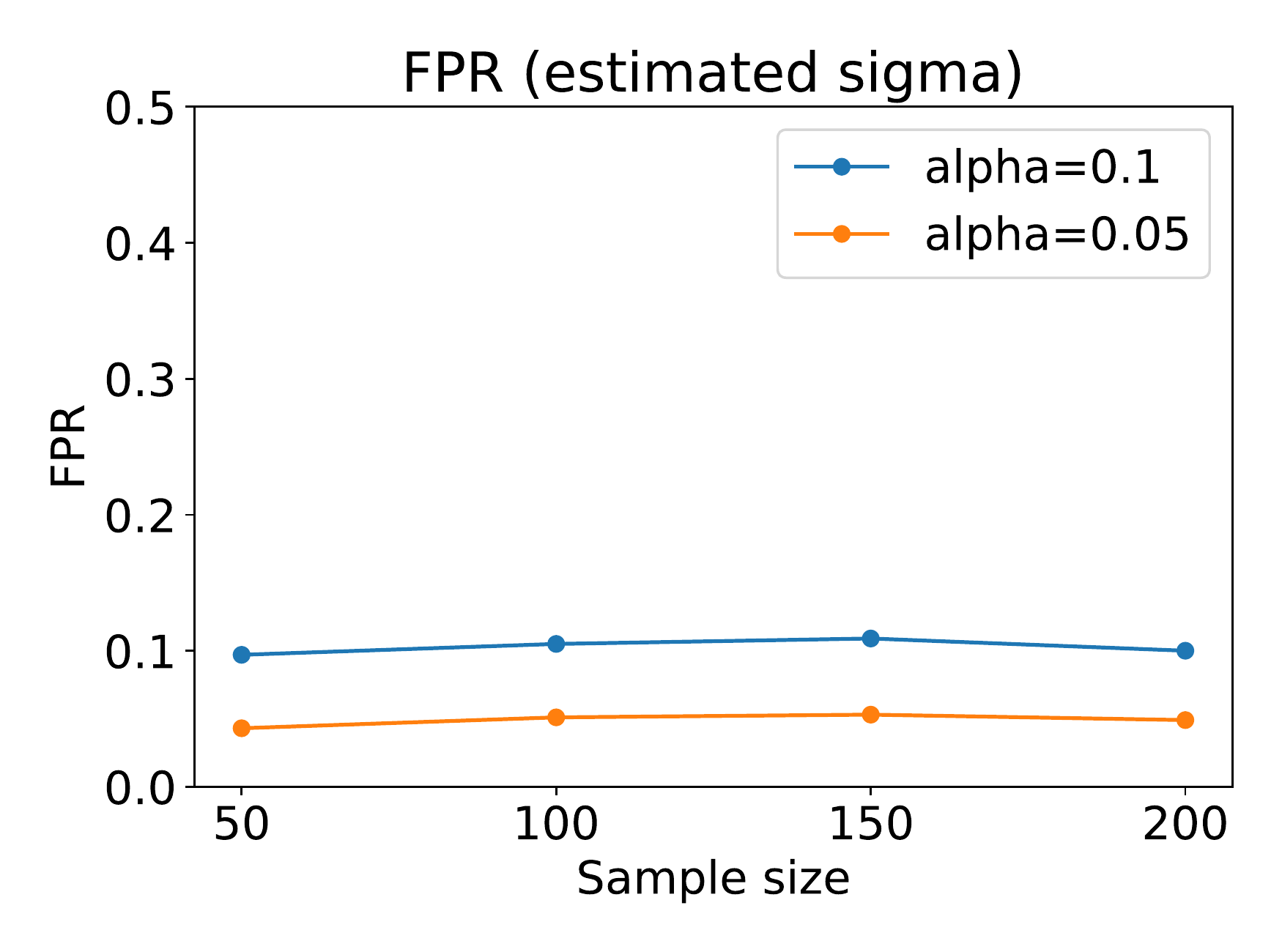}
\end{subfigure}
\caption{Robustness of proposed method for elastic net.}
\label{fig:robust_elastic_net}

\vspace*{\floatsep}%

\begin{subfigure}{.245\textwidth}
  \centering
  \includegraphics[width=\linewidth]{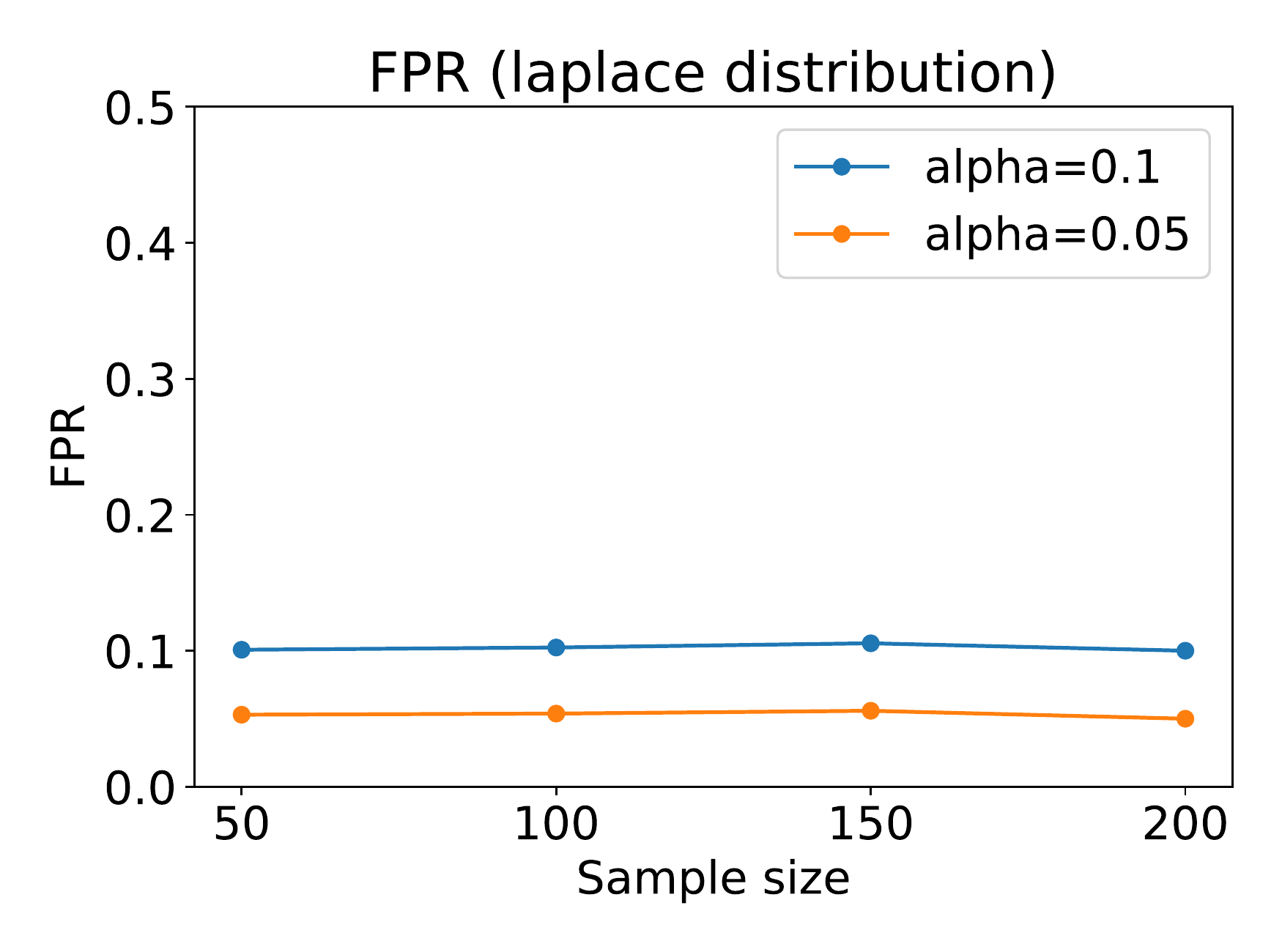}
\end{subfigure}
\begin{subfigure}{.245\textwidth}
  \centering
  \includegraphics[width=\linewidth]{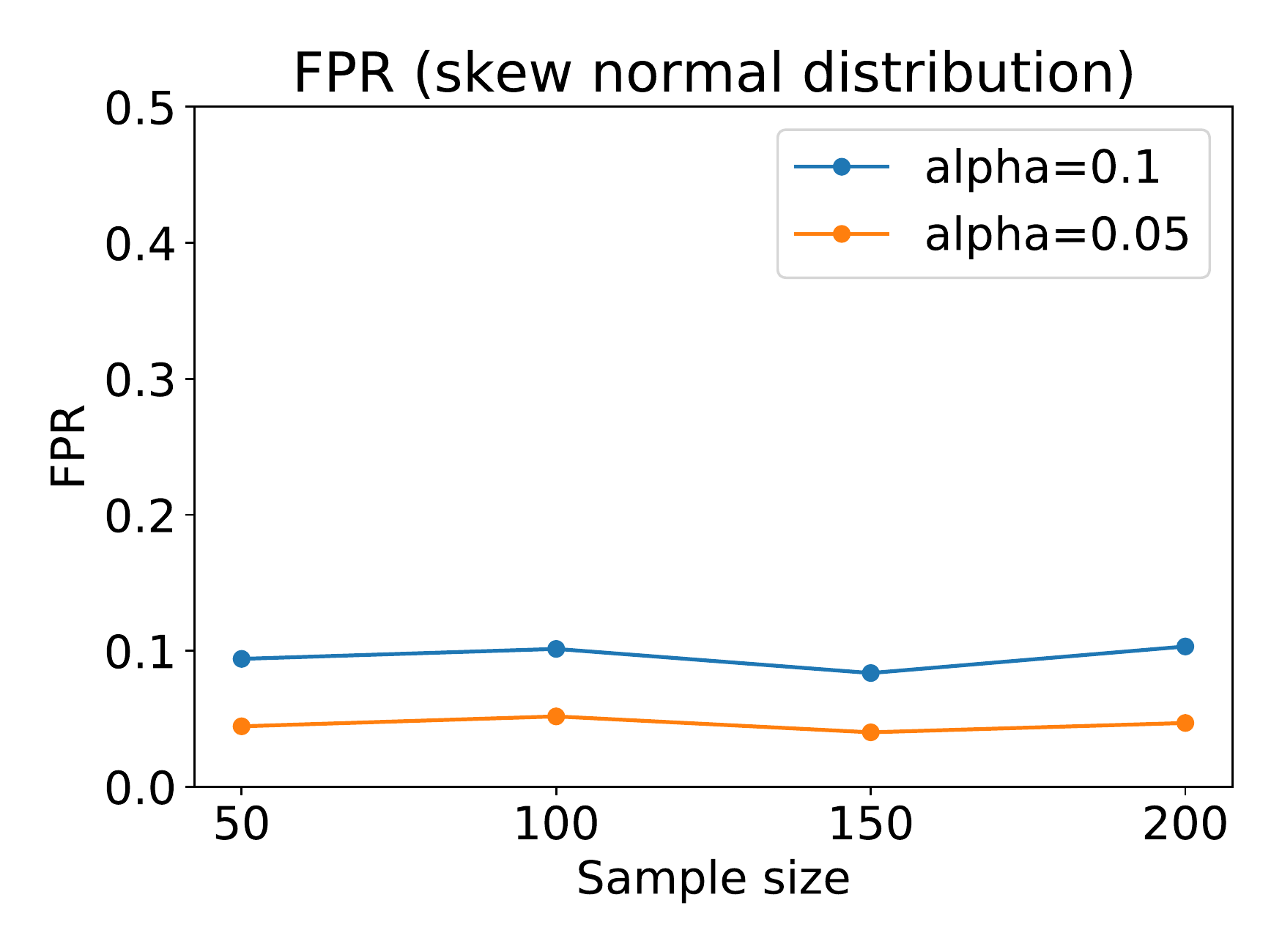}
\end{subfigure}
\begin{subfigure}{.245\textwidth}
  \centering
  \includegraphics[width=\linewidth]{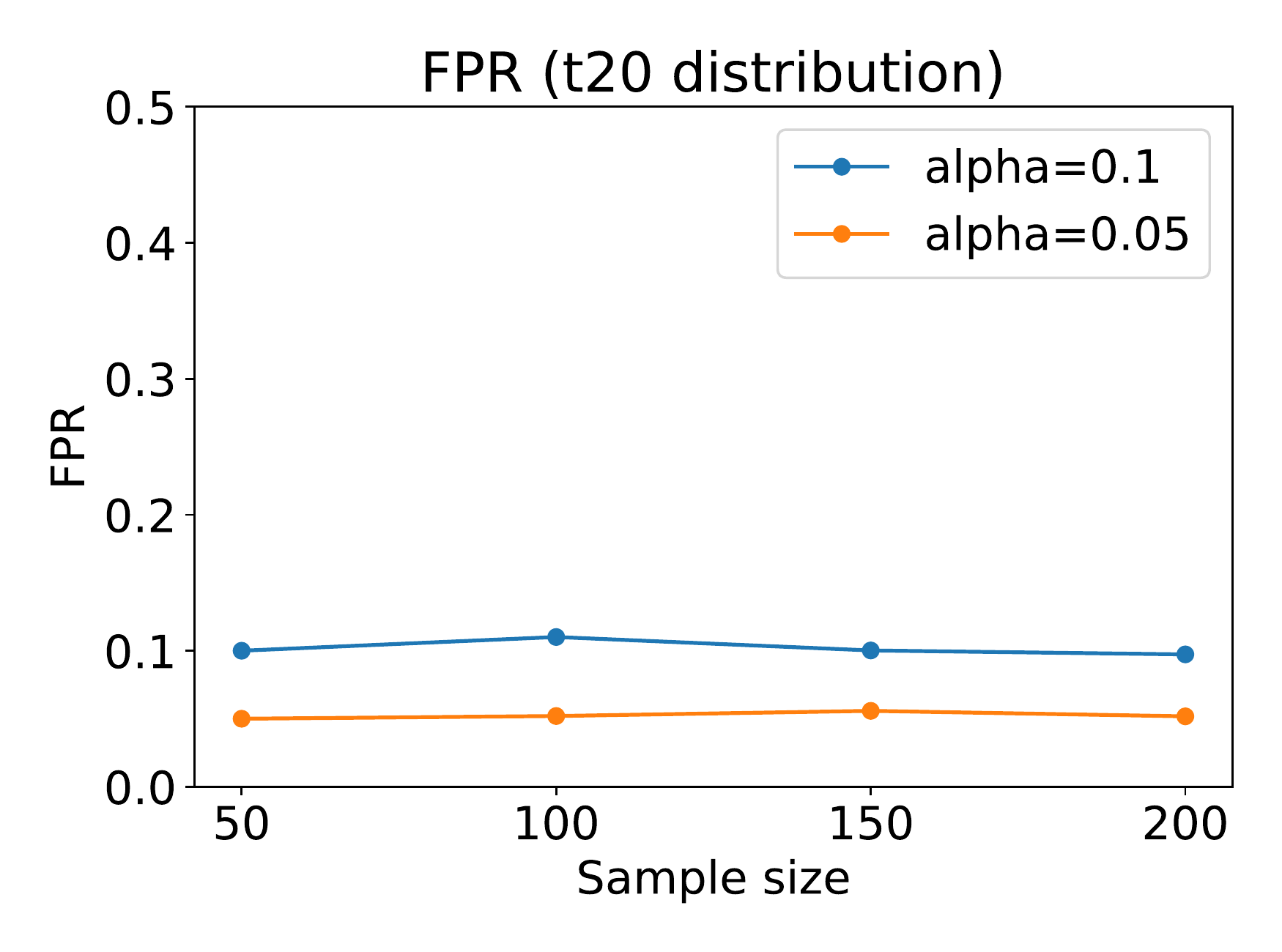}
\end{subfigure}
\begin{subfigure}{.245\textwidth}
  \centering
  \includegraphics[width=\linewidth]{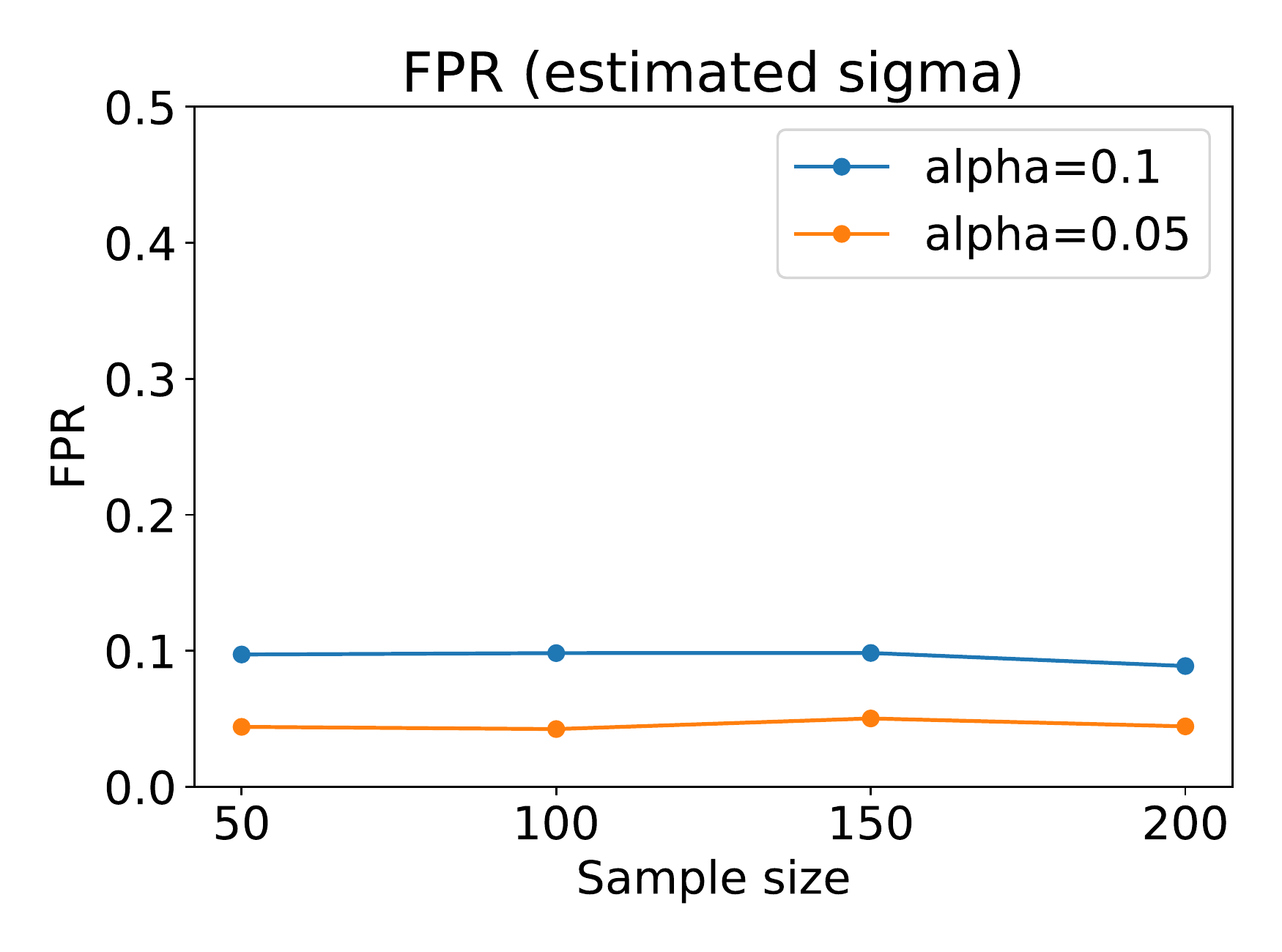}
\end{subfigure}
\caption{Robustness of proposed method for non-negative least squares.}
\label{fig:robust_non_negative}

\end{figure}


\begin{figure}[!t]

\begin{subfigure}{.245\textwidth}
  \centering
  \includegraphics[width=\linewidth]{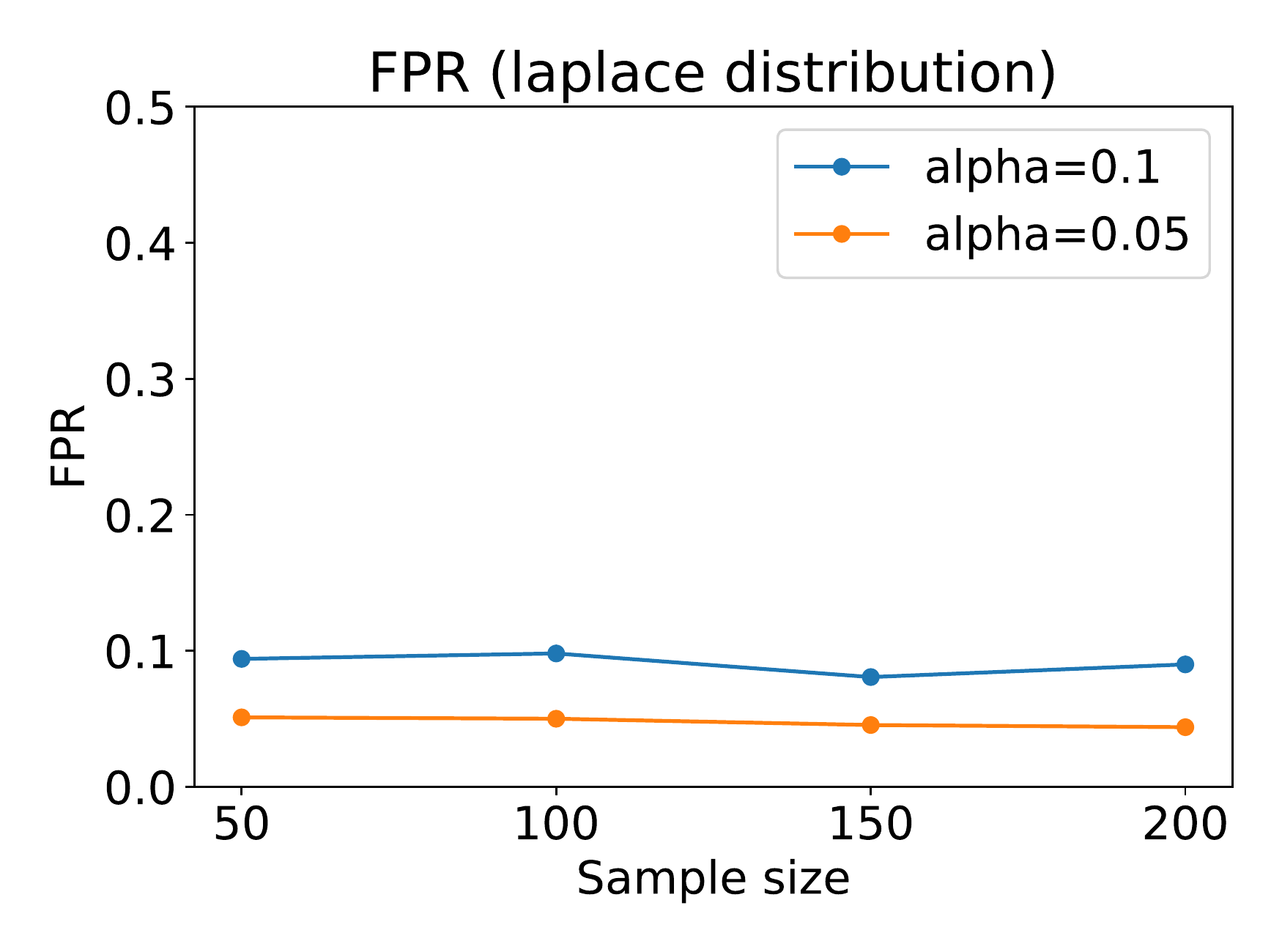}
\end{subfigure}
\begin{subfigure}{.245\textwidth}
  \centering
  \includegraphics[width=\linewidth]{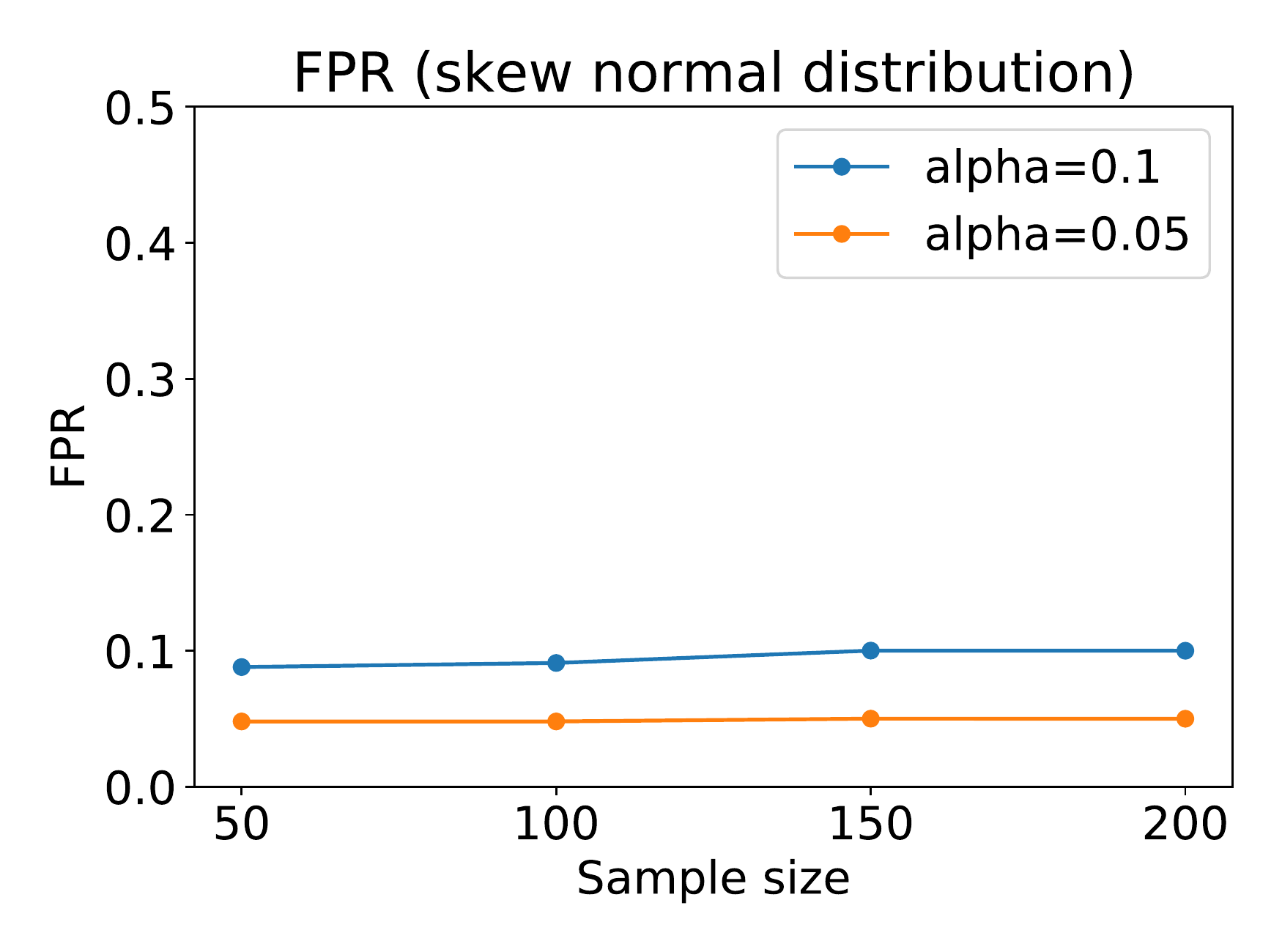}
\end{subfigure}
\begin{subfigure}{.245\textwidth}
  \centering
  \includegraphics[width=\linewidth]{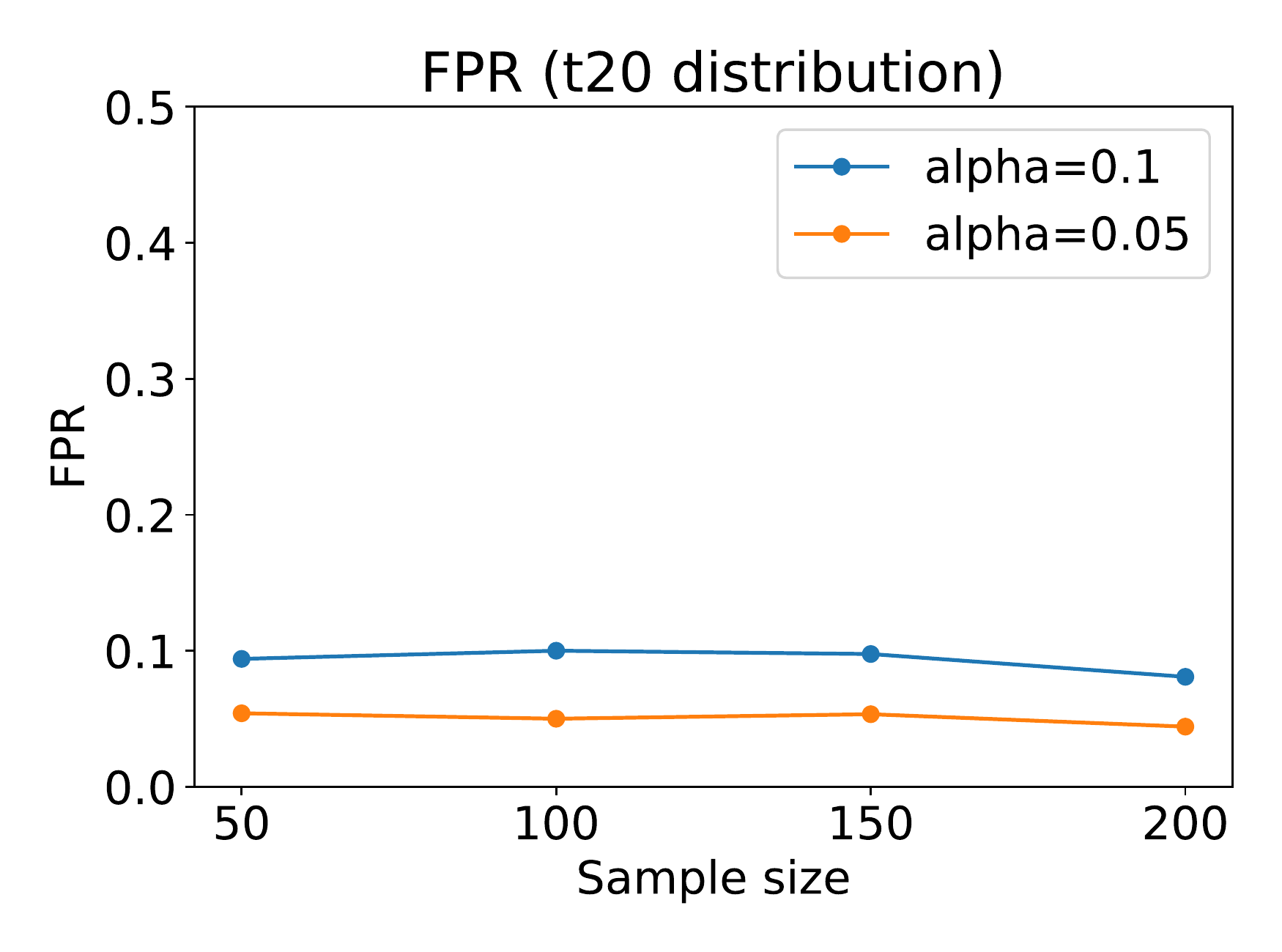}
\end{subfigure}
\begin{subfigure}{.245\textwidth}
  \centering
  \includegraphics[width=\linewidth]{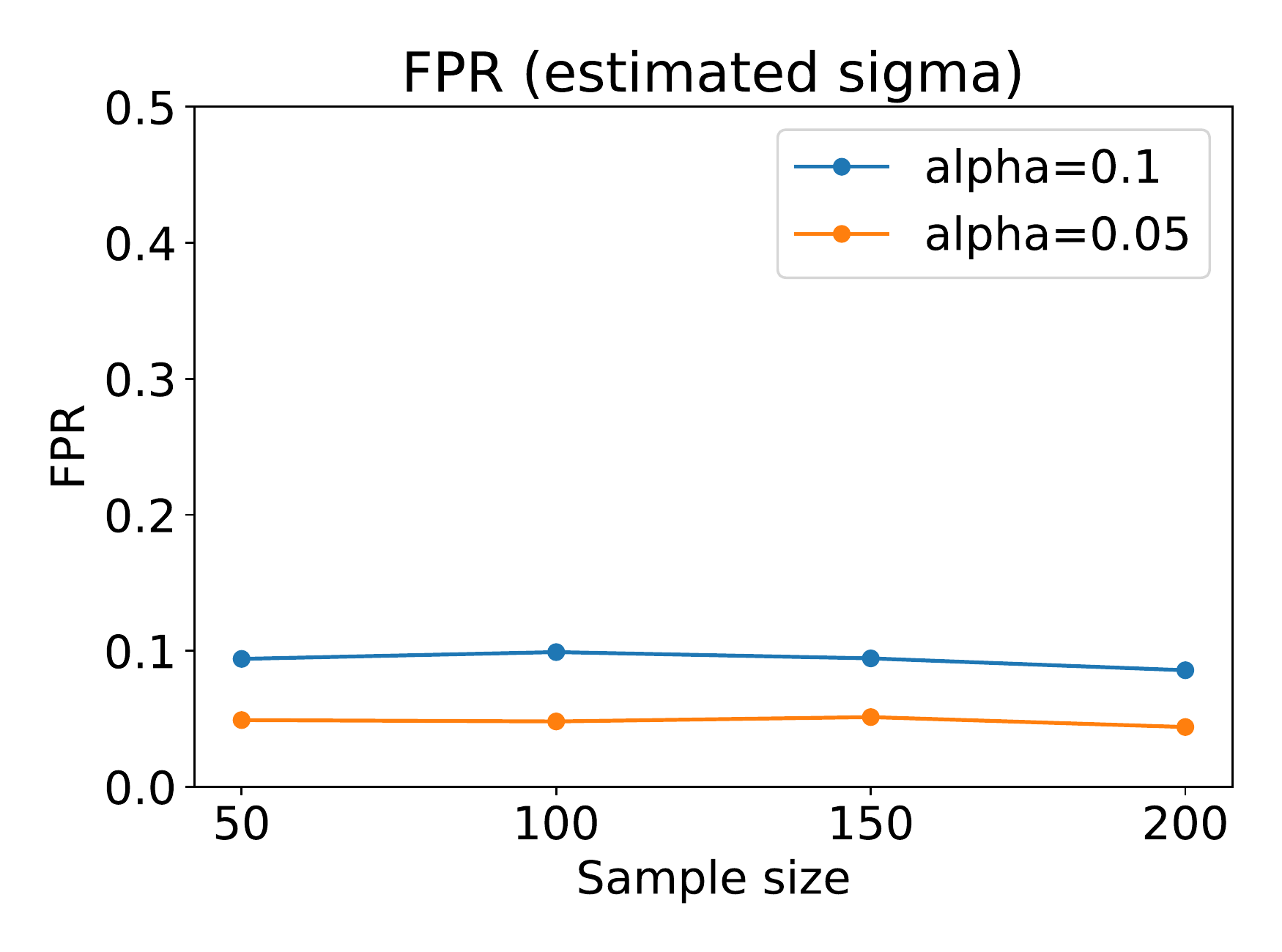}
\end{subfigure}
\caption{Robustness of proposed method for Huber regression with $\ell_1$ penalty.}
\label{fig:robust_huber_l1}
\end{figure}


\paragraph{Robustness of proposed method in terms of FPR control.} We demonstrated the robustness of our method in terms of FPR control by considering the following cases:

$\bullet$ Non-normal noise:  We considered the noise following the Laplace distribution, skew normal distribution (skewness coefficient: 10), and $t_{20}$ distribution.

$\bullet$ Unknown $\sigma^2$: We considered the case in which the variance was estimated from the data.

We generated $n$ outcomes: $y_i = \bm x_i^\top \bm \beta + \veps_i$, 
$i = 1, ..., n$, 
where 
$p = 5, \bm x_i \sim \NN(0, I_p)$, 
and $\veps_i$ follows a Laplace distribution, skew normal distribution, or $t_{20}$ distribution with a zero mean and the standard deviation set to 1.
In the case of the estimated $\sigma^2$, $\veps_i \sim \NN(0, 1)$.
We set all elements of $\bm \beta$ to 0 and set $\lambda = 0.5$.
For every case, we ran 1,200 trials for each $n \in \{50, 100, 150, 200\}$.
We confirmed that our method maintained high performance on FPR control. 
The results are presented in Figures \ref{fig:robust_fused_lasso}, \ref{fig:robust_lasso}, \ref{fig:robust_elastic_net}, \ref{fig:robust_non_negative}, and \ref{fig:robust_huber_l1}.

\paragraph{Results when accounting for CV selection event.} 
We conducted a comparison of the TPRs between the proposed method and the OC version that was proposed in \cite{loftus2015selective} when $\lambda$ was selected from $\Lambda_1 = \{2^{-1}, 2^{0}, 2^{1}\}$ or $\Lambda_2 = \{2^{-10}, 2^{-9}, ..., 2^{9}, 2^{10}\}$. 
The results are presented in Figure \ref{fig:fig_tpr_cv_para_oc}.
The existing method had lower power because additional conditioning on all intermediate models was required, which was also discussed in \citet{markovic2017unifying}. 
Our method exhibited higher power as we could characterize the minimum conditioning amount.

\begin{figure}[!t]
\begin{subfigure}{.495\linewidth}
  \centering
  \includegraphics[width=.7\linewidth]{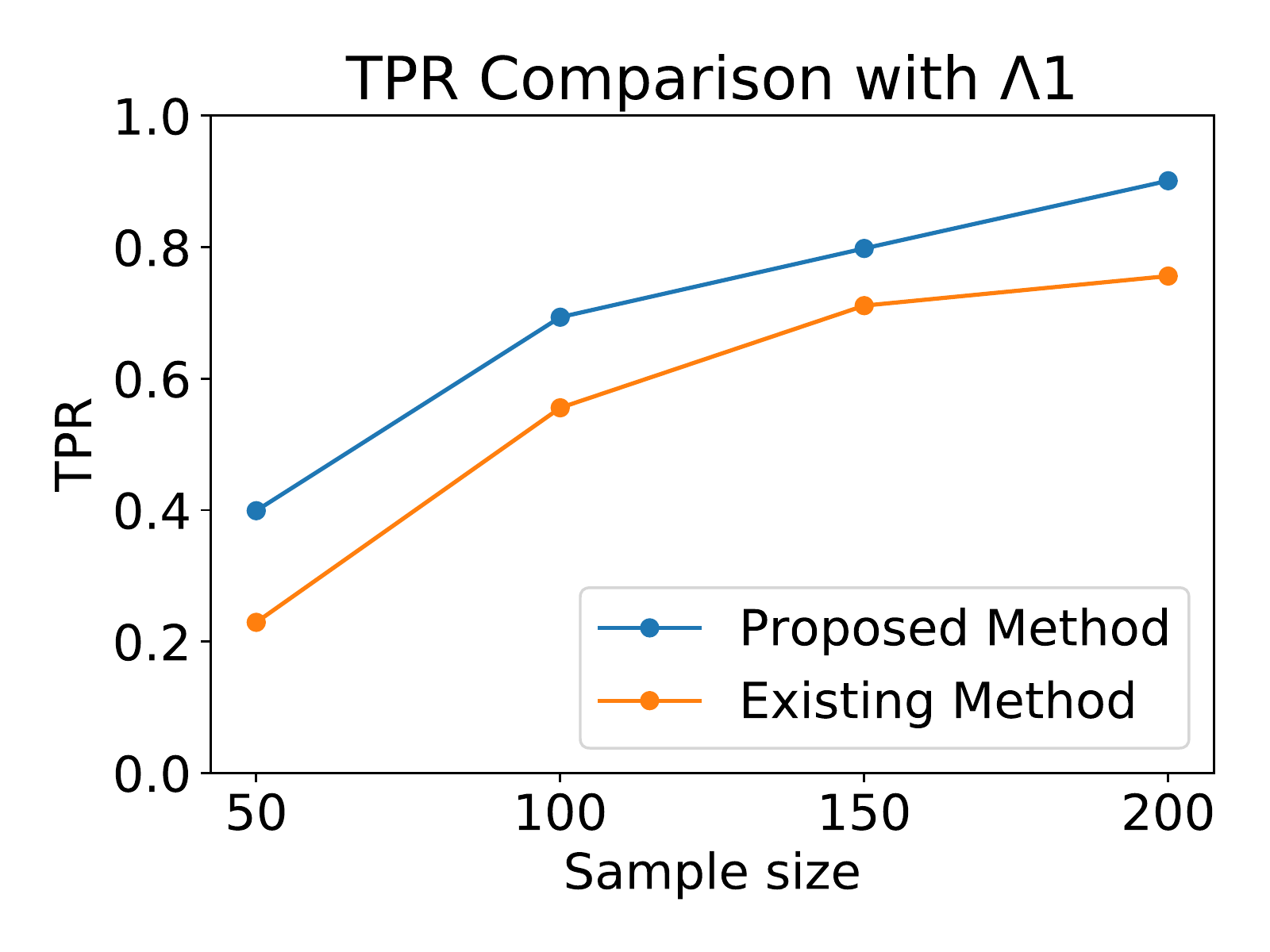}  
  \caption{$\Lambda_1 = \{2^{-1}, 2^0, 2^1\} $}
\end{subfigure}
\hspace{1pt}
\begin{subfigure}{.495\linewidth}
  \centering
  \includegraphics[width=.7\linewidth]{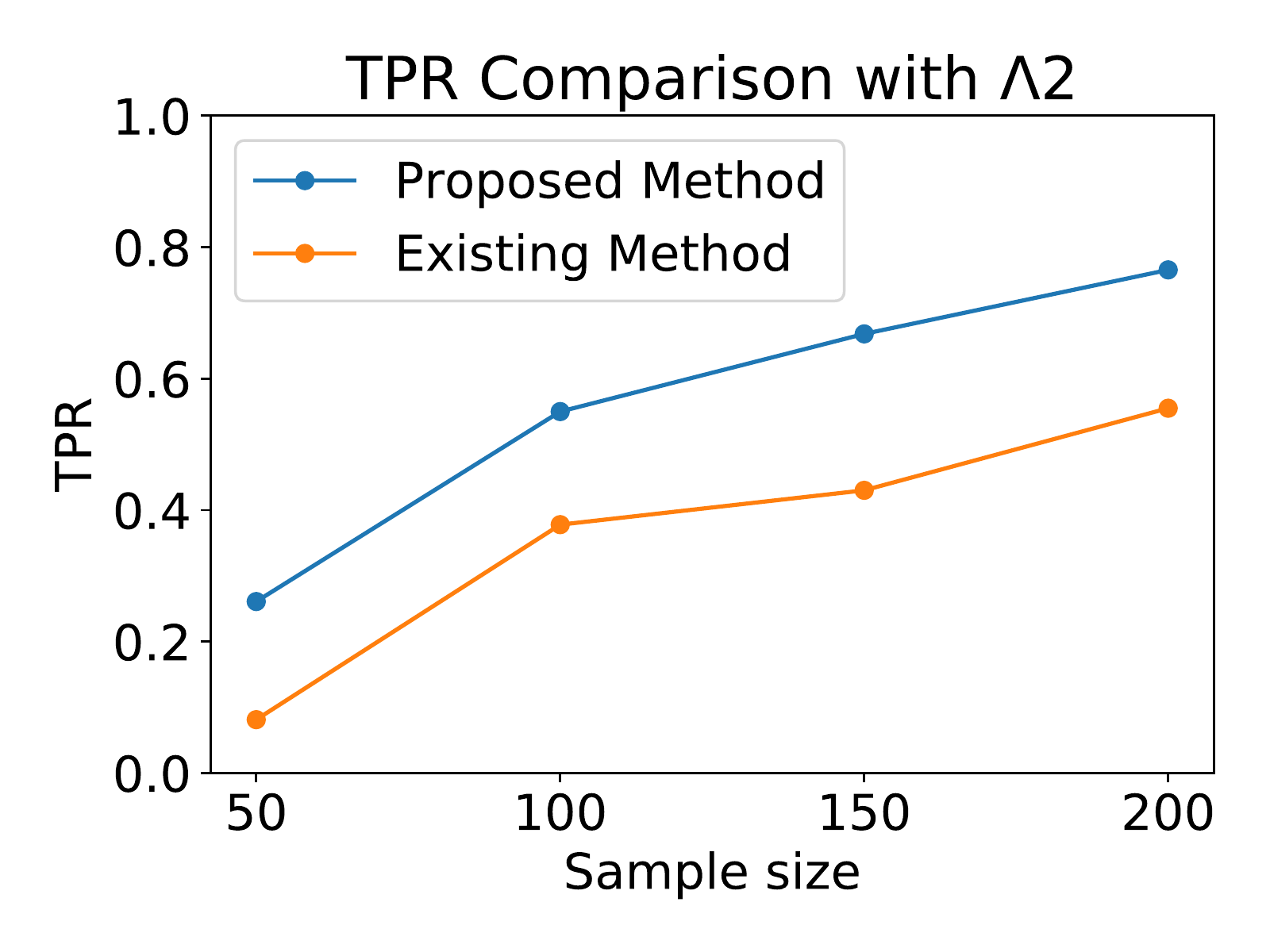}  
  \caption{$\Lambda_2 = \{2^{-10},..., 2^{10}\} $}
\end{subfigure}
\caption{TPR comparison with existing method \citep{loftus2015selective} when accounting for CV selection event.} 
\label{fig:fig_tpr_cv_para_oc}
\end{figure}


\paragraph{Efficiency of proposed method.} 
Our main purpose was to demonstrate that the proposed method has not only high statistical power, but also reasonable computational costs.
We conducted experiments on feature selection by lasso.
The computational time of the proposed method was almost linear with respect to the number of active features, as illustrated in Figure \ref{fig:cc_1}.
Figure \ref{fig:cc_2} presents a boxplot of the actual number of intervals of z that were encountered on the line when constructing the truncation region $\cZ$.
Figure \ref{fig:cc_3} depicts the efficiency of our method compared to that of \citet{lee2016exact}, in which the authors mentioned the \emph{naive} method for removing sign conditioning by enumerating all possible combinations of signs $2^{|\cM_{\rm obs}|}$.

\begin{figure}[!t]
\begin{subfigure}{.32\linewidth}
\centering
\includegraphics[width=\textwidth]{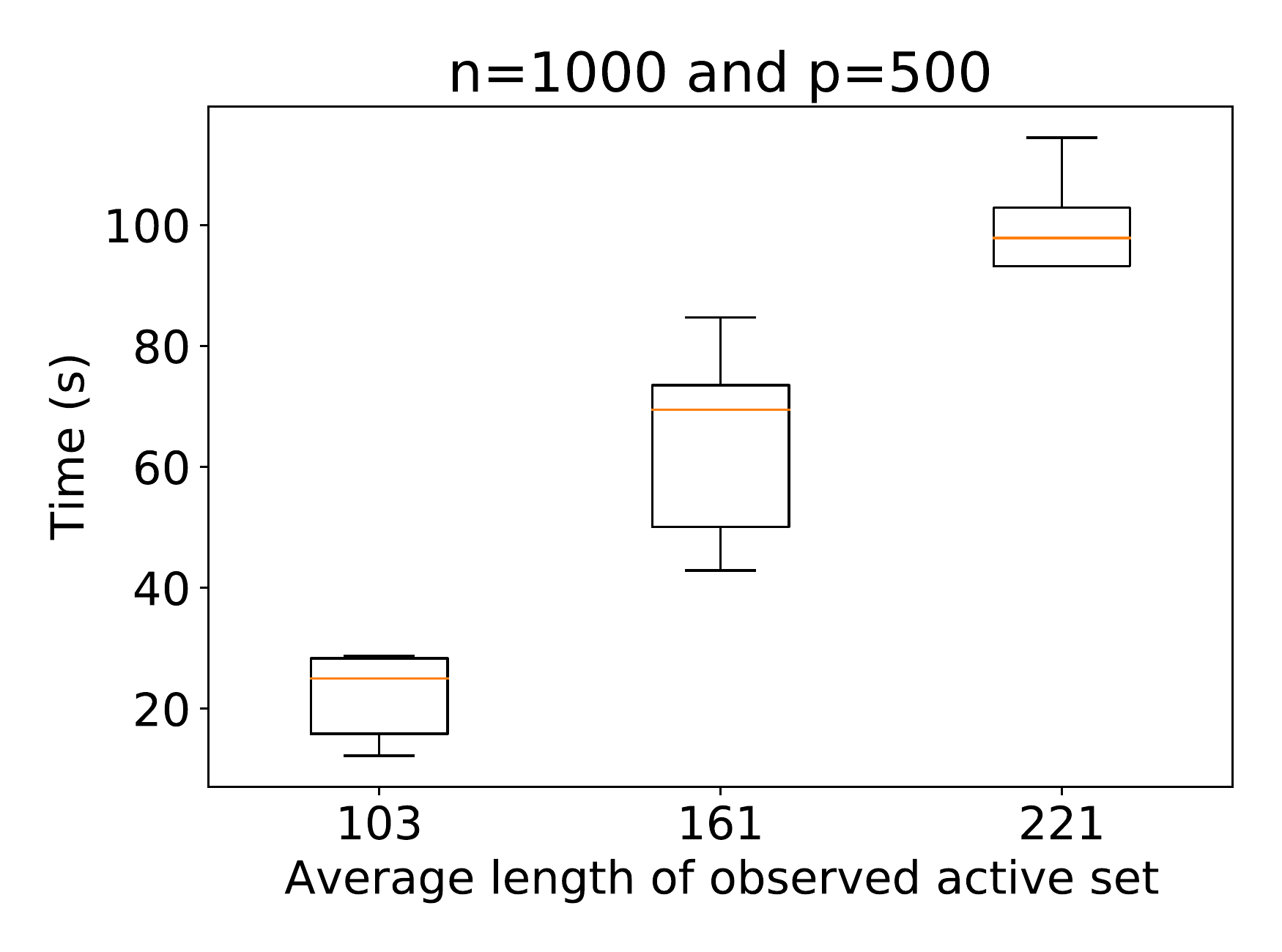}
\caption{Computational time of proposed method.}
\label{fig:cc_1}
\end{subfigure}
\hspace{1pt}
\begin{subfigure}{.32\linewidth}
\centering
\includegraphics[width=\textwidth]{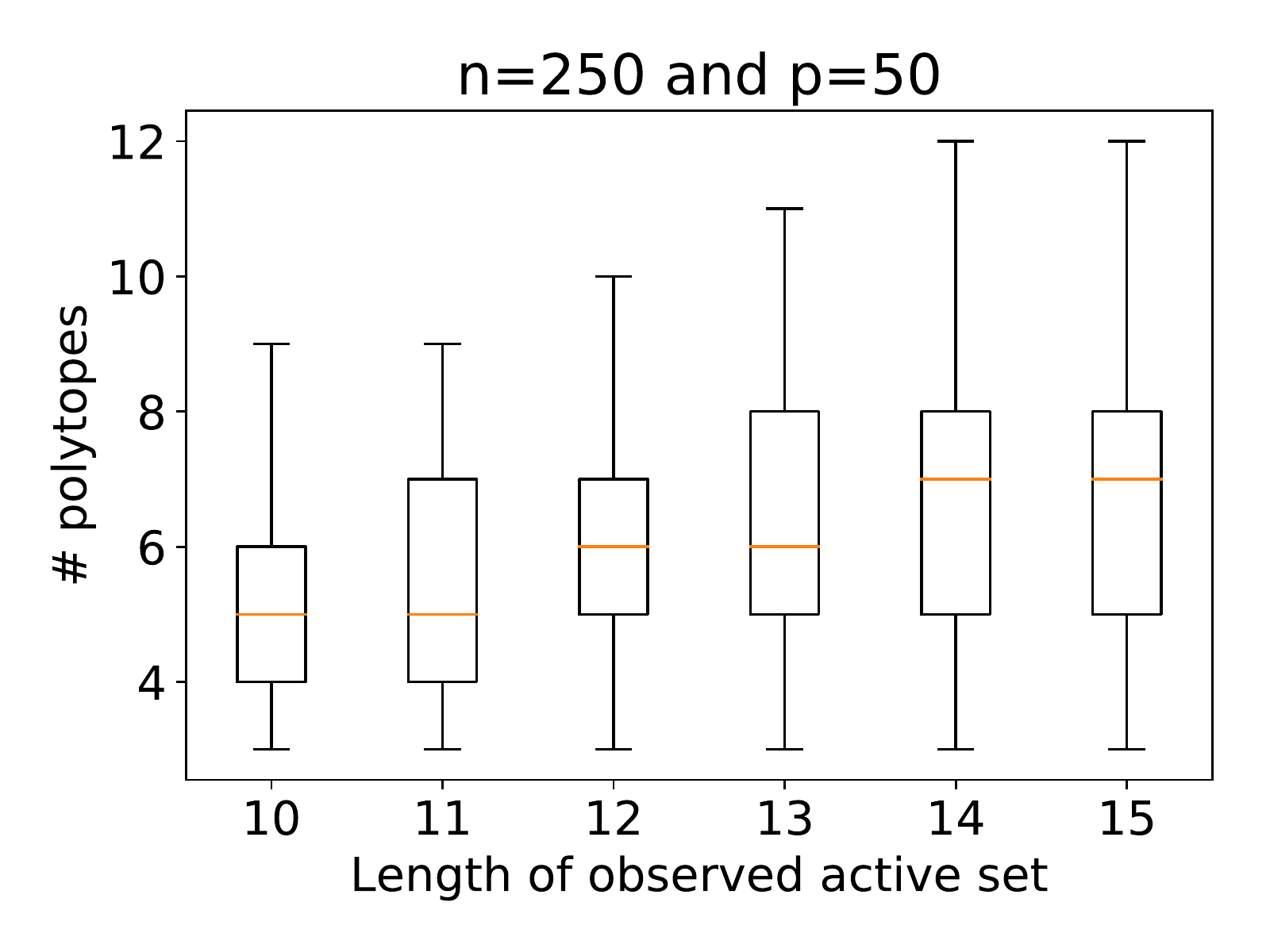}
\caption{Number of encountered intervals on line.}
\label{fig:cc_2}
\end{subfigure}
\hspace{1pt}
\begin{subfigure}{.31\linewidth}
\centering
\includegraphics[width=\textwidth]{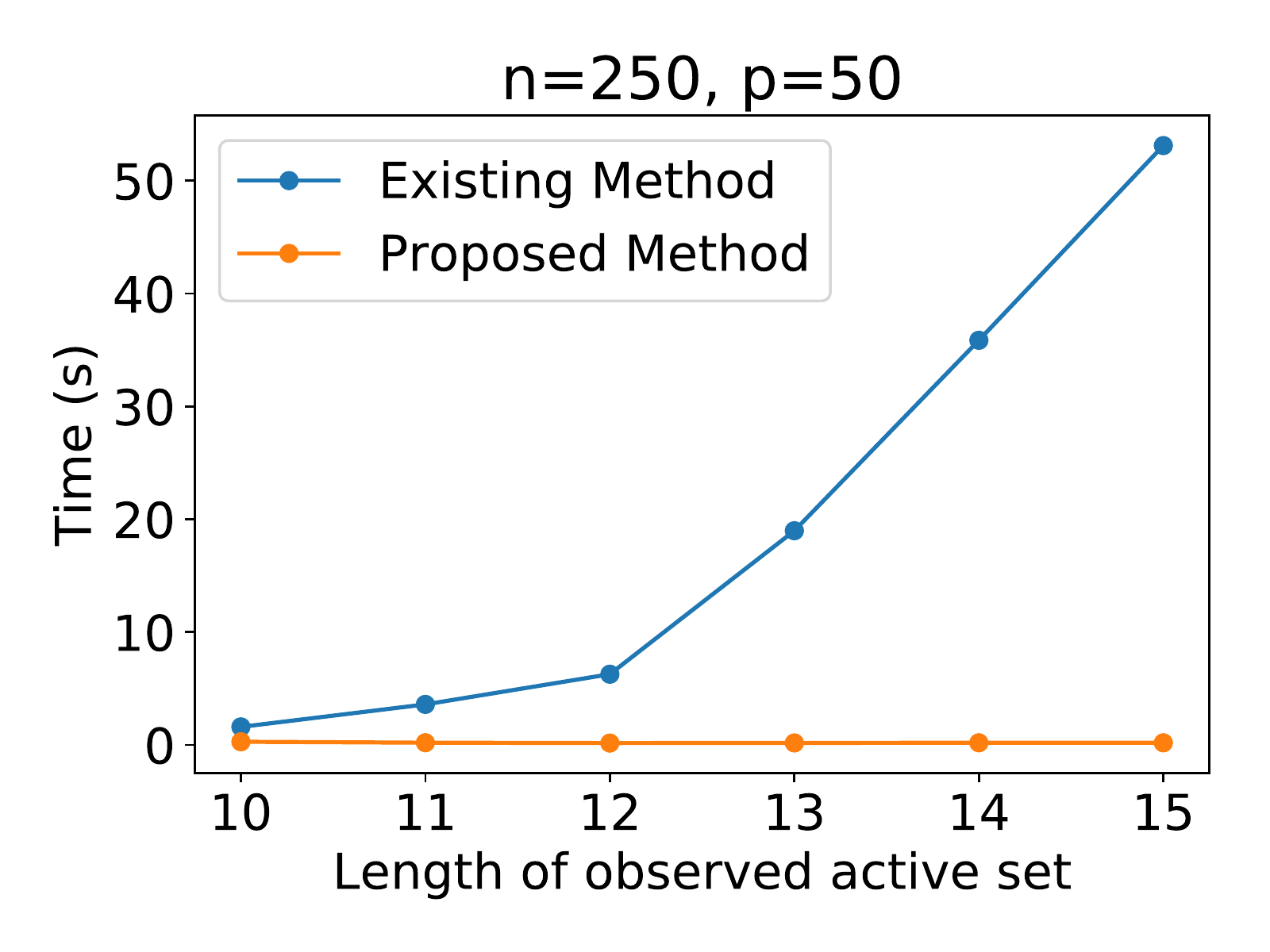}
\caption{Comparison with \citet{lee2016exact}.}
\label{fig:cc_3}
\end{subfigure}
\caption{Efficiency of proposed method.}
\end{figure}

\paragraph{Comparison with \citet{liu2018more}.} 
Furthermore, we demonstrated the efficiency of our method compared to the two methods proposed in \S3 (inference for partial regression targets) of \citet{liu2018more}.
In this work, only stable features were allowed to influence the formation of the test statistic.
Stable features are those with very strong signals and that cannot missed.
In the first method, which we refer to as TN-$\ell_1$, the stable features were selected by setting a higher value of $\lambda$.
In the second method, which we refer to as TN-Custom, the stable features were selected by setting a cutoff value.
The details of these two methods are presented in Appendix \ref{appendix:partial_target}.
In general, to perform SI with these two methods, all possible combinations of signs, which increase exponentially, still need to be naively enumerated.
The proposed method can be used to solve this computational bottleneck.
A comparison of the computational costs is illustrated in Figure \ref{fig:cc_compare_liu}.

\begin{figure}[!t]
\begin{subfigure}{.49\linewidth}
\centering
\includegraphics[width=.8\textwidth]{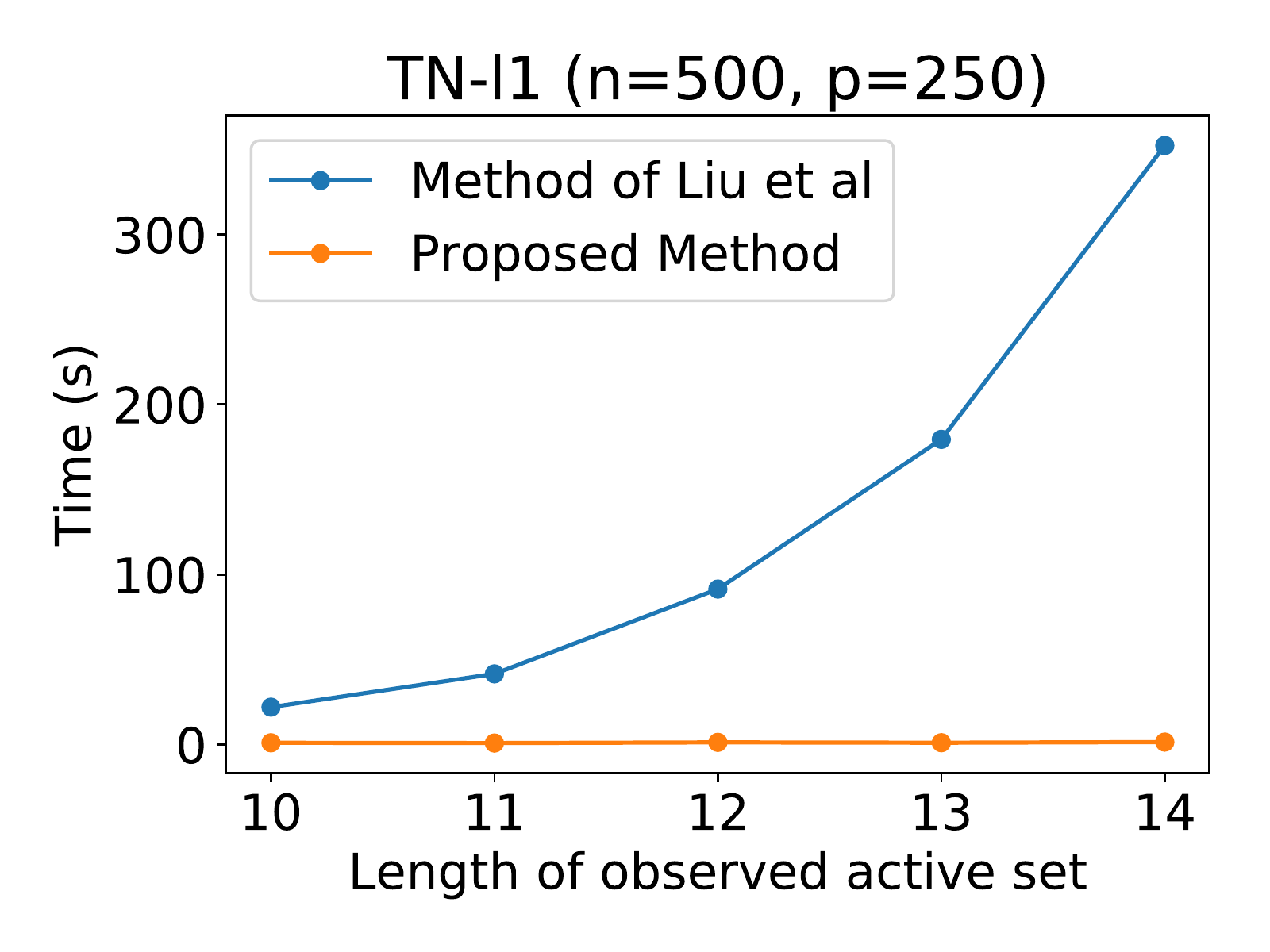}
\end{subfigure}
\begin{subfigure}{.49\linewidth}
\centering
\includegraphics[width=.8\textwidth]{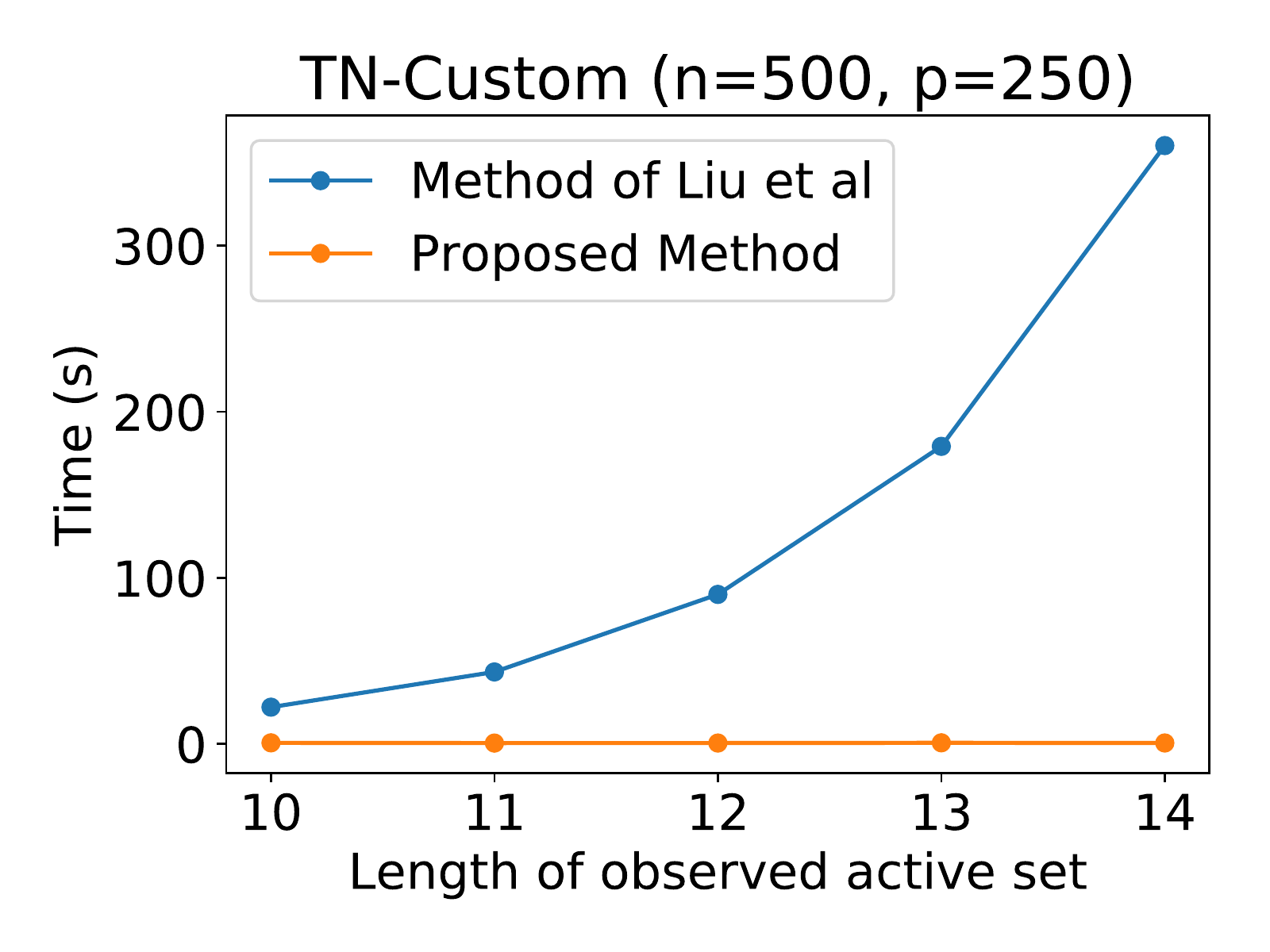}
\end{subfigure}
\caption{Comparison between proposed method and methods in \citet{liu2018more}, in which an exponentially increasing number of all possible sign combinations is still required.}
\label{fig:cc_compare_liu}
\end{figure}


\subsection{Results on Real-World Datasets} \label{subsec:exp_real_data_results}

\begin{figure}[!t]

\begin{subfigure}{.495\textwidth}
  \centering
  \includegraphics[width=.95\linewidth]{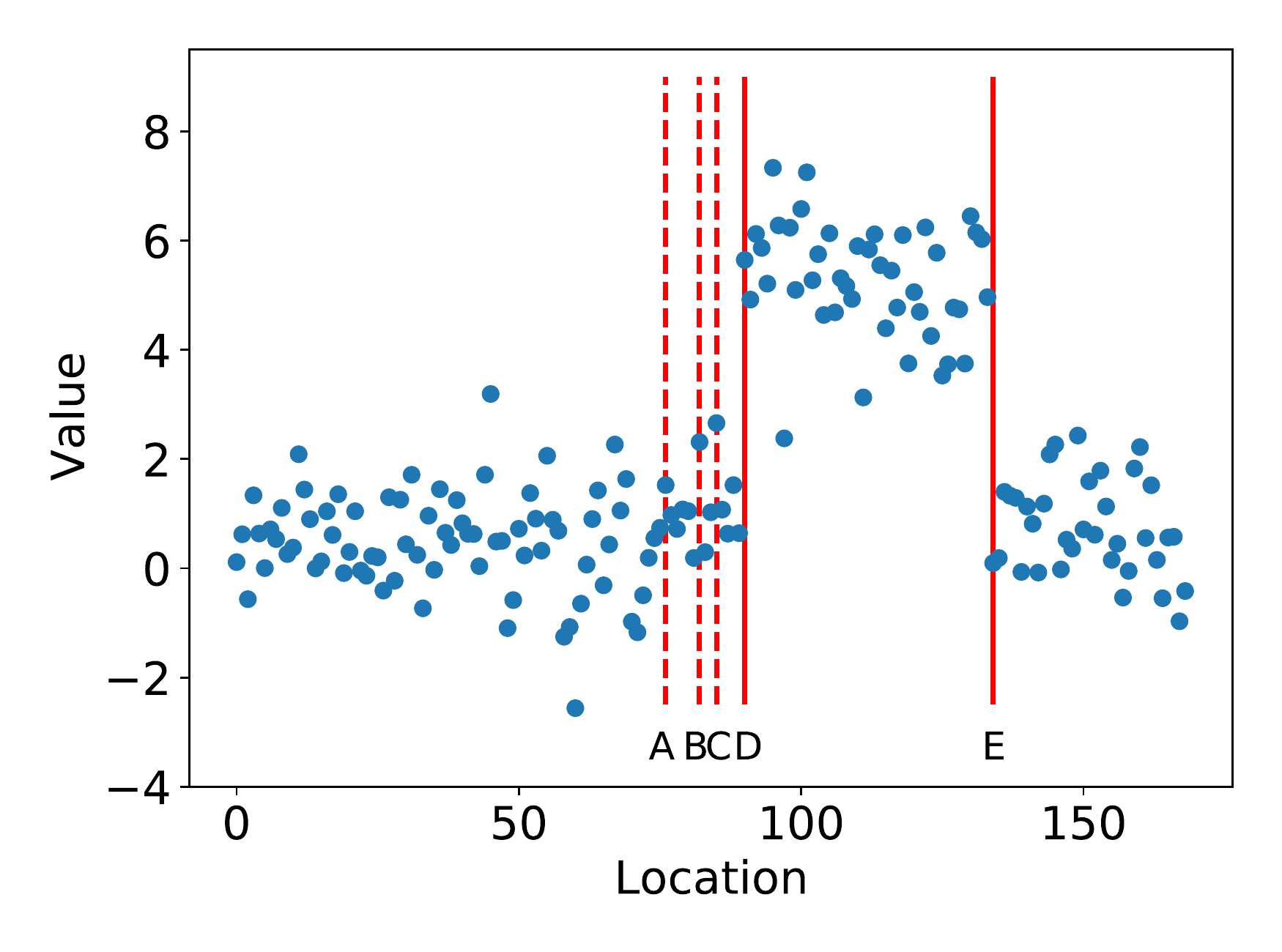}  
  {\footnotesize
  \def\arraystretch{1.2}
  \begin{tabular}{ | c | c | c | c | c | c |}
   \hline
    & A & B & C & D & E\\ 
   \hline
   Proposed & 0.9 & 0.6 & 0.3 & $1.4 \times 10^{-15}$ & 0.0 \\ 
   \hline
   OC & 0.9 & 0.3 & 0.3 & $6.0 \times 10^{-5}$ & $1.2 \times 10^{-11}$ \\  
   \hline
  \end{tabular}
  }
  \caption{Chromosomes 17, 18, and 19.}
\end{subfigure}
\begin{subfigure}{.495\textwidth}
  \centering
  \includegraphics[width=.95\linewidth]{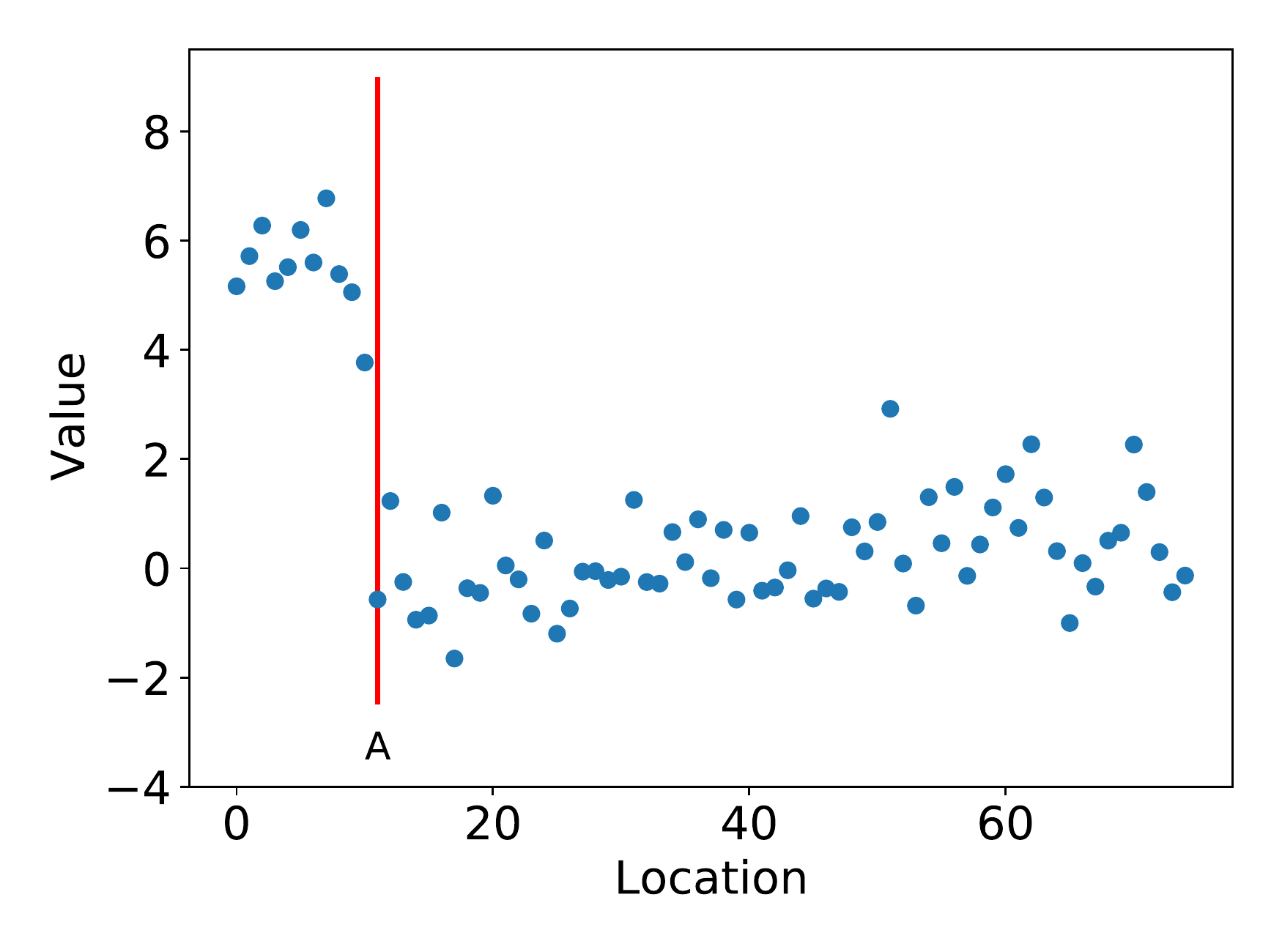}  
  {\footnotesize
  \def\arraystretch{1.2}
  \begin{tabular}{ | c | c |}
   \hline
    & A \\ 
   \hline
   Proposed &  $~~~~~~~~~0.0~~~~~~~~~$ \\ 
   \hline
   OC & 0.0  \\  
   \hline
  \end{tabular}
  }
  \caption{Chromosome 14.}
\end{subfigure}
\caption{Experimental results for cell lines GM00143 and GM01750.}
\label{fig:exp_cgh_GM00143_GM01750}

\vspace*{\floatsep}%

\begin{subfigure}{.495\textwidth}
  \centering
  \includegraphics[width=.95\linewidth]{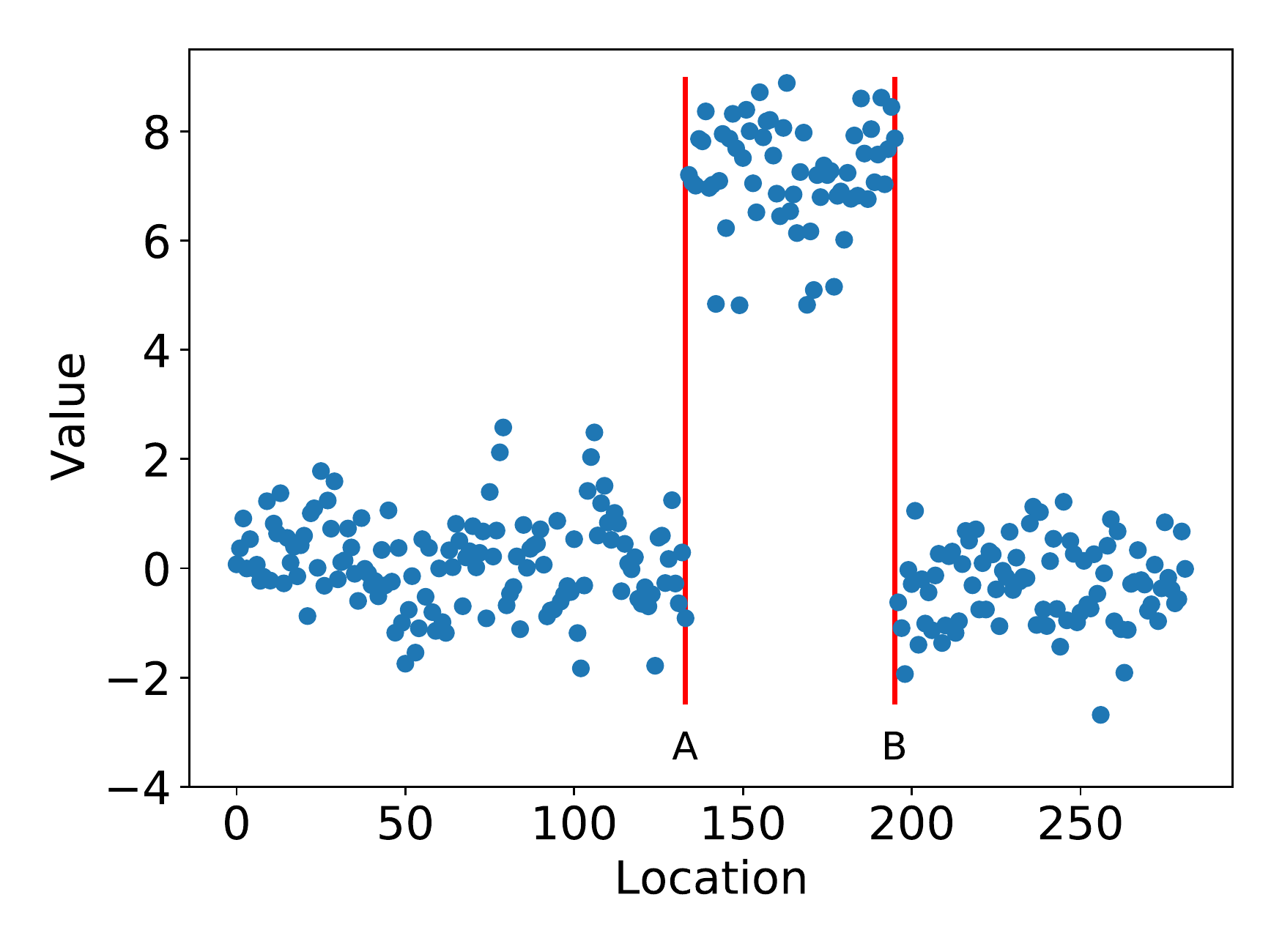}  
  {\footnotesize
  \def\arraystretch{1.2}
  \begin{tabular}{ | c | c | c |}
   \hline
    & A & B\\ 
   \hline
   Proposed & $~~ 2.7 \times 10^{-320} ~~$ & $~~~~~0.0~~~~~$  \\ 
   \hline
   OC & $5.3 \times 10^{-98}$ & 0.0 \\  
   \hline
  \end{tabular}
  }
  \caption{Chromosomes 1, 2, and 3.}
\end{subfigure}
\begin{subfigure}{.495\textwidth}
  \centering
  \includegraphics[width=.95\linewidth]{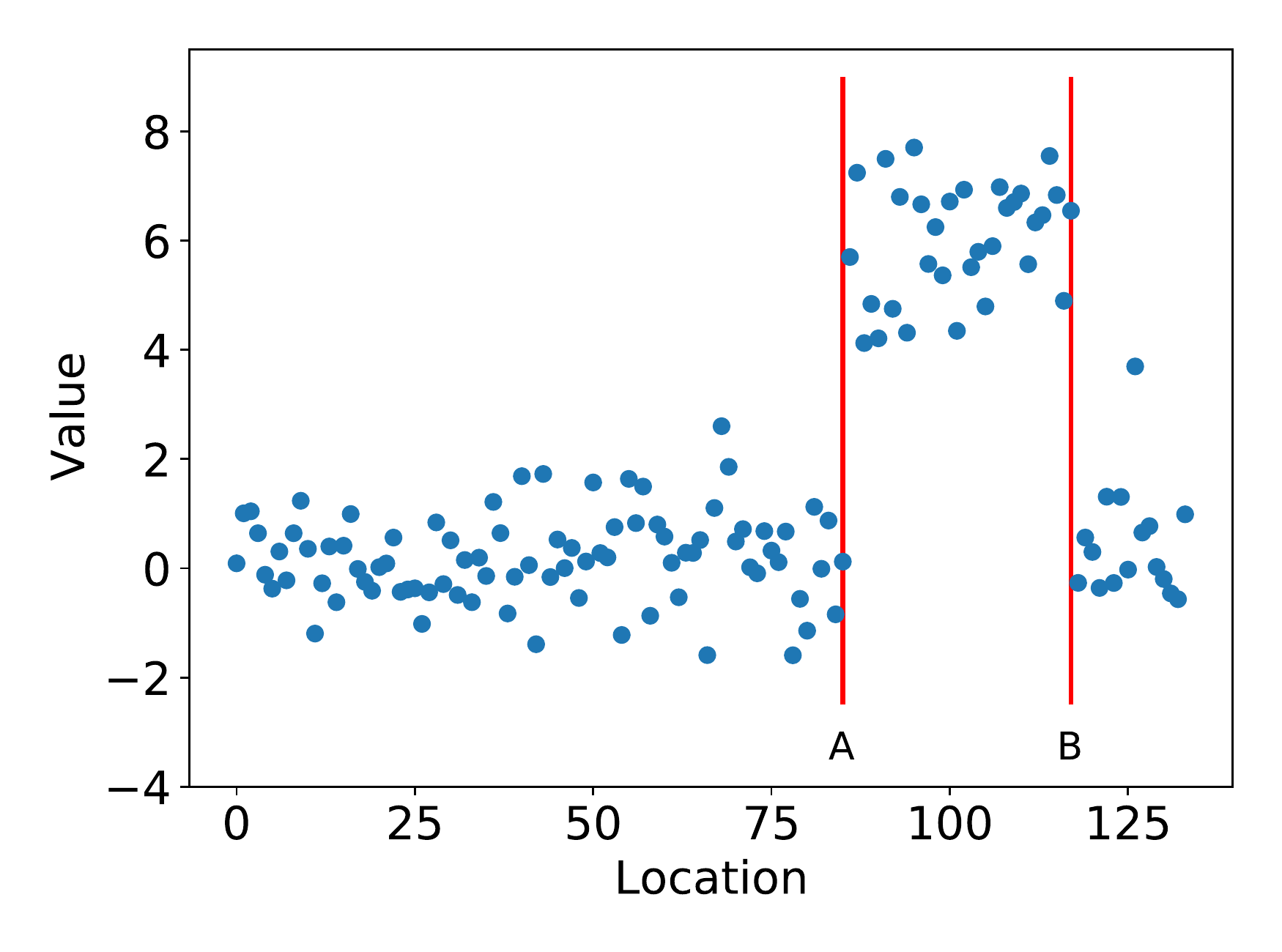}  
  {\footnotesize
  \def\arraystretch{1.2}
  \begin{tabular}{ | c | c | c |}
   \hline
    & A & B\\ 
   \hline
   Proposed & $~~ 1.1 \times 10^{-41}~~$ & $~~ 9.1 \times 10^{-14}~~$ \\ 
   \hline
   OC & $1.1 \times 10^{-41}$ & $9.1 \times 10^{-14}$ \\  
   \hline
  \end{tabular}
  }
  \caption{Chromosomes 20, 21, and 22.}
\end{subfigure}
\caption{ Experimental results for cell line GM03576.}
\label{fig:exp_cgh_GM03576}

\end{figure}

\paragraph{Array CGH data.} Array CGH analyses enable the detection of changes in copy numbers across the genome. 
We applied the proposed method and OC to the dataset with the ground truth provided in \citet{snijders2001assembly}. 
The results of the detected CPs and tables of $p$-values are presented in Figures \ref{fig:exp_cgh_GM00143_GM01750} and \ref{fig:exp_cgh_GM03576}. The solid red line denotes the significant CPs, which had a $p$-value that was smaller than the significance level following Bonferroni correction. All of the results were consistent with those of \citet{snijders2001assembly}.
Moreover, we compared the $p$-values of the proposed method and OC.
The $p$-values of the proposed method were smaller than or equal to those of OC for all true CPs, which indicates that the proposed method had higher power than OC.

The boxplots of the distribution of the $p$-values for the proposed method and OC on the real-world dataset are illustrated in Figure \ref{fig:box_plot_p_value}. 
We used the \emph{jointseg} package \citep{pierre2014performance} to generate realistic DNA copy number profiles of cancer samples with ``known'' truths.
Two datasets consisting of 1,000 profiles, each with a length of $n=60$, were created, as follows: 
\begin{itemize}
	\item  Dataset 1: Resampled from GSE11976 with tumor fraction $=$ 1.
	\item  Dataset 2: Resampled from GSE29172 with tumor fraction $=$ 1.
\end{itemize}

\begin{figure}[t]
\begin{subfigure}{.49\textwidth}
  \centering
  \includegraphics[width=.95\linewidth]{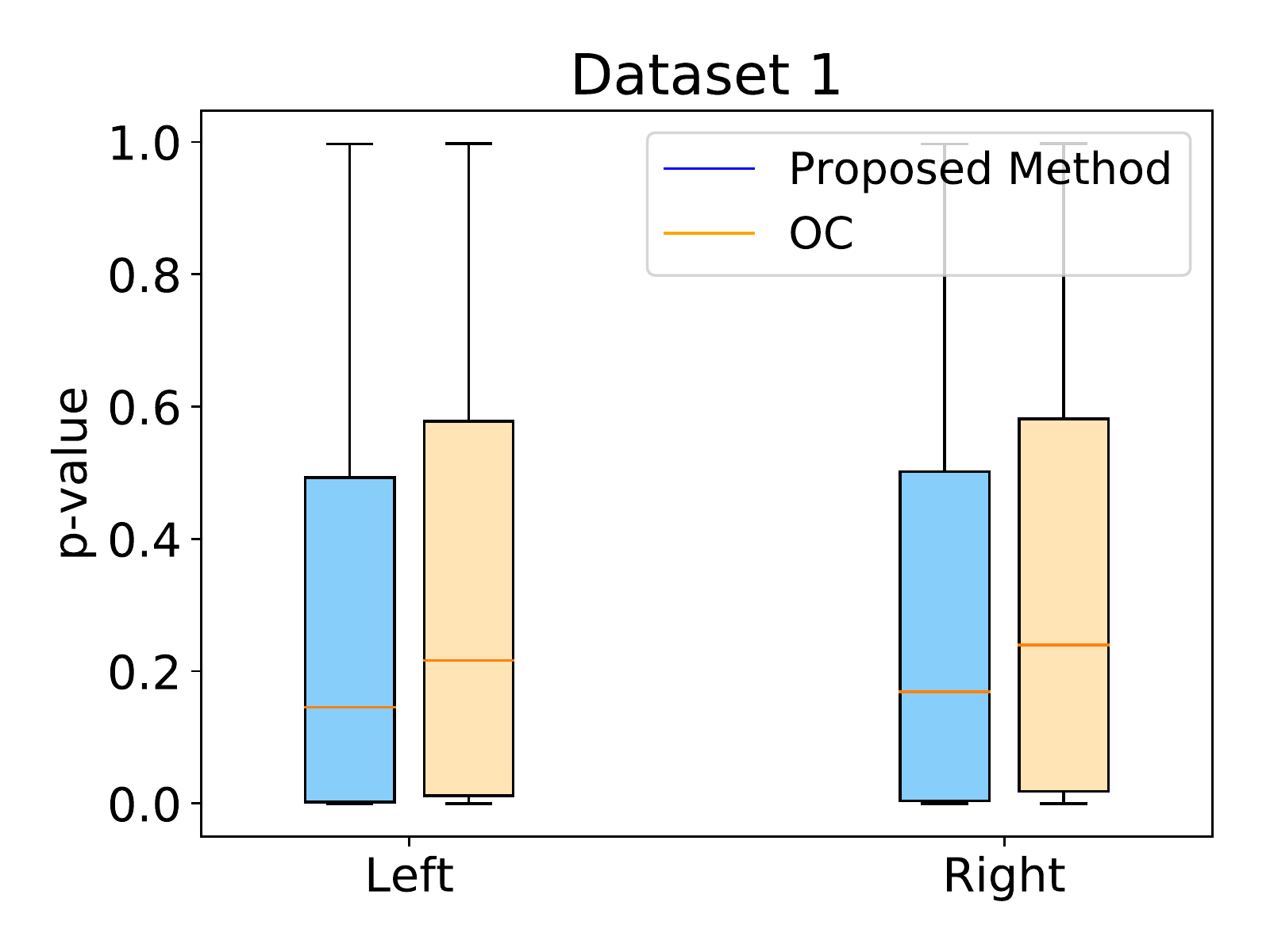}  
\end{subfigure}
\hspace{1pt}
\begin{subfigure}{.49\textwidth}
  \centering
  \includegraphics[width=.95\linewidth]{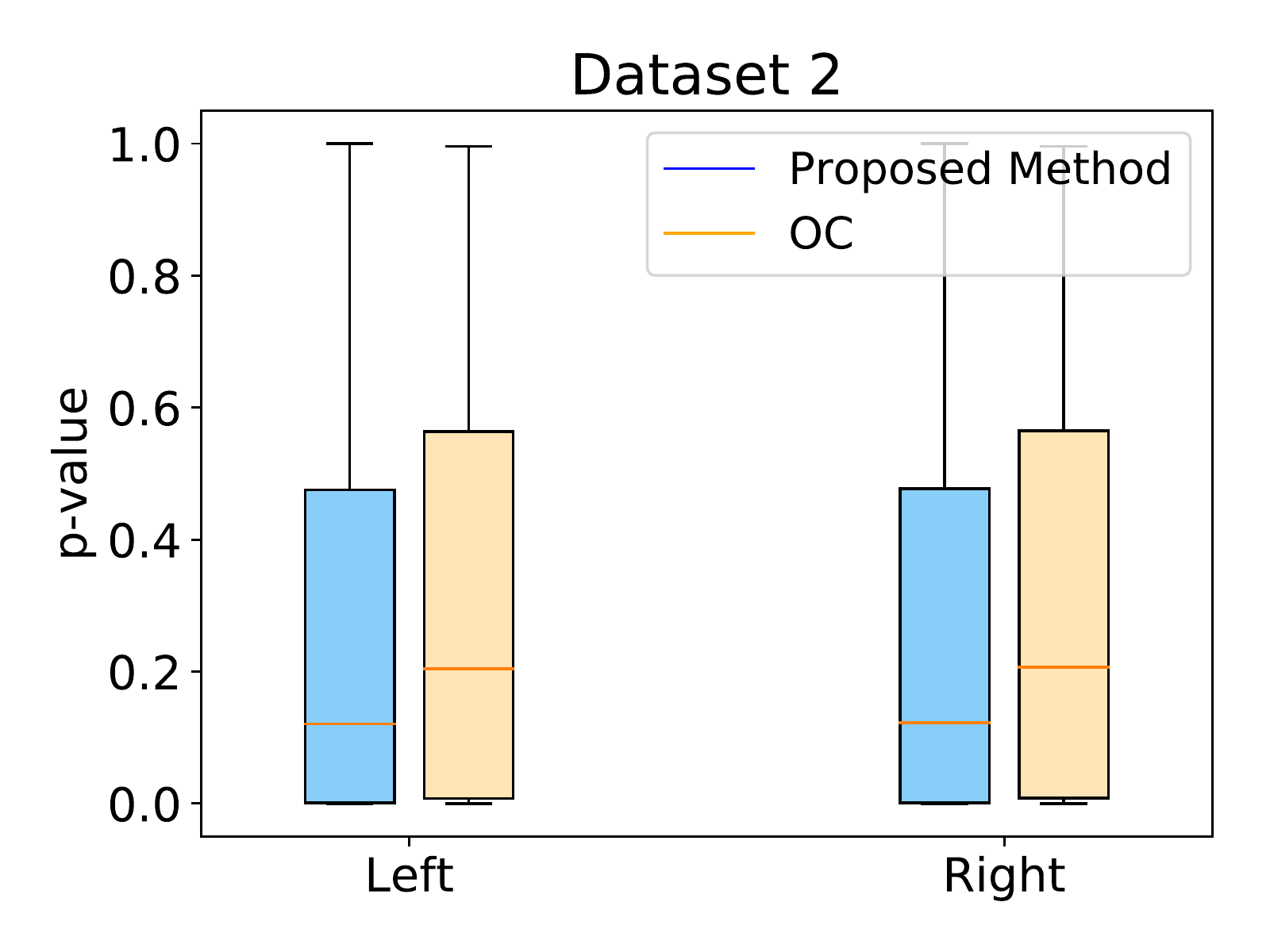}  
\end{subfigure}
\caption{Boxplots of $p$-values. 
The left plot in each figure depicts the distributions of the $p$-values, whereas
the right plot displays the distributions of the $p$-values for the cases in which the two $p$-values of the proposed method and OC differed.
In Dataset 1, the percentage of the $p$-value of the proposed method was 55.81\% smaller than that of OC.
In Dataset 2, the percentage of the $p$-value of the proposed method was 54.24\% smaller than that of OC.
In general, the $p$-value of the proposed method tended to be smaller than that of OC, which indicates that the proposed method had higher statistical power than OC. 
}
\label{fig:box_plot_p_value}
\end{figure}

\paragraph{Nile data.} 
These data contain the annual flow volume of the Nile River at Aswan from 1871 to 1970 (100 years).
In this case, the interest lies in unexpected events such as natural disasters.
According to Figure \ref{fig:exp_nile_data}, the proposed algorithm identified a CP at the $28^{\rm th}$ position, corresponding to the year 1899.
This result was consistent with that of \citet{jung2017bayesian}.

\begin{figure}[t]
\centering
\includegraphics[width=0.65\linewidth]{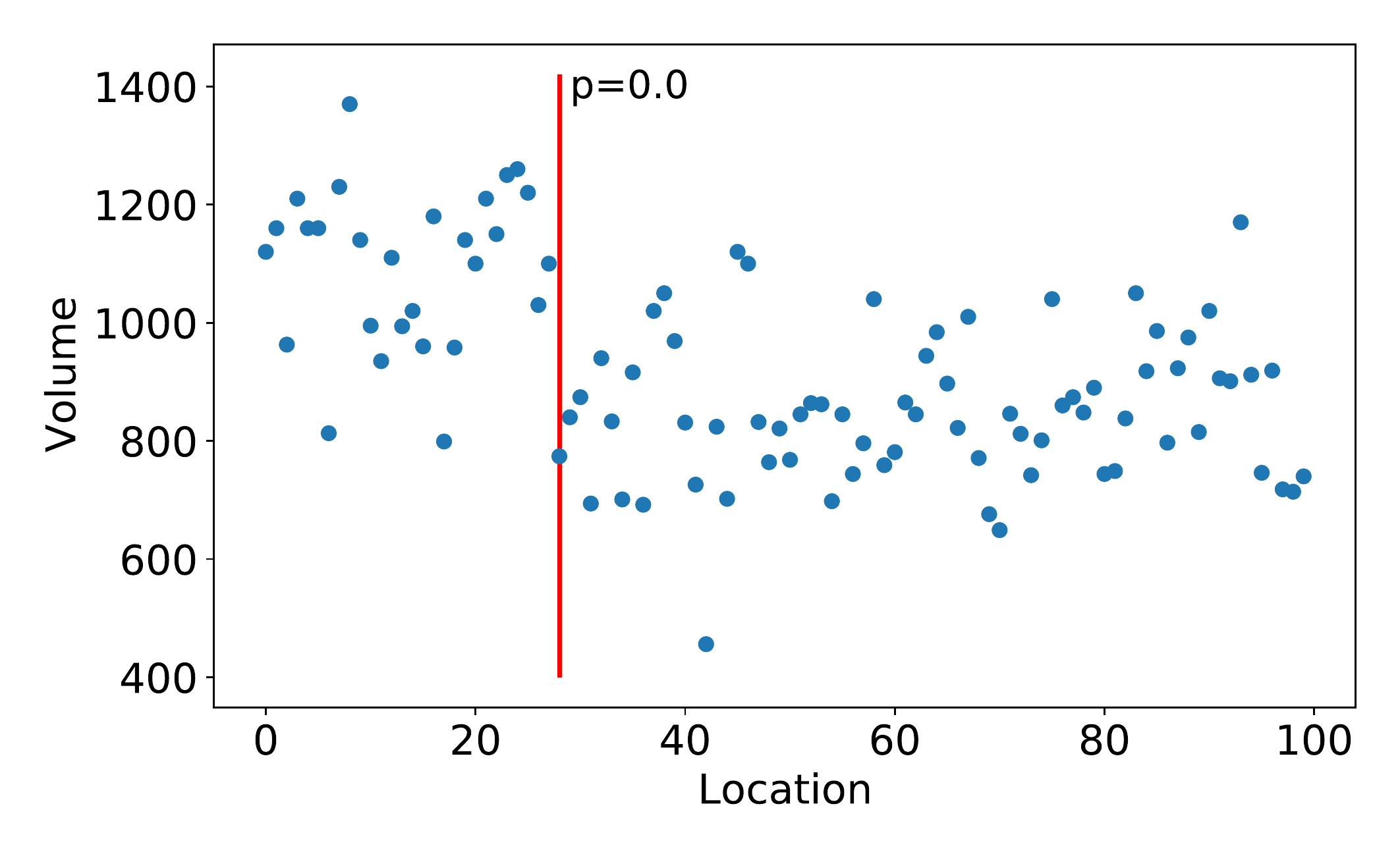}
\caption{Experimental result for Nile data. A CP was detected at the $28^{\rm th}$ position, which indicates that a change in the volume level occurred in 1899.}
\label{fig:exp_nile_data}
\end{figure}

\paragraph{Prostate data.}
We applied our proposed method for lasso to the prostate dataset from \citet{hastie2009elements}.
As $p < n$ for this dataset, we could estimate $\sigma^2$ using the residual sum of squares from the full regression model with all $p$ predictors. 
We set $\lambda = 5$.
Figure \ref{fig:exp_prostate_data} depicts the 95\% CIs for the features that were selected by both lasso and DS.

\begin{figure}[!t]
\centering
\includegraphics[width=0.55\linewidth]{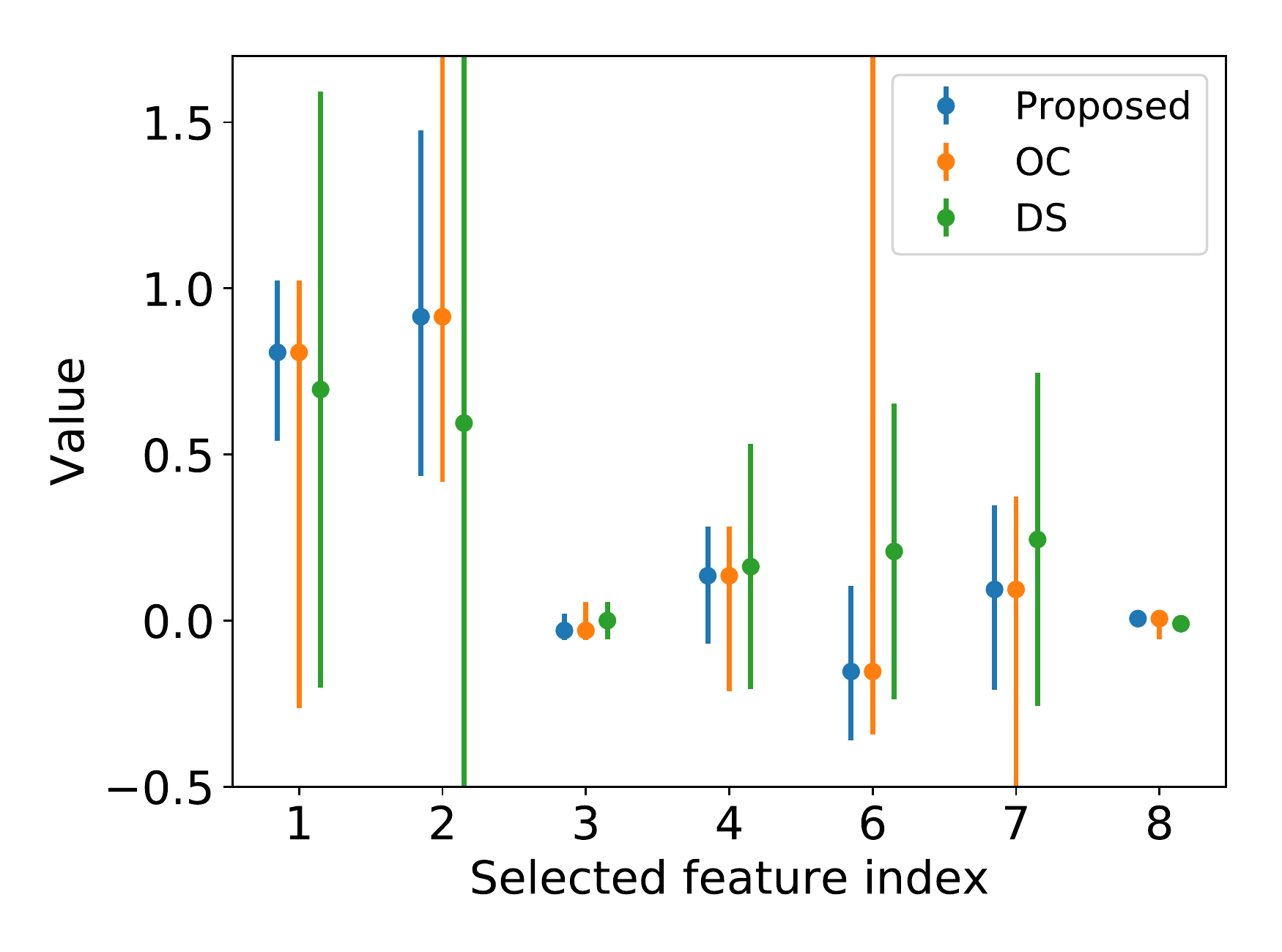}
\caption{Experimental results for prostate data. The indices of the features were 1: lcavol, 2: lweight, 3: age, 4: lbph, 6: lcp, 7: gleason, and 8: pgg45.}
\label{fig:exp_prostate_data}
\end{figure}


\subsection*{Acknowledgements}

This work was partially supported by MEXT KAKENHI (20H00601, 16H06538), JST Moonshot R\&D (JPMJMS2033-05), NEDO (JPNP18002, JPNP20006), RIKEN Center for Advanced Intelligence Project, and RIKEN Junior Research Associate Program.


\bibliographystyle{abbrvnat}
\bibliography{ref}


\appendix

\section{Full Target Case for Lasso in \citet{liu2018more}} \label{appendix:full_target}
In the full target case, as discussed in \cite{liu2018more}, the data are used to select the interesting features, but they are \emph{not} used to summarize the relation between the response and the selected features.
Therefore, \emph{all} features can be used to define the direction of interest. 
\[
	\bm \eta_j = X(X^\top X)^{-1} \bm e_j,
\]
where $\bm e_j \in \RR^p$ is a zero vector with 1 at the $j^{\rm th}$ coordinate.
The conditional inference is defined as
\begin{align} \label{eq:full_model_conditional_inference}
	\bm \eta_j^\top \bm {\bm Y} \mid \left \{ j \in \cA(\bm Y) , \bm q(\bm Y) = \bm q({\bm y}^{\rm obs}) \right \}.
\end{align}
In \cite{liu2018more}, the authors proposed a solution for conducting conditional inference for a specific case when $p < n$, and there was no solution for the case when $p > n$.
This problem can be solved with the proposed PP method.
First, we rewrite the conditional inference in (\ref{eq:full_model_conditional_inference}) as the problem of characterizing the sampling distribution of 
\begin{align} \label{eq:full_model_conditional_inference_parametric}
	Z \mid \{Z \in \cZ\} \text{ where } \cZ = \{z \in \RR \mid j \in \cA(\bm y(z))\}.
\end{align}
$\bm y(z)$ in (\ref{eq:full_model_conditional_inference_parametric}) is defined as in (\ref{eq:parametrized_data_space}).
Thereafter, to identify $\cZ$, only the path of the lasso solution $\hat{\bm{\beta}}(z)$ needs to be obtained, as proposed in \S3, and the intervals in which $j$ is an element of the active set corresponding to $\hat{\bm{\beta}}(z)$ simply need to be verified along the path.
Finally, after obtaining $\cZ$, we can easily compute the selective $p$-value or selective CI.

\section{Stable Partial Target Case for Lasso in \citet{liu2018more}}\label{appendix:partial_target}
In the stable partial target case, as discussed in \cite{liu2018more}, only stable features are allowed to influence the formation of the test statistic.
Stable features are those with very strong signals that we do not wish to omit.
We select a set $\cH_{\rm obs}$ of stable features.
Subsequently, for any $j \in \cH_{\rm obs}, j \in \cM_{\rm obs}$,
\[
	\bm \eta_j = X_{\cH_{\rm obs}} (X_{\cH_{\rm obs}}^\top  X_{\cH_{\rm obs}})^{-1} \bm e_j.
\]
For any $j \not \in \cH_{\rm obs}, j \in \cM_{\rm obs}$,
\[
	\bm \eta_j = X_{\cH_{\rm obs} \cup \{j\}} (X_{\cH_{\rm obs} \cup \{j\}}^\top  X_{\cH_{\rm obs} \cup \{j\}})^{-1} \bm e_j.
\]
Next, we demonstrate how to construct $\cH_{\rm obs}$ according to \cite{liu2018more}.


\paragraph{Stable target formation by setting higher value of $\lambda$ (TN-$\ell_1$).}
In this case, $\cH_{\rm obs}$ is the lasso active set, but with a higher value of $\lambda$ than that used to select $\cM_{\rm obs}$.
We denote $\cH_{\rm obs} =  \cH({\bm y}^{\rm obs})$, and subsequently, the conditional inference is defined as
\begin{align} \label{eq:stable_partial_model_l1_conditional_inference}
	\bm \eta_j^\top \bm {\bm Y} \mid \left \{ j \in \cA(\bm Y) ,  \cH (\bm Y) = \cH({\bm y}^{\rm obs}), \bm q(\bm Y) = \bm q({\bm y}^{\rm obs}) \right \}.
\end{align}
The main drawback of the method in \cite{liu2018more} is that all $2^{|\cH_{\rm obs}|}$ sign vectors must be considered, which requires substantial computation time when $|\cH_{\rm obs}|$ is large.
This limitation can easily be overcome using our piecewise linear homotopy computation.
First, we rewrite the conditional inference in (\ref{eq:stable_partial_model_l1_conditional_inference}) as the problem of characterizing the sampling distribution of 
\begin{align*} 
	Z \mid \{Z \in \cZ\} \text{ where } \cZ = \{z \in \RR \mid j \in \cA(\bm y(z)), \cH(\bm y(z)) = \cH({\bm y}^{\rm obs})\}.
\end{align*}
Thereafter, we can easily identify $\cZ = \cZ_1 \cap \cZ_2$, where 
$\cZ_1 = \{z \in \RR \mid j \in \cA(\bm y(z))\}$  
and 
$\cZ_2 = \{z \in \RR \mid \cH(\bm y(z)) = \cH({\bm y}^{\rm obs})\}$, 
which we can simply obtain using the method proposed in \S3 of the main paper.


\paragraph{Stable target formation by setting cutoff value $c$ (TN-Custom).}
In this case, we determine $\cH_{\rm obs}$ by setting a cutoff value $c$ to select $\beta_j$, such that $|\beta_j| \geq c$  
\footnote{Our formulation is slightly different from but more general than that in \cite{liu2018more}.}. 
The set $\cH_{\rm obs}$ is defined as 
\begin{align*}
	\cH_{\rm obs} = \left \{ j \in \cM_{\rm obs}, |\beta_j|  \geq c \right\},
\end{align*}
where $ \beta_j =  \bm e^\top_j (X_{\cM_{\rm obs}}^\top  X_{\cM_{\rm obs}})^{-1} X_{\cM_{\rm obs}}^\top  \bm y^{\rm obs} $.
We denote $\cH_{\rm obs} = \cH(\cM_{\rm obs}) \subset \cM_{\rm obs}$, and subsequently, the conditional inference is formulated as 
\begin{align}\label{eq:stable_partial_model_custom_conditional_inference}
	\bm \eta_j^\top \bm Y \mid \left \{\cH (\cA(\bm Y)) = \cH (\cM_{\rm obs}),  \cA(\bm Y) = \cM_{\rm obs} \right \}.
\end{align}
The main drawback of the method in \cite{liu2018more} is that it still requires conditioning on $\{\cA(\bm Y) = \cM_{\rm obs}\}$, which is computationally intractable when $|\cM_{\rm obs}|$ is large, because the enumeration of $2^{|\cM_{\rm obs}|}$ sign vectors is required.
This limitation can easily be overcome using our proposed method.

\clearpage

\end{document}